\documentclass{article}
\usepackage[accepted]{icml2021}

\icmltitlerunning{Latent Optimal Transport}

\usepackage{xr-hyper}
\usepackage{hyperref}       % hyperlinks
\usepackage[T1]{fontenc}    % use 8-bit T1 fonts
\usepackage{url}            % simple URL typesetting
\usepackage{booktabs}       % professional-quality tables
\usepackage{amsfonts}       % blackboard math symbols
\usepackage{nicefrac}       % compact symbols for 1/2, etc.
\usepackage{microtype}      % microtypography
\usepackage{mathtools}
\usepackage[utf8]{inputenc}
\usepackage[english]{babel}     

%\urlstyle{same}
%\hypersetup{
%  colorlinks=true,
%   linkcolor=blue,
  %filecolor=magenta,      
%   urlcolor=black,
%}

\usepackage{adjustbox}
     
\usepackage{microtype}
\usepackage{graphicx}

\newcommand{\alg}{\texttt{LOT}~}
\newcommand{\kot}{\texttt{KOT}~}
\newcommand{\ot}{OT~}
\newcommand{\fc}{\texttt{FC}~}
\newcommand{\lotwa}{\alg\hspace{-1mm}\texttt{-WA}~}
\newcommand{\lot}{\alg\hspace{-1mm}-\texttt{L2}~}

\usepackage{color}
%\definecolor{ggreen}{rgb}{0,0.79,0}
%\definecolor{orange}{rgb}{0.79,0.375,0.08}

\usepackage{amsmath}
\usepackage{amssymb}
\usepackage{amsthm}
\usepackage{wrapfig}
\usepackage{comment}
\usepackage{graphicx}
\usepackage{bbm}
\usepackage{url}
\usepackage{xr}

\usepackage{caption}
\usepackage{subcaption}
\usepackage{mathtools}
\usepackage{hyperref}
\usepackage{duckuments}

\usepackage{mathtools}

\usepackage{graphicx}

\def\argmin{\mathop{\textnormal{argmin}}}
\def\argmax{\mathop{\textnormal{argmax}}}

%#########################################self-defined command####

\newcommand{\x}{\mathbf{x}}
\newcommand{\y}{\mathbf{y}}
\newcommand{\z}{\mathbf{z}}
%%repeat theorem
\makeatletter
\newtheorem*{rep@theorem}{\rep@title}
\newcommand{\newreptheorem}[2]{%
\newenvironment{rep#1}[1]{%
 \def\rep@title{#2 \ref{##1}}%
 \begin{rep@theorem}}%
 {\end{rep@theorem}}}
\makeatother

\newtheorem{proposition}{Proposition}
\newtheorem{lemma}{Lemma}
\newtheorem{corollary}{Corollary}

\newreptheorem{theorem}{Theorem}
\newtheorem{definition}{Definition}

\newtheorem{remark}{Remark}
\newreptheorem{lemma}{Lemma}
\newreptheorem{proposition}{Proposition}
%\externaldocument{appendix}
\usepackage{amsmath}
\usepackage{soul} % cross out sentence
\definecolor{mydarkblue}{rgb}{0,0.08,0.45}
  \hypersetup{ %
    pdftitle={},
    pdfauthor={},
    pdfsubject={Proceedings of the International Conference on Machine Learning 2021},
    pdfkeywords={},
    pdfborder=0 0 0,
    pdfpagemode=UseNone,
    colorlinks=true,
    linkcolor=mydarkblue,
    citecolor=mydarkblue,
    filecolor=mydarkblue,
    urlcolor=black,
    pdfview=FitH}
%%%%%%%%%%%%%%%%%%%%%%%%%%%%%%%%%%%%%%%%%%%

%\includeonly{supplementary}
%\includeonly{main}

\begin{document}  

\twocolumn[
\icmltitle{Making transport more robust and interpretable by moving data\\ through a small number of anchor points}

% It is OKAY to include author information, even for blind
% submissions: the style file will automatically remove it for you
% unless you've provided the [accepted] option to the icml2021
% package.

% List of affiliations: The first argument should be a (short)
% identifier you will use later to specify author affiliations
% Academic affiliations should list Department, University, City, Region, Country
% Industry affiliations should list Company, City, Region, Country

% You can specify symbols, otherwise they are numbered in order.
% Ideally, you should not use this facility. Affiliations will be numbered
% in order of appearance and this is the preferred way.
%\icmlsetsymbol{equal}{*}

\begin{icmlauthorlist}
\icmlauthor{ Chi-Heng Lin}{1}
\icmlauthor{Mehdi Azabou}{1,2}
\icmlauthor{Eva L. Dyer}{1,2,3}
\end{icmlauthorlist}

\icmlaffiliation{1}{Department of Electrical and Computer
Engineering, Georgia Tech, Atlanta, Georgia, USA.}
\icmlaffiliation{2}{Machine Learning Program,
    Georgia Tech, Atlanta, Georgia, USA}
\icmlaffiliation{3}{Coulter Department of Biomedical Engineering,
    Georgia Tech \& Emory University, Atlanta, Georgia, USA}
%\icmlaffiliation{3}{Coulter Department of Biomedical Engineering,
%    Emory University,
%    Atlanta, Georgia, USA}
\icmlcorrespondingauthor{Chi-Heng Lin}{cl3385@gatech.edu}
\icmlcorrespondingauthor{Mehdi Azabou}{mazabou@gatech.edu}
\icmlcorrespondingauthor{Eva L. Dyer}{evadyer@gatech.edu}
% You may provide any keywords that you
% find helpful for describing your paper; these are used to populate
% the "keywords" metadata in the PDF but will not be shown in the document

\icmlkeywords{Machine Learning, ICML}

\vskip 0.3in
]

% this must go after the closing bracket ] following \twocolumn[ ...

% This command actually creates the footnote in the first column
% listing the affiliations and the copyright notice.
% The command takes one argument, which is text to display at the start of the footnote.
% The \icmlEqualContribution command is standard text for equal contribution.
% Remove it (just {}) if you do not need this facility.

%\printAffiliationsAndNotice{}  % leave blank if no need to mention equal contribution
\printAffiliationsAndNotice{} % otherwise use the standard text.
\begin{abstract}
Optimal transport (OT) is a widely used technique for distribution alignment, with applications throughout the machine learning, graphics, and vision communities. Without any additional structural assumptions on transport, however, \ot can be fragile to outliers or noise, especially in high dimensions.
Here, we introduce Latent Optimal Transport (\alg\hspace{-1mm}), a new approach for OT that simultaneously learns low-dimensional structure in data while leveraging this structure to solve the alignment task. The idea behind our approach is to learn two sets of ``anchors'' that constrain the flow of transport between a source and target distribution.
In both theoretical and empirical studies, we show that \alg regularizes the rank of transport and makes it more robust to outliers and the sampling density. We show that by allowing the source and target to have different anchors, and using \alg to align the latent spaces between anchors, the resulting transport plan has better structural interpretability and  highlights connections between both the individual data points and the local geometry of the datasets.
\end{abstract}

%Compared with OT, the resulting transport plan has better structural interpretability, highlighting the connections betweenindividual data points and their local geometry,and is more robust to noise and sampling.
\section{Introduction}\label{sec:intro}
%Optimal transport (OT) \cite{villani2008optimal} is 
Optimal transport (OT) \cite{villani2008optimal} is a widely used technique for distribution alignment that learns a {\em transport plan} which moves mass from one distribution to match another. %Due to to its flexibility and new tools for learning efficient transport plans, 
With recent advances in tools for regularizing and speeding up OT \cite{cuturi2013sinkhorn}, this approach has found  applications in many diverse areas of machine learning, including domain adaptation \cite{courty2014domain,flamary2016optimal}, generative modeling \cite{martin2017wasserstein,tolstikhin2017wasserstein}, 
document retrieval \cite{kusner2015word}, computer graphics \cite{solomon2014earth,solomon2015convolutional,bonneel2016wasserstein}, and computational neuroscience \cite{gramfort2015fast,lee2019hierarchical}. 

While the ground metric in \ot can be used to impose geometric structure into transport, without any additional 
assumptions, \ot can be fragile to outliers or noise, especially in high dimensions. To overcome this issue, additional structure, either in the data or in the transport plan, can be used to improve alignment or make transport more robust. Examples of methods that incorporate additional structure into OT include approaches that leverage hierarchical structure or cluster consistency \cite{lee2019hierarchical,yurochkin2019hierarchical,xu2020learning}, partial class information \cite{flamary2016optimal,courty2014domain}, submodular cost functions \cite{alvarez2018structured}, and low-rank constraints on the transport plan \cite{forrow2019statistical, altschuler2019massively}. Because of the difficulty of incorporating structure into OT, many of these methods need low-dimensional structure in data to be specified in advance (e.g., estimated clusters or labels).  

To simultaneously learn low-dimensional structure and use it to constrain transport, \citet{forrow2019statistical} recently introduced a statistical approach for OT that builds a factorization of the transport plan to regularize its rank. After factorization, transport from a source to target distribution can be visualized as the flow of mass through a small number of anchors (hubs), which serve as relay stations through which transportation must pass (see Figure~\ref{fig:motivation}, a vs.~b). 
Although this idea of moving data through anchors is appealing, in previous work, the anchors used to constrain transport are {\em shared} by the source and target. As a result, when the source and target contain different structures or experience domain shift \cite{courty2014domain}, shared anchors may not provide an adequate representation for both domains simultaneously.

In this work, we propose a new structured transport approach called Latent Optimal Transport (\alg\hspace{-1mm}). The main idea behind \alg is to factorize the transport plan into three components, where mass is moved: (i) from individual source points to source anchors, (ii) from the source anchors to target anchors, and (iii) from target anchors to individual target points (Figure~\ref{fig:motivation}c-d).
The intermediate transport plan captures the high-level structural similarity between the source and target, while the outer transport plans cluster data in their respective spaces.
%In both theoretical and empirical studies, we show that \alg regularizes the rank of transport and has the effect of making transport more robust to different perturbations and outliers. 
In both theoretical and empirical studies, we show that \alg regularizes the rank of transport and has the effect of denoising the transport plan, making it more robust to outliers and sampling.  By allowing the source and target to have different anchors and aligning the latent spaces of the anchors, we show that the mapping between datasets can be more easily interpreted.

\begin{figure*}
     \centering

     \subfloat[][\footnotesize{\ot}]{\includegraphics[width=.24\textwidth]{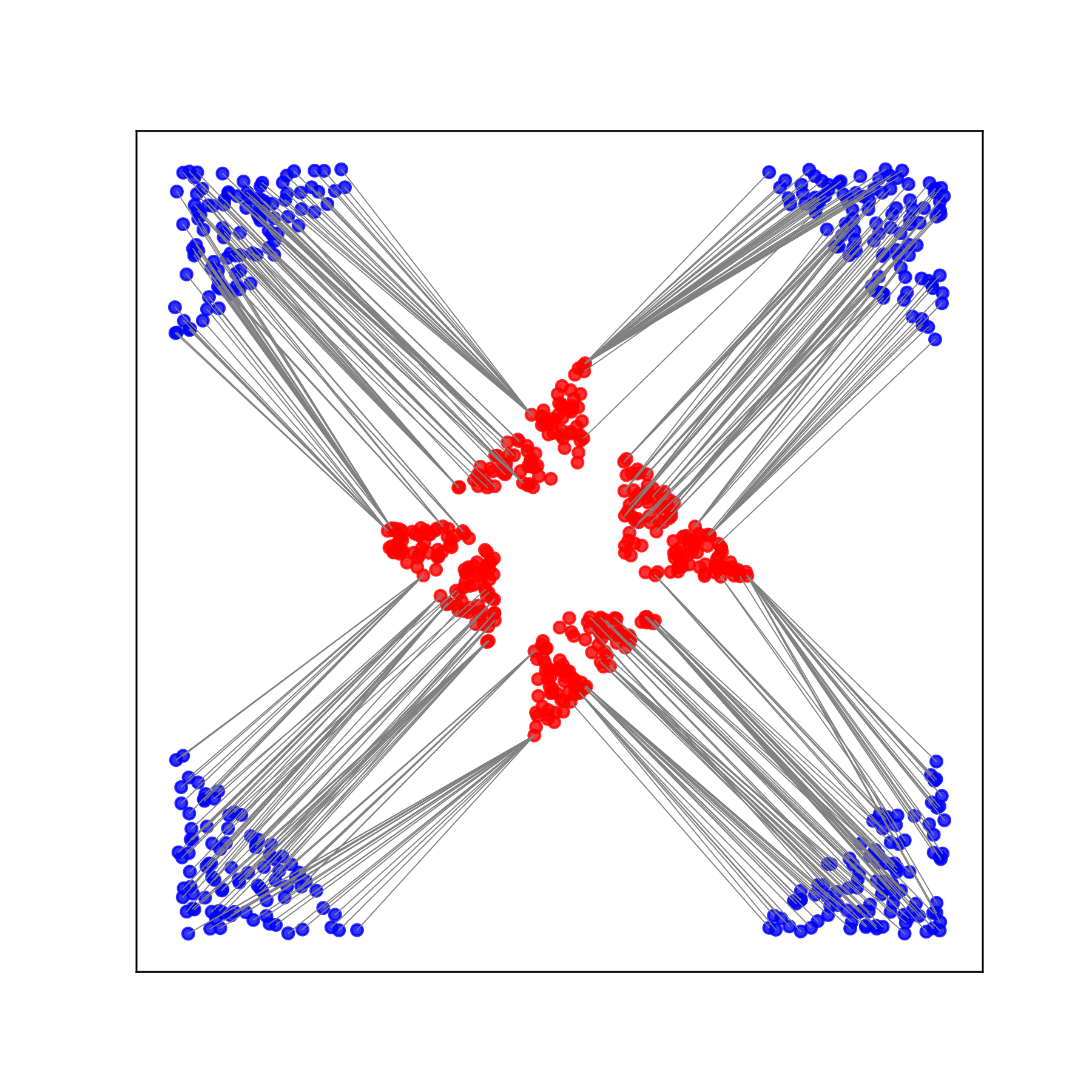}\label{fig:illus1}\vspace{-3mm}}
     \subfloat[][\footnotesize{\fc}]{\includegraphics[width=.24\textwidth]{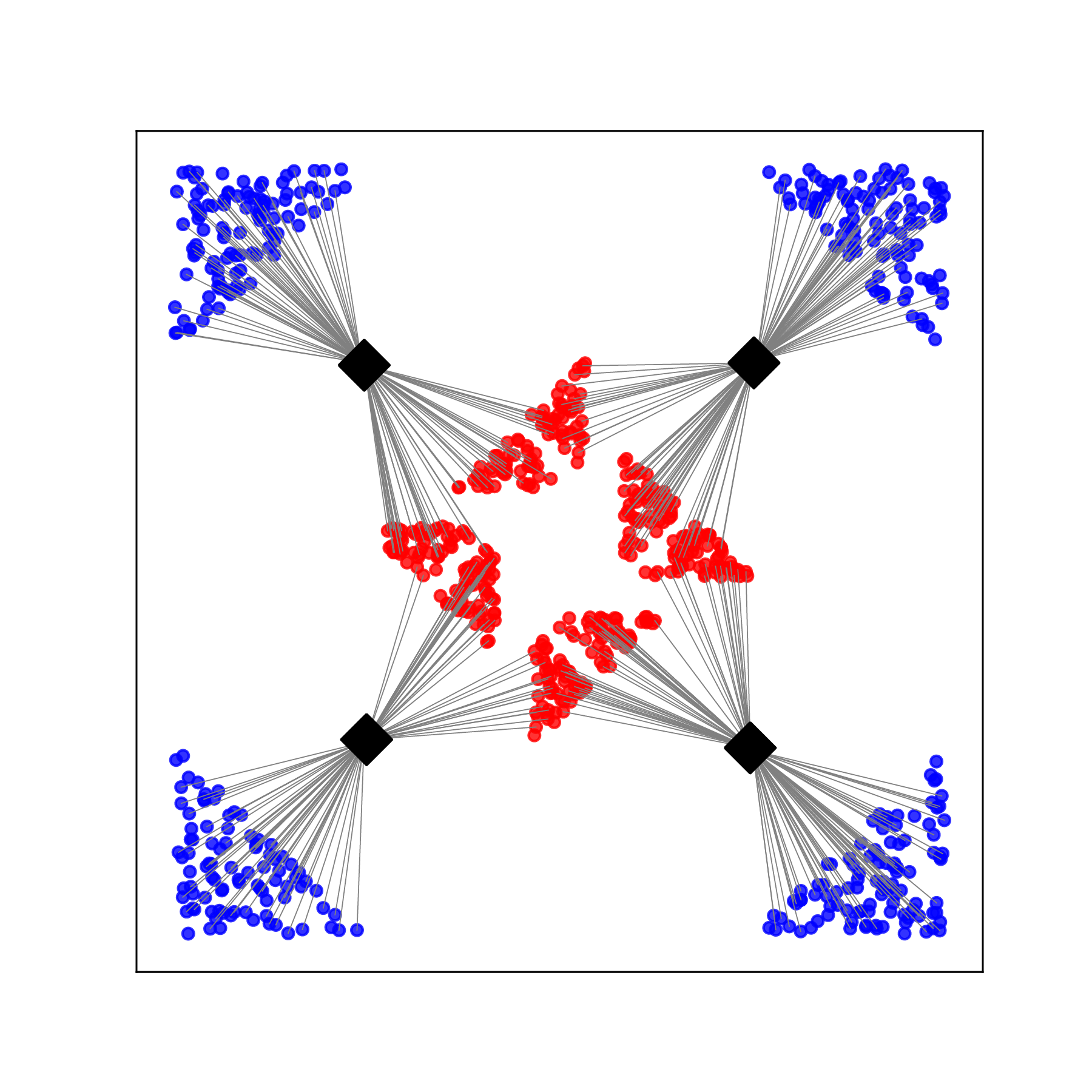}\vspace{-3mm}}
     \subfloat[][\footnotesize{\alg(4,4)}]{\includegraphics[width=.24\textwidth]{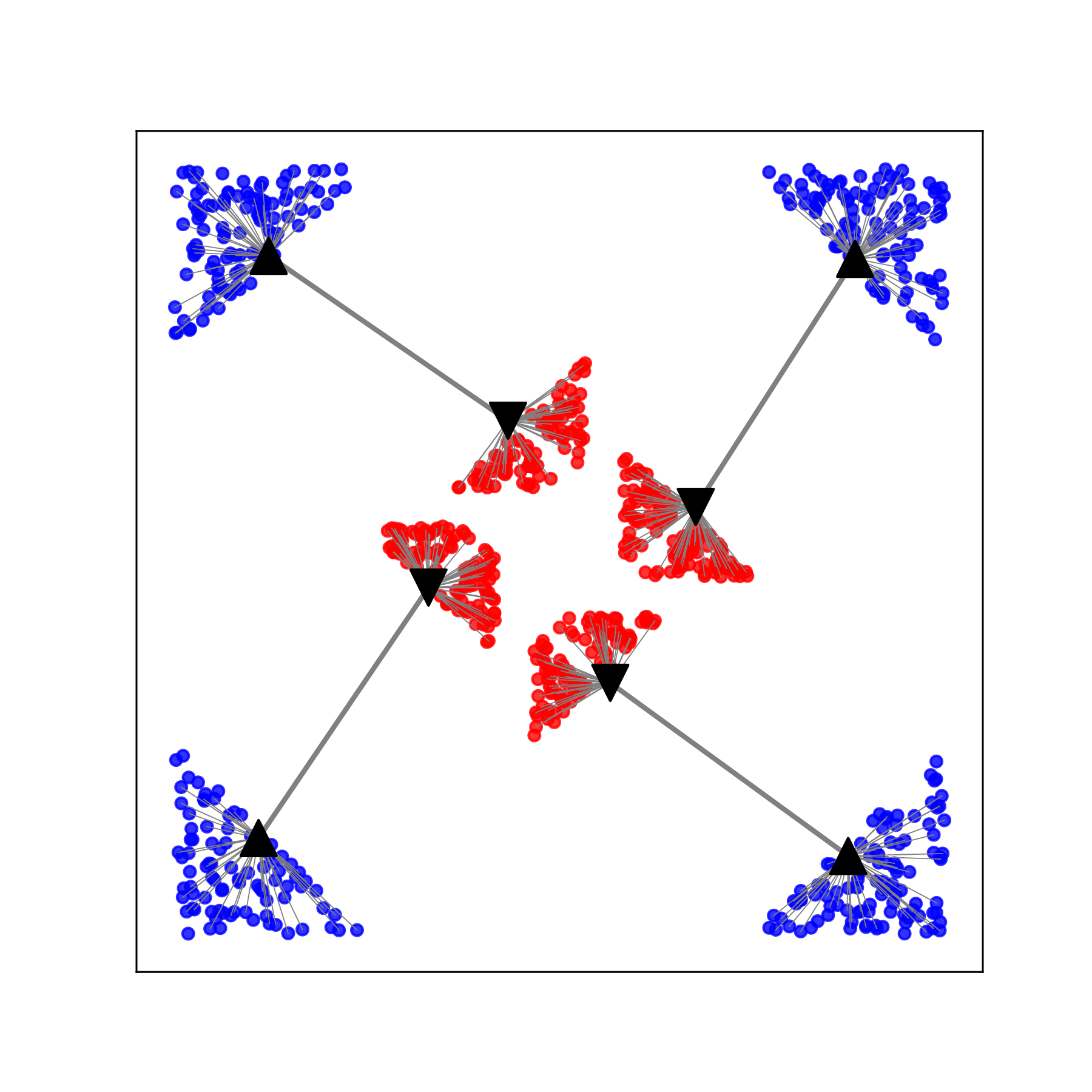}\label{fig:illus2}\vspace{-3mm}}
     \subfloat[][\footnotesize{\alg(8,4)}]{\includegraphics[width=.24\textwidth]{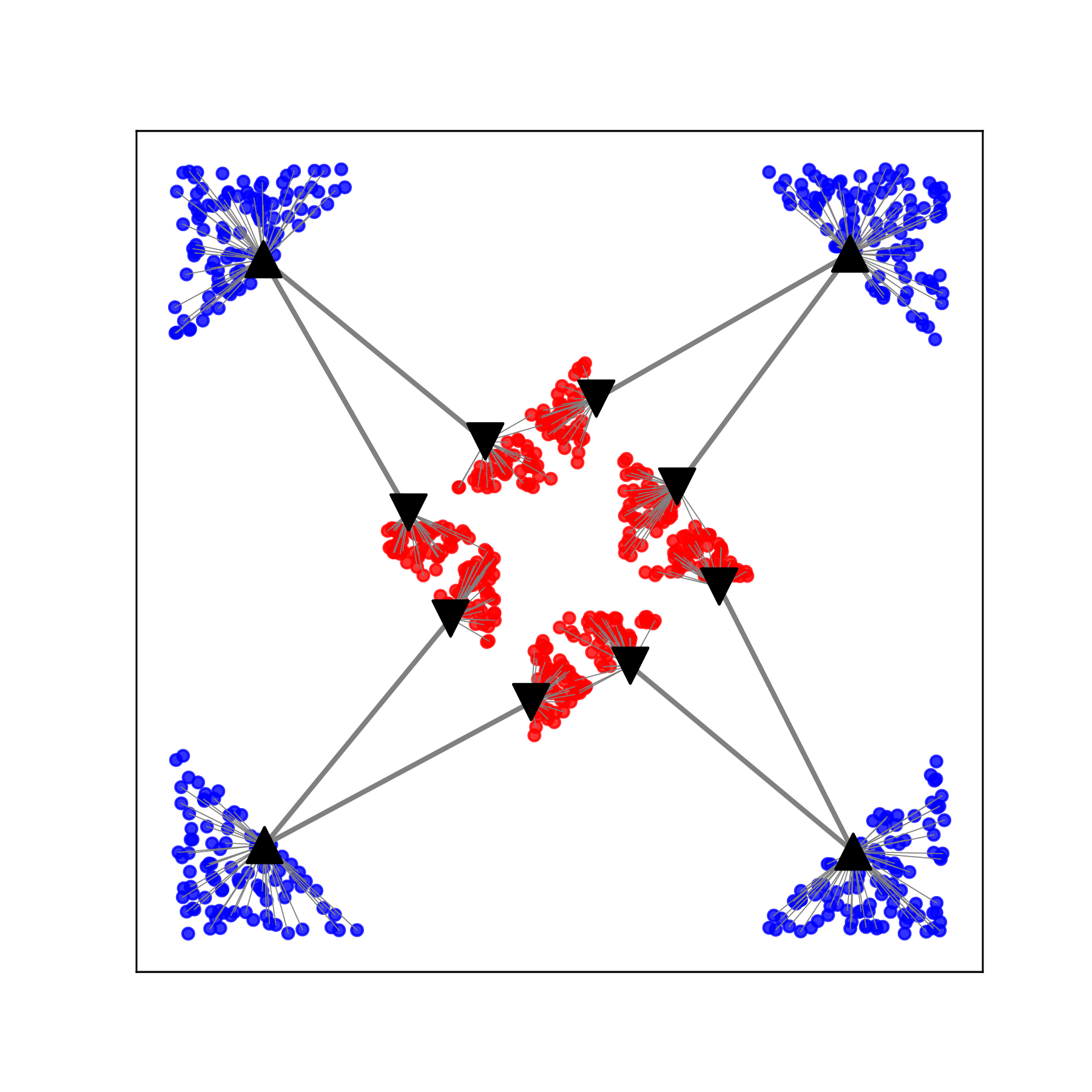}\label{fig:illus3}\vspace{-3mm}}
     \caption{\footnotesize{{\bf \em  Comparisons of transport plans obtained for different methods applied to clustered data after domain shift}. Here, we visualize the connection between the source (blue) $x$  and its estimated target (red) $\hat{y}=\argmax_y p(y|x)$. From left to right, we show the standard \ot plan (a) and the factored coupling (\url{FC}) approach (b). To the right, we show the result of \alg when we use 4 anchors in the target with the same number in the target (c) and 8 anchors in the target (d).}
     \label{fig:motivation}\vspace{-4mm}}
\end{figure*}

Specifically, our contributions are as follows. (i) We introduce \alg\hspace{-1mm}, a new form of structured transport, and propose an efficient algorithm that solves our proposed objective (Section \ref{sec:formulation}), (ii)
Theoretically, we show that \alg can be interpreted as a relaxation to OT,
and from a statistical point-of-view, it overcomes the curse of dimensionality in terms of the sampling rate (Section \ref{sec:analysis}),
(iii) We study the robustness of the approach to noise, sampling, and various data perturbations when applied to both synthetic data and domain adaptation problems in computer vision  (Section~\ref{sec:experiment}).

\section{Background}\label{sec:Background}
\paragraph{Optimal Transport:}
\label{sec:ot}
Optimal transport (OT) \cite{villani2008optimal,santambrogio2015optimal, peyre2019computational} is a distribution alignment technique that learns a transport plan that specifies how to move mass from one distribution to match another.  Specifically, 
consider two sets of data points encoded in matrices, the {\em source} $\mathbf{X} = [\x_1,\dots, \x_n]$ and the {\em target} $\mathbf{Y}=[ \y_1,\dots, \y_m]$, where $\x_i \in {\mathcal{X}}$, $\y_j \in {\mathcal{Y}}$, $\forall i,j$.
Assume they are endowed with discrete measures
$\mu = \sum_{i=1}^Np(\x_i)\delta_{\x_i}$, $\nu = \sum_{j=1}^Mp(\y_j)\delta_{\y_j}$, respectively.
The cost of transporting $\x_i$ to $\y_j$ is $c( \x_i, \y_j)$, where $c$ denotes some cost function. \ot considers the most cost-efficient transport by solving the following problem:\footnote{The problem can be generalized to setting of continuous measures by OT$_c(\mu,\nu)=\min_{\gamma\in{\cal{G}}}\int_{{\cal{X}}\times{\cal{Y}}}c(x,y)d\gamma(x,y)$, ${\cal{G}}=\{\gamma:\int_{\cal{Y}} d\gamma(x,y)=\mu,\int_{\cal{X}} d\gamma(x,y)=\nu\}$.}
\begin{align}\label{ot}
   \text{OT}_{\mathbf{C}}(\mu,\nu):= \min _{{ \mathbf{P}\mathbf{1}} = \mu, { \mathbf{P}}^T{\mathbf{1}} =\nu}\langle {\mathbf{C}}, { \mathbf{P}}\rangle,
\end{align}
where $\mathbf{P}:=[p(\x_i,\y_j)]_{i,j}$ is the source-to-target transport plan matrix (coupling), and ${\mathbf{C}}=[c( \x_i, \y_j)]_{i,j}$ is the cost matrix. When $c(\x , \y)=d( \x, \y)^p$, where $d$ is a distance function, ${\cal{W}}_p:=\text{OT}_{\textbf{C}}^{{1}/{p}}$ defines a distance called the $p$-Wasserstein distance. The objective in (\ref{ot}) is a linear programming problem, where computation speed can be prohibitive if $n$ is large \cite{pele2009fast}. A common speedup is to replace the objective by an entropy-regularized proxy, 
\begin{align}
 \label{regot}
   \text{OT}_{\mathbf{C},\varepsilon}(\mu,\nu):&= \hspace{-2mm} \min _{{ \mathbf{P}\mathbf{1}}  = \mu, { \mathbf{P}}^T{\mathbf{1}} =\nu}\langle \mathbf{C}, {\mathbf{P}}\rangle -\varepsilon\mathbf{H}(\mathbf{P})\nonumber\\
   &=\min _{{ \mathbf{P}\mathbf{1}}  = \mu, { \mathbf{P}}^T{\mathbf{1}} =\nu}\varepsilon\text{KL}({\bf P}||{\bf K}),
\end{align}
where ${\bf K}$ is the Gibbs kernel induced by the element-wise exponential of the cost matrix $\mathbf{K} := \exp(-\mathbf{C}/\varepsilon)$, $\mathbf{H}(\mathbf{P}):=-\sum_{ij}\mathbf{P}_{ij}\log(\mathbf{P}_{i,j})$ is the Shannon entropy, and $\varepsilon$ is a user-specified hyperparameter that controls the amount of entropic regularization that is introduced. We can alternatively write the objective function as a minimization of $\varepsilon \text{KL}(\mathbf{P}\|\mathbf{K})$, where KL denotes the Kullback-Leibler divergence.
In practice, the entropy-regularized form is often used over the original objective (\ref{ot}) as it admits a fast method called the Sinkhorn algorithm \cite{cuturi2013sinkhorn,altschuler2017near}.
Hence, we will use \ot to refer to the entropy-regularized form unless specified otherwise in the context.

{\bf Optimal Transport via Factored Couplings:}~
\label{sec:fc}
Factored Coupling (\url{FC}) is proposed in \cite{forrow2019statistical} to reduce the sample complexity of OT in high dimensions. Specifically, it adds an additional constraint to (\ref{ot}) by enforcing the transport plan to be of the following factored form,
\begin{align}\label{fc}
\vspace{-1mm}
        p(\x_i,\y_j) = \sum_{l=1}^k p(\z_l)p(\x_i|\z_l)p(\y_j|\z_l).
        \vspace{-2mm}
\end{align} 
This has a nice interpretation: $\z_l$ serves as a common ``anchor'' that transportation from $\x_i$ to $\y_j$ must pass through. It turns out that \url{FC} is closely related to the Wasserstein barycenter problem \cite{agueh2011barycenters,cuturi2014fast,cuturi2016smoothed}, $\min_\nu \sum_{i=1}^N{\cal{W}}^2_2(\mu_i,\nu)$,
where $\nu$ is the Procrustes mean to distributions $\mu_i$, $i=1,\dots,N$ with respect to the squared 2-Wasserstein distance. A crucial insight from \cite{forrow2019statistical} is that for $N=2$, the barycenter $\nu$ could approximate the optimal anchors to a transport plan of the form (\ref{fc}) that minimizes the objective in (\ref{ot}).

\section{Latent Optimal Transport }
\label{sec:formulation}

\subsection{Motivation}

Most datasets have low-dimensional latent structure, but \ot does not naturally use it during transport. This motivates the idea that distribution alignment methods should both {\em reveal} the latent structure in the data in addition to  aligning these latent structures. An illustrative example is provided in Figure~\ref{fig:motivation}; here, we show the transport plan for a source (red points) and a target (blue points), both of which exhibit clear cluster structures. Because \ot transports points independently, the points can be easily mapped outside of their original cluster (a).
In comparison, low-rank OTs (b-d) induce transport plans that are better at preserving clusters. 
In (b), because factored coupling (\url{FC}) transports points via common anchors (black squares), the anchors need to interpolate between both distributions, and it loses the freedom of choosing different structures for the source and target.
On the other hand, by specifying different numbers of anchors for the source and target individually (c vs.~d), \alg can 
extract different structures and output different transport plans.

\subsection{Problem formulation}
Consider data matrices $\mathbf{X}$ and $\mathbf{Y}$ and their measures $\mu$, $\nu$,  as detailed in Section~\ref{sec:Background}. 
We introduce  ``anchors'' through which points must flow, thus constraining the transportation. The anchors are stacked in data matrices $\mathbf{Z}_x:=[\mathbf{z}^x_1,\dots,\mathbf{z}^x_{k_x}]$,  $\mathbf{Z}_y:=[\mathbf{z}^y_1,\dots,\mathbf{z}^y_{k_y}]$.
We denote the measures of the source and target anchors as
$\mu_z = \sum_{m=1}^{k_x}p(\z^x_m)\delta_{\z^x_m}$ and $\nu_z = \sum_{n=1}^{k_y}p(\z^y_n)\delta_{\z^y_n}$.
For any set ${\cal{A}}$, we further denote $\Delta^k_{\cal{A}}:=\left\{\sum_{i=1}^k \mathbf{\omega}_i\delta_{\mathbf{a}_i}:\sum_{i=1}^k\mathbf{\omega}_i =1,\mathbf{\omega}_i\geq 0,\mathbf{a}_i\in {\cal{A}},\forall i\right\}$ as the set of probability measures on ${\cal{A}}$ that has discrete support of size up to $k$. Hence $\mu_z \in \Delta^{k_x}_{{\mathcal{Z}}_x}$, $\nu_z\in \Delta^{k_y}_{{\mathcal{Z}}_y}$, where ${\mathcal{Z}}_x$ (resp. ${\mathcal{Z}}_y$) is the space of source (resp. target) anchors.
If we interpret the conditional probability $p(a|b)$ as the strength of transportation from $b$ to $a$, then, using the chain rule, the concurrence probability $p(\x_i,\y_j)$ of $\x_i$ and $\y_j$ can be written as, 
\begin{align}\label{transportform}
    p(\x_i, \y_j) =& \sum_{m,n} p(\x_i)p(\z_m^x|\x_i)p(\z^y_n | \z_m^x)p(\y_j| \z^y_n)\nonumber\\
    =&\sum_{m,n} p(\x_i,\z_m^x)\frac{p(\z_m^x,\z^y_n)}{p(\z_m^x)p(\z^y_n)}p(\z^y_n,\y_j).
\end{align}
When encoding these probabilities using a transport matrix $\mathbf{P}:=[p(\x_i,\y_j)]_{i,j}$, the factorized form (\ref{transportform}) can be written as,
\begin{align}\label{form}
\vspace{-1mm}
    \mathbf{P} = \mathbf{P}_x\text{diag}(\mathbf{u}_z^{-1}) \mathbf{P}_z \text{diag}(\mathbf{v}_z^{-1})\mathbf{P}_y,
\end{align} 
where $\mathbf{P}_x$ encodes transport from source space to source anchor space (i.e., $p(\x_i,\z^x_m)$), $\mathbf{P}_z$ encodes transport from source anchor space to target anchor space, $\mathbf{P}_y$ encodes transport from target anchor space to target space
%(i.e., $p(\z^x_m,\z^y_n)$), $p(\z^y_n,\y_j)$, 
, and
 $\mathbf{u}_z:=[p(\z^x_1),\cdots,p(\z^{x}_{k_x})]$, $\mathbf{v}_z:=[p(\z^y_1),\cdots,p(\z^y_{k_y})]$ encode the latent distributions of anchors. 
To learn each of these transport plans, we must first designate the ground metric used to define the cost in each of the three stages. 
%We will use  $\mathbf{C}_x$ and $\mathbf{C}_y$ to measure the discrepancy between original data and their respective anchors, while $\mathbf{C}_z$ measures the discrepancy between the anchors in the source and target spaces. 
The cost matrices $\mathbf{C}_x,\mathbf{C}_y$ determine how points will be transported to their respective anchors and thus dictate how the data structure will be extracted. 
We will elaborate on the choice of costs in Section~\ref{sec:costs}.

We now formalize our proposed approach to transport in the following definition.
\begin{definition}
\label{def1}
Let $\mathbf{C}_x$, $\mathbf{C}_y$ denote the cost matrices between the source/target and their representative anchors, and let $\mathbf{C}_z$ denote the cost matrix between anchors. We define the latent optimal transport (\alg\hspace{-.5mm}) problem as, %\footnote{
%This definition extends naturally to continuous measures by defining cost functions $c_x(\cdot,\cdot),c_y(\cdot,\cdot),c_z(\cdot,\cdot)$ instead of cost matrices.}
\begin{align*}
\rm{OT}^{{L}}(\mu,\nu):&=\inf_{\mu_z \in \Delta_{{\mathcal{Z}}_x}^{k_x},\nu_z\in \Delta_{{\mathcal{Z}}_y}^{k_y}}\big\{\rm{OT}_{\mathbf{C}_x}(\mu,\mu_z)\nonumber\\+ &\rm{OT}_{\mathbf{C}_z}(\mu_z,\nu_z)+\rm{OT}_{\mathbf{C}_y}(\nu_z,\nu)\big\}, 
\vspace{-3mm}
\end{align*}
where ${\mathcal{Z}}_x$ and ${\mathcal{Z}}_y$ are the latent spaces of the source and target anchors, respectively.\footnotemark 
\vspace{-2mm}
\end{definition}
\footnotetext{
This definition extends naturally to continuous measures by replacing cost matrix $\mathbf{C}$ with cost function $c$.}% instead of cost matrices.}
The intuition behind Def. \ref{def1} is that we use $\rm{OT}_{\mathbf{C}_x}(\mu,\mu_z)$ and $\rm{OT}_{\mathbf{C}_y}(\nu_z,\nu)$ to capture group structure in each space, and then $\rm{OT}_{\mathbf{C}_z}(\mu_z,\nu_z)$ to align the source and target by determining the transportation across anchors. Hence, \alg can be interpreted as an optimization of joint clustering and alignment.
The flexibility of cost matrices allows \alg to capture different structures and induce different transport plans. In Section~\ref{sec:analysis}, we further show that \alg can be regarded as a relaxation of an \ot problem. 
\begin{remark}
In \citet{forrow2019statistical}, the authors introduce the notion of the transport rank for a transport plan $\mathbf{P}$ as the minimum number of product probability measures that its corresponding coupling can be composed from, i.e., $p(\x,\y) = \sum_{i=1}^r\lambda_i\left(p_i(\x)\otimes p_i(\y)\right)$, $\lambda_i\geq 0$, $\forall i$. In general, given a transportation plan $\mathbf{P}$, the transport rank ${\rm rank}_+(\mathbf{P})$ is lower bounded by its usual matrix rank ${\rm rank}(\mathbf{P})$. 
%On the other hand, we don't explicitly define a new rank notion but rather choose to present our work in terms of numbers of anchors $k_x,k_y$.  
In the case of \url{LOT}, the transport plan induced by Def. \ref{def1} satisfies ${\rm rank}(\mathbf{P})\leq {\rm rank}_+(\mathbf{P})\leq \min(k_x,k_y)$. Thus, by selecting a small number of anchors we naturally induce a low-rank solution for transport.
\end{remark}

%In analogy to nonnegative rank in nonnegative matrix factorization (NMF) \cite{berman1994nonnegative},

Next, we show some properties of \alg that highlight its similarity to a metric.
\begin{proposition}\label{prop1}
Suppose the latent spaces ${\cal{Z}}_x={\cal{Z}}_y$ are the same as the original data spaces ${\cal{X}}={\cal{Y}}$, and the cost matrices are defined by $\mathbf{C}_x[a,b] = \mathbf{C}_z[a,b]= \mathbf{C}_y[a,b]= d(a,b)^p$, where $p\geq1$ and $d$ is some distance function.
If we define the latent Wasserstein discrepancy as ${\cal{W}}^L_p:={(\rm{OT}^L)}^{1/p}$, then there exist $\kappa>0$ such that, for any $\mu$, $\nu $ and $\zeta$ having latent distributions of support sizes up to $k$, the discrepancy satisfies,
\vspace{-1mm}
\begin{itemize}
\vspace{-1mm}
    \item ${\cal{W}}_p^L(\mu,\nu)\geq 0$
    \vspace{-1mm}
    \item ${\cal{W}}_p^L(\mu,\nu) = {\cal{W}}_p^L(\nu,\mu)$ 
    \vspace{-1mm}
    \item ${\cal{W}}_p^L(\mu,\nu) \leq \kappa \left({\cal{W}}_p^L(\mu,\zeta)+{\cal{W}}_p^L(\zeta,\nu)\right)$ 
\end{itemize}
\vspace{-2mm}
\end{proposition}

The low-rank nature of \alg has a biasing effect that results in ${\cal{W}}_p^L(\mu,\mu)>0$ for a general $\mu$. 
We can debias it by defining its variant $\tilde{{\cal{W}}}_p^L(\mu,\nu):=\left(\left({\cal{W}}_p^L(\mu,\nu)\right)^p-\min\limits_{z_k\in{\Phi}_x}{\cal{W}}_p^p(\mu,z_k)-\min\limits_{z'_k\in{\Phi}_y}{\cal{W}}_p^p(\nu,z'_k)\right)^{1/p}$,
where ${{\Phi}}_x=\Delta_{{\cal{Z}}_x}^{k_x}$, ${{\Phi}}_y=\Delta_{{\cal{Z}}_y}^{k_y}$.
The following property connects $\tilde{{\cal{W}}}_p^L(\mu,\nu)$ to k-means clustering.
\begin{corollary}\label{cor:kmean}
Under the assumptions of Proposition \ref{prop1}, if $p=2$ and $k_x=k_y=k$, then $\forall \mu,\nu$, we have $\tilde{{\cal{W}}}_2^L(\mu,\nu)\geq 0$. Furthermore,  $\tilde{{\cal{W}}}_2^L(\mu,\nu)>0$ if their k-means centroids or sizes of their k-means clusters differ.
\end{corollary}

\subsection{Establishing a ground metric}

\label{sec:costs}
In what follows, we will focus on the Euclidean space ${\cal{X}}={\cal{Y}}=\mathbb{R}^d$.
Instead of considering every source-to-target distance to build our transportation cost, we can use anchors as proxies for each point. A well-established way of encoding the distance that each point needs to travel to get to its nearest anchor, is to define the cost as:
\begin{align}
\label{eq:mahanorm}
   \mathbf{C}_x= d_{\mathbf{M}_x},  \mathbf{C}_z= d_{\mathbf{M}_z}, \mathbf{C}_y = d_{\mathbf{M}_y},
\end{align}
where $d_{\mathbf{M}}$ denotes the Mahalanobis distance: $d^2_{\mathbf{M}}(\x,\y):=(\x-\y)^T{\mathbf{M}}(\x-\y)$ and ${\mathbf{M}}$ is some positive semidefinite matrix.  
%As a result, it is less sensitive to noise and few outliers.
The Mahalanobis distance generalizes the squared Euclidean distance and allows us to consider different costs based on correlations between features.
The framework of Mahalanobis distance benefits from efficient metric learning techniques \cite{cuturi2014ground}; recent research also establishes connections between it and robust \ot \cite{paty2019subspace,dhouib2020swiss}. When a simple L2-distance is used ($\mathbf{M}=\mathbf{I}$), we will denote this specific variant as \lot\hspace{-1mm}.

When \alg moves source points through anchors, the anchors impose a type of bottleneck, and this results in a loss of information that makes it difficult to estimate the corresponding point in the target space.
In cases where accurate point-to-point alignment is desired,
we propose an alternative strategy for defining the cost matrix $\mathbf{C}_z$. The idea is to represent an anchor as the distribution of points assigned to it. Specifically, we represent $\mathbf{z}^x,\mathbf{z}^y$ as measures in $\mathbb{R}^d$: $\tilde{\mathbf{z}}^x=\sum_{i=1}^N\mathbf{P}_x(\mathbf{x}_i|\mathbf{z}^x)\delta_{\mathbf{x}_i},$ $\tilde{\mathbf{z}}^y=\sum_{j=1}^M\mathbf{P}_y(\mathbf{y}_j|\mathbf{z}^y)\delta_{\mathbf{y}_j}$. Then we measure the cost between anchors as the squared Wasserstein distance between their respective distributions, 
\begin{equation}
\label{eq:newcost}
\mathbf{C}_z:=[{\cal{W}}^2_2(\mathbf{P}_x(\cdot|\mathbf{z}^x_m),\mathbf{P}_y(\cdot|\mathbf{z}^y_n))]_{m,n}.
\end{equation}
Besides the quantity itself, the transport plan returned by calculating $\mathbf{C}_z$ is also very important as it provides accurate point-to-point maps. Since the cost matrix is now a function of  $\mathbf{P}_x$ and $\mathbf{P}_y$, we use an additional alternating scheme to solve the problem: 
we alternate between updating $\mathbf{C}_z$ while keeping $\mathbf{P}_x$ and $\mathbf{P}_y$ fixed, and then updating $\mathbf{P}_x,\mathbf{P}_y,\mathbf{P}_z$ while keeping $\mathbf{C}_z$ fixed.
An efficient algorithm is presented in Appendix \ref{wasslot} to reduce the computation complexity. This variant, \lotwa, can yield better performance in downstream tasks that require precise alignment at the cost of additional computation.

\subsection{Algorithm}

\label{sec:algorithm}
In the rest of this section, we will develop our main approach for solving the
problem in Def.~\ref{def1}. We provide an outline of the algorithm in Algorithm~\ref{alg:1} and an implementation of the algorithm in Python at:~ \url{http://nerdslab.github.io/latentOT}.

%We first assume that 
%the locations of 
%the anchors $\mathbf{Z}_x$, $\mathbf{Z}_y$ are fixed and then proceed to the case where they need to be optimized. Finally, we propose a robust estimation of data transportation based on the solution.

\paragraph{\em (1) Optimizing $\mathbf{P}_x,\mathbf{P}_y$ and $\mathbf{P}_z$:}
To begin, we assume that the anchors and cost matrices $\mathbf{C}_x,\mathbf{C}_z,\mathbf{C}_y$ are already specified. 
Let $\mathbf{K}_x,\mathbf{K}_z,\mathbf{K}_y$ be the Gibbs kernels induced from the cost matrices $\mathbf{C}_x,\mathbf{C}_z,\mathbf{C}_y$ as in (\ref{regot}). The optimization problem can be written as,
\begin{align}\label{cost1for}
    \min_{\mathbf{u}_z,\mathbf{v}_z,\mathbf{P}_x,\mathbf{P}_z,\mathbf{P}_y} & \sum _{i\in \{x,y,z\}}\varepsilon_i\text{KL}(\mathbf{P}_i\|\mathbf{K}_i),
    \nonumber\\ \text{subject to:~ }&\mathbf{P}_x\mathbf{1} =\mu,\mathbf{P}^T_x\mathbf{1}=\mathbf{u}_z,\mathbf{P}_z\mathbf{1} = \mathbf{u}_z,\nonumber
    \\&\mathbf{P}_z^T\mathbf{1} =\mathbf{v}_z,\mathbf{P}_y\mathbf{1}=\mathbf{v}_z,\mathbf{P}^T_y\mathbf{1}=\nu. 
\end{align}
This is a Bregman projection problem with affine constraints. An iterative projection procedure can thus be applied to solve the problem \cite{benamou2015iterative}. 
We present the procedure as \textsc{{\color{blue}UpdatePlan}} in Algorithm \ref{alg:1}, where $\mathbf{P}_x,\mathbf{P}_z,\mathbf{P}_y$ are successively projected onto the constrained sets of fixed marginal distributions. We defer the detailed derivation to Appendix \ref{der1}.

%####### ALGORITHM 
\begin{algorithm}[t]
\begin{algorithmic}[1]
\caption{Latent Optimal Transport (LOT)} \label{alg:1}
\addtocounter{algorithm}{-1}
\INPUT{Data matrices  $\mathbf{X}$, $\mathbf{Y}$; metric costs ${\bf M}_x, {\bf M}_y, {\bf M}_z$; entropy regularization parameters $\varepsilon_x$, $\varepsilon_y$, $\varepsilon_z$; initial anchors $\mathbf{Z}_x,\mathbf{Z}_y$.}
\OUTPUT{Transport plans $\mathbf{P}_x,\mathbf{P}_y,\mathbf{P}_z$; source and target anchors $\mathbf{Z}_x$, $\mathbf{Z}_y$.}
\ENSURE{$\mathbf{P}_x,\mathbf{P}_y,\mathbf{P}_z,\mathbf{Z}_x,\mathbf{Z}_y$}
%using one of the approaches described in Step (3) of Section~\ref{sec:costs}.
%\vspace{0.5mm}
\WHILE{not converging}
%\State{$(\text{vec}(\mathbf{Z}_x),\text{vec}(\mathbf{Z}_y)) \leftarrow$ Update according to Eqn.~(\ref{cenfor0}).}
\STATE{$(\text{vec}(\mathbf{Z}_x),\text{vec}(\mathbf{Z}_y)) \leftarrow$ Eqn.~(\ref{cenfor0})}
\vspace{0.5mm}
\STATE{$\mathbf{K}_x = \{\exp(-\|\mathbf{X}[i]-\mathbf{Z}_x[j]\|_{\mathbf{M}_x}^2/\varepsilon_x)\}_{i,j}$} 
\STATE{$\mathbf{K}_y = \{\exp(-\|\mathbf{Y}[i]-\mathbf{Z}_y[j]\|_{\mathbf{M}_y}^2/\varepsilon_y)\}_{j,q}$}
\STATE{$\mathbf{K}_z = \{\exp(-\|\mathbf{Z}_x[i]-\mathbf{Z}_y[j]\|_{\mathbf{M}_z}^2/\varepsilon_z)\}_{p,q}$}
\vspace{0.5mm}
\STATE{$\mathbf{P}_x, \mathbf{P}_y, \mathbf{P}_z\leftarrow${{\color{blue}\textsc{UpdatePlan}}}~($\mathbf{K}_x,\mathbf{K}_y,\mathbf{K}_z$})
\ENDWHILE
\REQUIRE{${\bf P}_x$, ${\bf P}_y$, ${\bf P}_z$, ${\bf Z}_x$, ${\bf Z}_y$}
\end{algorithmic}
\vspace{-2mm}
\hrulefill

{{\textsc{\color{blue}UpdatePlan}}}~({$\mathbf{K}_x,\mathbf{K}_y,\mathbf{K}_z$})

\begin{algorithmic}[1]
\caption{\hspace{2mm} Latent Optimal Transport - \alg} \label{cc}

\ENSURE{$\mathbf{\alpha}_x\leftarrow \mathbf{1}_N;\mathbf{\beta}_x\leftarrow \mathbf{1}_{k_1};\mathbf{\alpha}_y\leftarrow \mathbf{1}_{k_2}$}; 

{~~~~~$\mathbf{\beta}_y\leftarrow \mathbf{1}_M;\mathbf{\alpha}_z\leftarrow \mathbf{1}_{k_1};\mathbf{\beta}_z\leftarrow \mathbf{1}_{k_2}$ }
\vspace{0.5mm}
\WHILE{not converging}
\STATE $\alpha_x\leftarrow \mu \oslash \mathbf{K}_x \beta_x$; $\beta_y\leftarrow \nu \oslash \mathbf{K}_y^T \alpha_y$ 
\vspace{0.5mm}
\STATE $\mathbf{u}_z \leftarrow ((\alpha_z\odot \mathbf{K}_z\beta_z)\odot(\beta_x\odot \mathbf{K}_x^T\alpha_x))^{\frac{1}{2}}$
\vspace{0.5mm}
\STATE $\beta_x\leftarrow \mathbf{u}_z\oslash \mathbf{K}_x^T\alpha_x$; $\alpha_z\leftarrow \mathbf{u}_z\oslash \mathbf{K}_z\beta_z$ 
\vspace{0.5mm}
\STATE $\mathbf{v}_z \leftarrow ((\alpha_y\odot \mathbf{K}_y\beta_y)\odot(\beta_z\odot \mathbf{K}_z^T\alpha_z))^{\frac{1}{2}}$
\vspace{1mm}
\STATE $\beta_z\leftarrow \mathbf{v}_z\oslash \mathbf{K}_z^T\alpha_z$; $\alpha_y\leftarrow \mathbf{v}_z\oslash \mathbf{K}_y\beta_y$ 
\ENDWHILE
\vspace{0.5mm}
 
 \REQUIRE{$ \mathbf{P}_i = {\text{diag}}(\alpha_i)\mathbf{K}_i{\text{diag}}(\beta_i)$, $i\in\{x,y,z\}$}
\end{algorithmic}
\end{algorithm}

\vspace{-2mm}
\paragraph{\em (2) Optimizing the anchor locations:} Now we consider the case where we are free to select the anchor locations in $\mathbb{R}^d$.
We consider the class of Mahalanobis costs described in Section~\ref{sec:costs}. Let ${\mathbf{M}}_x$, ${\mathbf{M}}_z$, ${\mathbf{M}}_y$ be the  Mahalanobis matrices correspond to $\mathbf{C}_x$, $\mathbf{C}_z$, and $\mathbf{C}_y$, respectively.

Given the transport plans generated after solving (\ref{cost1for}), we can derive the the first-order stationary condition of OT$^L$ with respect to $\mathbf{Z}_x$ and $\mathbf{Z}_y$. Let 
\begin{align}
{\bf A} =\nonumber \begin{bmatrix}
D(\mathbf{u}_z)\otimes (\mathbf{M}_x+\mathbf{M}_z) & \mathbf{P}_z\otimes \mathbf{M}_z\\
-\mathbf{P}_z^T\otimes \mathbf{M}_z & D(\mathbf{v}_z)\otimes (\mathbf{M}_y+\mathbf{M}_z) 
\end{bmatrix}    
\end{align} 
The update formula is given by 
\begin{align}\label{cenfor0}
\begin{bmatrix}\text{vec}(\mathbf{Z}_x^*)\\\text{vec}(\textbf{Z}_y^*)
    \end{bmatrix}
    ={\bf A}^{-1} \times \begin{bmatrix}(\mathbf{P}_x^T\otimes \mathbf{M}_x)\text{vec}(\mathbf{X})\\(\mathbf{P}_y\otimes \mathbf{M}_y)\text{vec}(\mathbf{Y})
    \end{bmatrix},
\end{align}
where $\text{vec}(\cdot)$ denotes the operator converting a matrix to a column vector, and $D(\cdot)$ denotes the operator converting a vector to a diagonal matrix. We defer the detailed derivation to Appendix \ref{b:anchoropt}. Pseudo-code for the combined scheme can be found in Algorithm~\ref{alg:1}. 

\vspace{-2mm}
\paragraph{\em (3) Robust estimation of data transport:}
\alg provides robust transport 
%$\hat{\mathbf{X}}$ 
in the target domain by aligning the data through anchors, which can facilitate regression, and classification in downstream applications. We denote the centroids of the source and target by $\mathbf{Q}_x = \text{diag}(\mathbf{u}_z^{-1})\mathbf{P}_x^T\mathbf{X}^T$, $\mathbf{Q}_y = \text{diag}(\mathbf{v}_z^{-1})\mathbf{P}_y\mathbf{Y}^T$. We propose the estimator $\hat{\mathbf{X}}:=\sum_{m,n} p(\z_m^x,\z_n^y|\x)(\mathbf{Q}^y_m-\mathbf{Q}^x_n)= \text{diag}(\mu^{-1})\mathbf{P}_x\text{diag}((\mathbf{P}_z\mathbf{1})^{-1})\mathbf{P}_z(\mathbf{Q}_y-\mathbf{Q}_x).$
In contrast to factored coupling \cite{forrow2019statistical}, where $\mathbf{Z}_x=\mathbf{Z}_y$, \alg is robust even when the source and target have different structures (see Table~\ref{table:table1} MNIST-DU, Figure~\ref{fig:mnist}). %We demonstrate this point by comparing both approaches in an experiment where the clusters do not have perfect matches between source and target (Table~\ref{table:table1} MNIST-DU, Figure~\ref{fig:mnist}).

\vspace{-2mm}
\paragraph{\em (4) Implementation details:}
\alg has two primary hyperparameters that must be specified: (i) the number of the source and target anchors $k_x,k_y$ and (ii) the regularization parameter $\varepsilon$. For details on the tuning of these parameters, please refer to Appendix \ref{sec:hyptun}. In practice, we use centroids from k-means clustering \cite{arthur2006k} to initialize the anchors, and for all the experiments we have conducted, \alg typically converges within $20$ iterations.

\section{Related Work}\label{sec:related work}
 %\alg provides a generalized framework for factorizing transport plans that has interesting connections to other approaches. 
%This section relates \alg to prior work.
%\henry{i feel like related work is more smoothly placed here, as it does not involved any discussion with the following sections..}

{\bf Interpolation between factored coupling and k-means clustering:}~
Assume we select the Mahalanobis matrices of the costs defined in Section \ref{sec:costs} to be $\mathbf{M}_x=\mathbf{M}_y=\mathbf{I},$ and $\mathbf{M}_z=\lambda\mathbf{I}$. If we let $\lambda\to \infty$ when estimating the transport between source and target anchors, the anchors merge, and our approach reduces to the case of factored coupling \cite{forrow2019statistical}. At the other end, if we let $\lambda\to 0$, then \alg becomes separable, and the middle term vanishes. In this case, each remaining term exactly corresponds to a pure clustering task, and \alg reduces to k-means clustering \cite{arthur2006k}. 
%The separation of the anchors allows ones to separately extract the local structures within the source and the target. Furthermore, the flexibility of using more general cost matrices allows anchor selection to be based upon various criteria. 

{\bf Relationship to \ot\hspace{-1mm}-based clustering methods:}~
Many methods that combine \ot and clustering \cite{li2008real,ye2017fast,ho2017multilevel,dessein2017parameter,genevay2019differentiable,alvarez2020geometric} focus on using the Wasserstein distance to identify barycenters that serve as the centroids of clusters. When finding barycenters for the source and target separately, this could be seen as \alg with $\mathbf{C}_z= 0$ and $\mathbf{C}_x$, $\mathbf{C}_y$ defined using a squared L2 distance. In other related work \cite{laclau2017co}, co-clustering is applied to a transport plan as a post-processing operation, and no additional regularization on the transportation cost in \ot is imposed. In contrast, our approach induces explicit regularization by separately defining cost matrices for the transport between the source/target points and their anchors. This yields a transport plan guided by a cluster-level matching. % us more towards matching on cluster level and then filling in the details.

{\bf Relationship to hierarchical \ot:}~ Hierarchical \ot \cite{chen2018optimal,lee2019hierarchical,yurochkin2019hierarchical,xu2020learning} transports points by moving them within some predetermined subgroup simultaneously based on either their class label or pre-specified structures, and then forms a matching of these subgroups using the Wasserstein distance. The resulting problem solves a multi-layer \ot problem which gives rise to its name. With a Wasserstein distance used to build the $\mathbf{C}_z$ cost matrix, \alg effectively reduces to hierarchical \ot for fixed and hard-class assignment $\mathbf{P}_x$ and $\mathbf{P}_y$. However, a crucial difference between \alg and hierarchical \ot lies in that the latter imposes the \textit{known} structure information. In contrast, \alg discovers this structure by simultaneously learning $\mathbf{P}_x$ and $\mathbf{P}_y$. 

{\bf Transportation with anchors:}~ The notion of moving data points with anchors to match points in heterogeneous spaces has appeared in other work \cite{sato2020fast,manay2006integral}. These approaches map each point from one domain into a distribution of the costs, which effectively builds up a common representation for the points from both spaces. In contrast to this work, 
%Although we use the same terminology, our work differs significantly from theirs in how we leverage the anchors. 
we use the anchors to encourage clustering of data and to impose rank constraints on the transport plan. 

\section{Theoretical Analysis}
\label{sec:analysis}

{\bf LOT as a relaxation of OT:}~
We now ask how the optimal value of our original rank-constrained objective in (\ref{cost1for}) is related to the transportation cost defined in entropy-regularized \url{OT}.
It turns out their objectives are connected by an inequality described below (see Appendix \ref{sec:suppaproof} for a proof).

\begin{proposition}\label{prop:relax}
 Let $\bf P$ be a transport plan of the form in (\ref{form}). Assume that $\mathbf{K}$ is some Gibbs kernel that satisfies, \begin{align}\label{kercond}
 \mathbf{K}_x\mathbf{K}_z\mathbf{K}_y\leq \mathbf{K},
 \end{align} where the inequality is over each entry. Then we have, 
 \vspace{-2mm}
\begin{align}
    \varepsilon \text{KL}(\mathbf{P}\|\mathbf{K}) \leq &\varepsilon (\text{KL}(\mathbf{P}_x\|\mathbf{K}_x) +\text{KL}(\mathbf{P}_z\|\mathbf{K}_z) \nonumber\\
    + &\text{KL}(\mathbf{P}_y\|\mathbf{K}_y))+\varepsilon(\mathbf{H}(\mathbf{u}_z)+ \mathbf{H}(\mathbf{v}_z)),
\end{align}
where $\mathbf{H}(\mathbf{a}):=-\sum_i \mathbf{a}_i\log \mathbf{a}_i$ denotes the
entropy.
\end{proposition}
The proposition shows that an OT objective, corresponding to a kernel $\mathbf{K}$ (resp. $\mathbf{C}$), can be upper bounded by three sub-OT problems defined by subsequent kernels $\mathbf{K}_x, \mathbf{K}_z,\mathbf{K}_y$ (resp. $\mathbf{C}_x,\mathbf{C}_z,\mathbf{C}_y$) that satisfies (\ref{kercond}) (resp. $\exp(-\mathbf{C}_x/\varepsilon)\exp(-\mathbf{C}_z/\varepsilon)\exp(-\mathbf{C}_y/\varepsilon)\leq \exp(-\mathbf{C}/\varepsilon)$). 

Let us compare the upper bound given by Proposition \ref{prop:relax} with Def. \ref{def1} and ignore the entropy terms; we recognize that it is precisely the entropy-regularized objective of \alg.   
In other words, with suitable cost matrices satisfying (\ref{kercond}), \alg could be interpreted as a relaxation of an OT problem in a decomposed form. We then ask what $\mathbf{C}_x$, $\mathbf{C}_z$, $\mathbf{C}_y$ should be to satisfy (\ref{kercond}). In cases where cost $\mathbf{C}$ is defined by the $L^p$-norm to the power $p$, the following corollary shows that the same form suffices.
\begin{corollary}\label{cor:relate}
Let $d(\x,\y):=\|\x-\y\|_p^p$. Consider an optimal transport problem {\rm OT}$_{\mathbf{C},\varepsilon}$ with cost $\mathbf{C}[i,j]= d(\x_i,\y_j)$, where $p\geq 1$. 
Then for a sufficiently small $\varepsilon$, the latent optimal transport OT$^L$ with cost matrices, $\mathbf{C}_x[i,m]=3^{p-1}d(\x_i-\z^x_m),\mathbf{C}_z[m,n]=3^{p-1}d(\z^x_m-\z^y_n),\mathbf{C}_y[n,j]=3^{p-1}d(\z^y_n- \y_j)$ minimizes an upper bound of the entropy-regularized \ot objective in (\ref{regot}).
\end{corollary}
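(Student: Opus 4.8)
The plan is to derive the statement as a direct consequence of Proposition~\ref{prop:relax}: it suffices to exhibit cost matrices $\mathbf{C}_x,\mathbf{C}_z,\mathbf{C}_y$ whose induced Gibbs kernels $\mathbf{K}_x=\exp(-\mathbf{C}_x/\varepsilon)$, $\mathbf{K}_z=\exp(-\mathbf{C}_z/\varepsilon)$, $\mathbf{K}_y=\exp(-\mathbf{C}_y/\varepsilon)$ satisfy the entrywise domination $\mathbf{K}_x\mathbf{K}_z\mathbf{K}_y\le\mathbf{K}$, where $\mathbf{K}=\exp(-\mathbf{C}/\varepsilon)$ with $\mathbf{C}[i,j]=\|\x_i-\y_j\|_p^p$. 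Once this holds, Proposition~\ref{prop:relax} gives $\varepsilon\,\mathrm{KL}(\mathbf{P}\|\mathbf{K})\le \varepsilon(\mathrm{KL}(\mathbf{P}_x\|\mathbf{K}_x)+\mathrm{KL}(\mathbf{P}_z\|\mathbf{K}_z)+\mathrm{KL}(\mathbf{P}_y\|\mathbf{K}_y))+\varepsilon(\mathbf{H}(\mathbf{u}_z)+\mathbf{H}(\mathbf{v}_z))$ for every plan $\mathbf{P}$ of the factored form (\ref{form}). Taking $\mathbf{P}$ to be the minimizer associated with $\mathrm{OT}^L$, using $\mathrm{OT}_{\mathbf{C},\varepsilon}(\mu,\nu)=\min_{\mathbf{P}}\varepsilon\,\mathrm{KL}(\mathbf{P}\|\mathbf{K})$, and bounding $\mathbf{H}(\mathbf{u}_z)\le\log k_x$, $\mathbf{H}(\mathbf{v}_z)\le\log k_y$, this shows that the $\mathrm{OT}^L$ objective (up to the additive slack $\varepsilon\log(k_xk_y)$) upper-bounds the regularized transport cost, i.e.\ \alg minimizes an upper bound of (\ref{regot}).

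The core step is verifying the entrywise domination. Fix $i,j$; the $(i,j)$ entry of $\mathbf{K}_x\mathbf{K}_z\mathbf{K}_y$ equals $\sum_{m,n}\exp\!\big(-(\mathbf{C}_x[i,m]+\mathbf{C}_z[m,n]+\mathbf{C}_y[n,j])/\varepsilon\big)$. With the stated choice $\mathbf{C}_x[i,m]=3^{p-1}\|\x_i-\z^x_m\|_p^p$, $\mathbf{C}_z[m,n]=3^{p-1}\|\z^x_m-\z^y_n\|_p^p$, $\mathbf{C}_y[n,j]=3^{p-1}\|\z^y_n-\y_j\|_p^p$, I combine the triangle inequality for $\|\cdot\|_p$ (applied twice) with the power-mean inequality $(a+b+c)^p\le 3^{p-1}(a^p+b^p+c^p)$, valid for $p\ge1$ by convexity of $t\mapsto t^p$, taking $a=\|\x_i-\z^x_m\|_p$, $b=\|\z^x_m-\z^y_n\|_p$, $c=\|\z^y_n-\y_j\|_p$. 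This yields $\|\x_i-\y_j\|_p^p\le \mathbf{C}_x[i,m]+\mathbf{C}_z[m,n]+\mathbf{C}_y[n,j]$ for all $m,n$ — which is precisely why the factor $3^{p-1}$ appears in the corollary's cost matrices. Hence every summand is at most $\exp(-\|\x_i-\y_j\|_p^p/\varepsilon)=\mathbf{K}[i,j]$, and $(\mathbf{K}_x\mathbf{K}_z\mathbf{K}_y)[i,j]\le k_xk_y\,\mathbf{K}[i,j]$.

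It remains to remove the combinatorial factor $k_xk_y$, and this is where "sufficiently small $\varepsilon$'' enters, and also where I expect the main difficulty. Writing $\delta_{ijmn}:=\mathbf{C}_x[i,m]+\mathbf{C}_z[m,n]+\mathbf{C}_y[n,j]-\|\x_i-\y_j\|_p^p\ge0$, we have $(\mathbf{K}_x\mathbf{K}_z\mathbf{K}_y)[i,j]=\mathbf{K}[i,j]\sum_{m,n}\exp(-\delta_{ijmn}/\varepsilon)$, so we need $\sum_{m,n}\exp(-\delta_{ijmn}/\varepsilon)\le1$. Whenever the relevant $\delta_{ijmn}$ are strictly positive — which can fail only in the degenerate configuration where both the triangle and the power-mean inequalities above are simultaneously tight, i.e.\ an anchor pair sits equispaced on the segment $[\x_i,\y_j]$ — one may choose $\varepsilon<\min_{\delta_{ijmn}>0}\delta_{ijmn}/\log(k_xk_y)$, so that $\sum_{m,n}\exp(-\delta_{ijmn}/\varepsilon)\le k_xk_y\exp(-\min\delta/\varepsilon)\le1$, giving the exact domination $\mathbf{K}_x\mathbf{K}_z\mathbf{K}_y\le\mathbf{K}$ required in the first paragraph. (If one does not wish to exclude the degenerate case, the $\varepsilon\log(k_xk_y)$ term is simply carried along and vanishes as $\varepsilon\to0$.) Thus the plan's hard point is not the algebra but quantifying this positive slack uniformly over the indices and the relevant anchor locations, so that shrinking $\varepsilon$ genuinely beats the count $k_xk_y$.
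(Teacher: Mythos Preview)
Your approach is essentially the same as the paper's: verify the kernel condition $\mathbf{K}_x\mathbf{K}_z\mathbf{K}_y\le\mathbf{K}$ of Proposition~\ref{prop:relax} by combining the Minkowski (triangle) inequality with the power-mean/Jensen inequality $(a+b+c)^p\le 3^{p-1}(a^p+b^p+c^p)$, and then argue that the resulting log-sum-exp (the paper calls it the soft-min) is at least $\|\x_i-\y_j\|_p^p$ once $\varepsilon$ is small. The paper's proof stops there; you are in fact more careful in flagging the degenerate equality case and the need for the slack $\delta_{ijmn}$ to be bounded away from zero uniformly over indices and anchor positions---a point the paper's proof does not address. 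Your parenthetical fallback (carrying the $\varepsilon\log(k_xk_y)$ term and letting it vanish) is a clean way to sidestep that issue and is arguably more honest than the paper's ``sufficiently small $\varepsilon$'' claim.
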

Corollary (\ref{cor:relate}) provides natural costs for \alg to be posed as a relaxation to a \ot problem with $L^p$ norm. More generally, finding the optimal cost functions that obey (\ref{kercond}) and minimize the gap in the inequality in Proposition \ref{prop:relax} is outside the scope of this work but would be an interesting topic for future investigation.

%######### FIGURE 2
\begin{figure*}[t!]
\centering
\subfloat{\includegraphics[width=.5\textwidth]{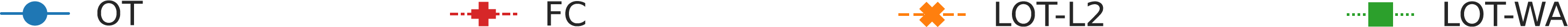}}
%\subfloat{\includesvg[height=0.145\textwidth]{gmm_rotation_acc.svg}}
%\subfloat{\includesvg[height=0.145\textwidth]
%   {gmm_outlier_acc.svg}}
%\subfloat{\includesvg[height=0.145\textwidth]
%   {gmm_dimension_acc.svg}}
%\subfloat{\includesvg[height=0.145\textwidth]
%   {gmm_unb_acc.svg}}
%\subfloat{\includesvg[height=0.145\textwidth]
%   {gmm_rank_acc.svg}}
%\vspace{-1.0mm}
%\centering
    \setcounter{subfigure}{0}
    \subfloat[][\footnotesize{Rotation angle}]{\includegraphics[width=0.2\textwidth]{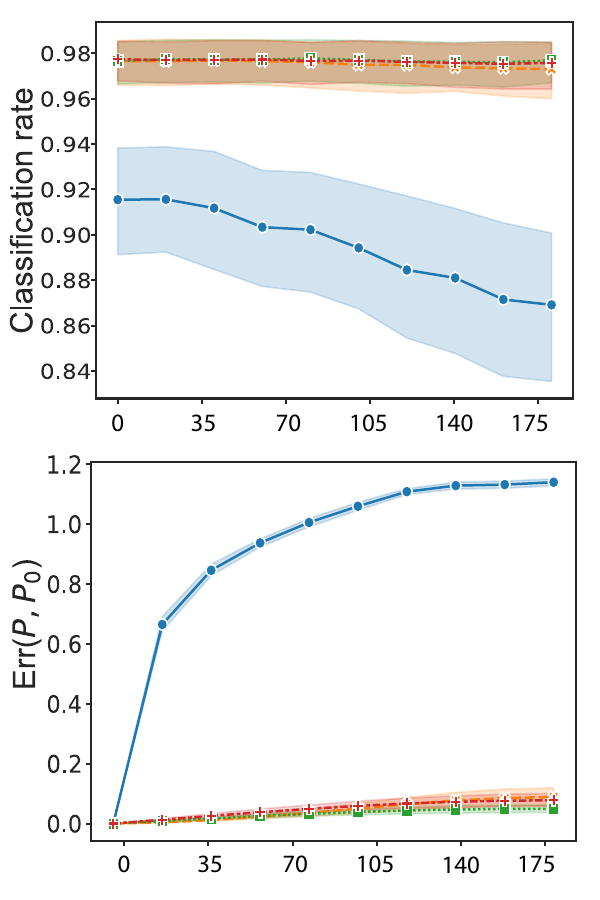}\vspace{-3mm}}
      \subfloat[][\footnotesize{Outlier rate}]{\includegraphics[width=0.2\textwidth]
   {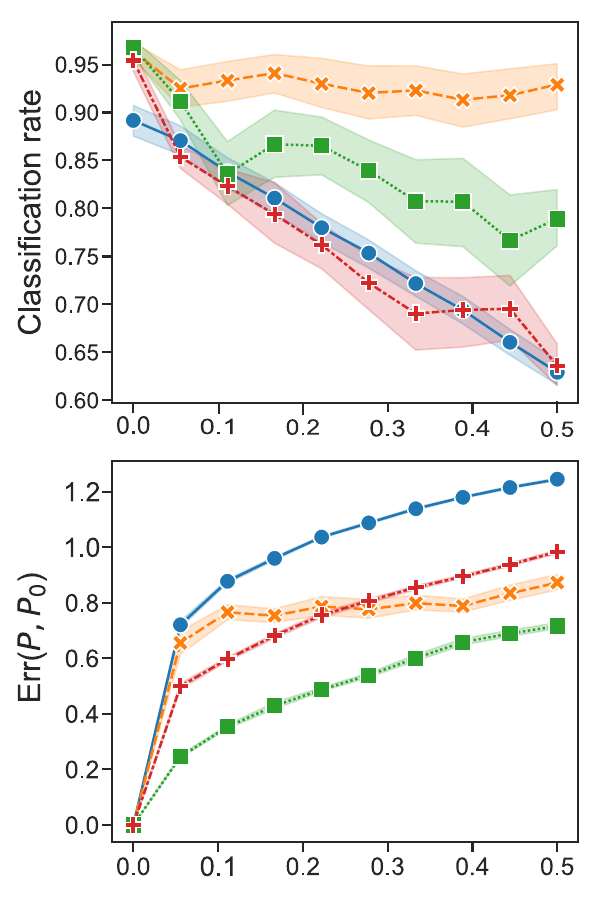}\vspace{-3mm}}
   \subfloat[][\footnotesize{Dimensionality}]{\includegraphics[width=0.2\textwidth]
   {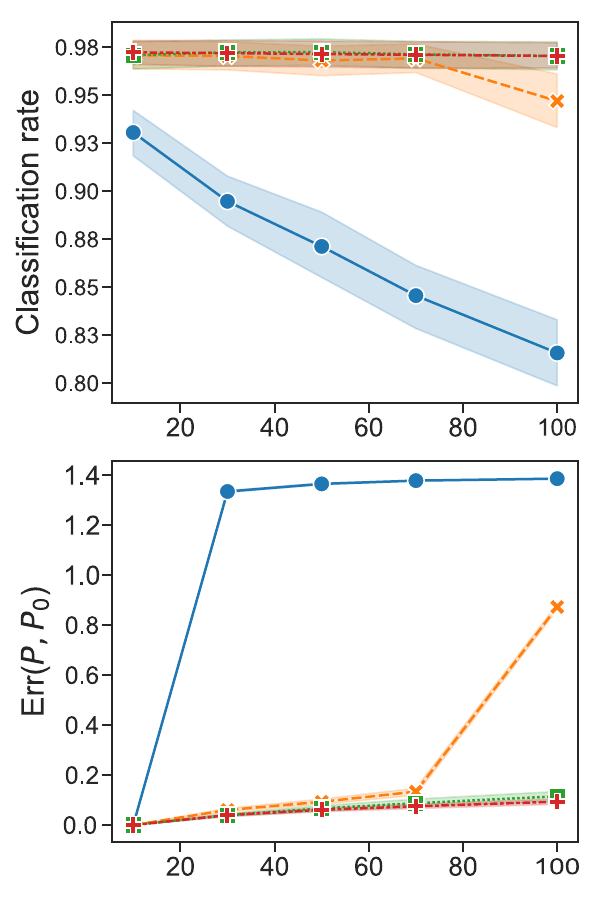}\vspace{-3mm}}
    \subfloat[][\footnotesize{Cluster mismatch}]{\includegraphics[width=0.2\textwidth]
   {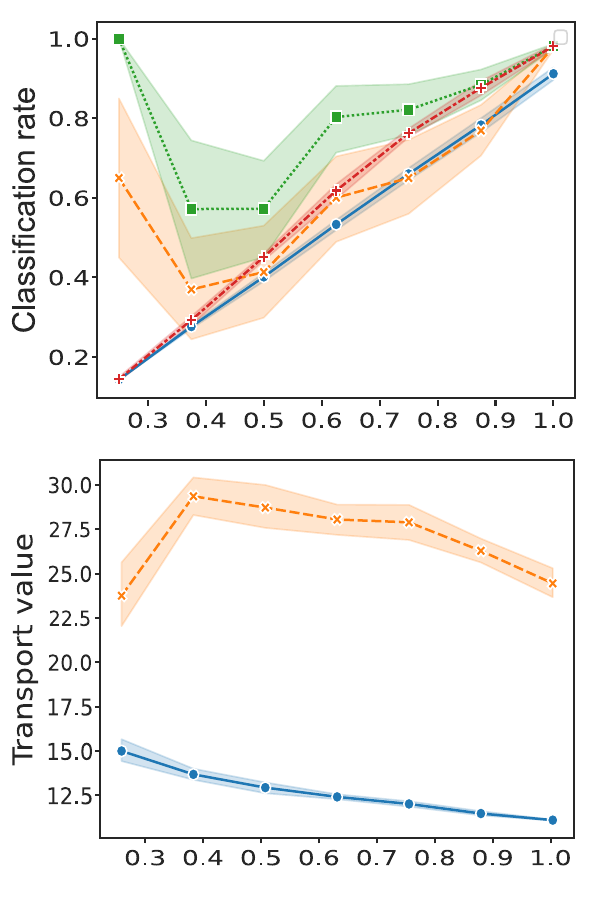}\vspace{-3mm}}
     \subfloat[][\footnotesize{Transport rank}]{\includegraphics[width=0.2\textwidth]
   {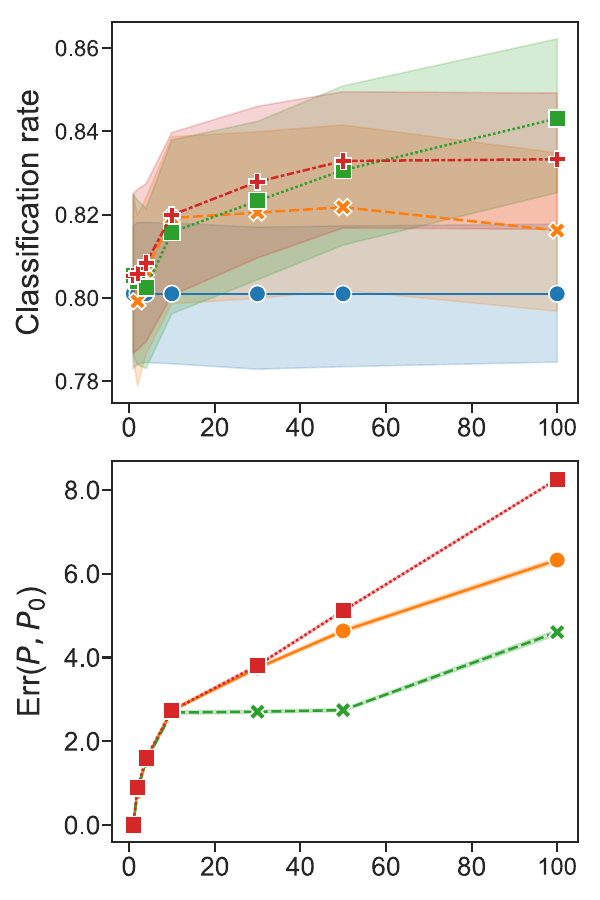}\vspace{-3mm}}
\caption{\footnotesize{{\bf \em Results on Gaussian mixture models.} In (a), we apply a rotation between the source and target, in (b) we add outliers, in (c) we vary the ambient dimension, in (d) the target is set to have $8$ components, and we vary the number of components in the source to simulate source-target mismatch, in (e) we fix the rank to 10 and vary the number of factors (anchors) used in the approximation. 
Throughout, we simulate data according to a GMM and evaluate performance by measuring the classification accuracy (top) and computing the deviation between the transport plans before and after the perturbations with respect to the Fr\"obenius norm (bottom).} \label{fig:gmm}}
\end{figure*}

{\bf Sampling complexity:}~
Below we analyze \alg from a statistical point of view. Specifically, we bound the sampling rate of $\rm{OT}^L$ in Def. \ref{def1} when the true distributions $\mu$ and $\nu$ are estimated by their empirical distributions.
\begin{proposition}
\label{prop:sampling}
Suppose $X$ and $Y$ have distributions $\mu$ and $\nu$ supported on a compact region $\Omega$ in $\mathbb{R}^d$, the cost functions $c_x(\cdot,\cdot)$ and $c_y(\cdot,\cdot)$ are defined as the squared Euclidean distance, and $\hat{\mu}$, $\hat{\nu}$ are empirical distributions of $n$ and $m$ i.i.d. samples from $\mu$ and $\nu$, respectively. If the spaces for latent distributions are equal to ${\cal{Z}}_x={\cal{Z}}_y=\mathbb{R}^d$, and there are $k_x$ and $k_y$ anchors in the source and target, respectively, then with probability at least $1-\delta$,  
\begin{align}%\label{prop:sample_eq}
    {\rm Err}  ~ \leq~ &C\sqrt{\frac{k_{max}^3d \log k_{max}+ \log (2/\delta)}{N}},
\end{align}
where ${\rm Err} = |{\rm OT}^L(\mu,\nu) - {\rm OT}^L(\hat{\mu},\hat{\nu})|$, $k_{max}=\max\{k_x,k_y\}$, $N= \min\{n,m\}$ and $C\geq 0$ is some constant not depending on $N$.
\end{proposition}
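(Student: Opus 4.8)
The plan is to decompose the error $Err = |{\rm OT}^L(\mu,\nu) - {\rm OT}^L(\hat\mu,\hat\nu)|$ using the triangle-like structure of the three-stage objective in Definition~\ref{def1}, and then control each stage by a concentration argument for empirical Wasserstein distances. First I would observe that $\rm{OT}^L$ is defined as an infimum over $\mu_z,\nu_z$ of a sum of three $\rm{OT}$ terms, so standard inf-of-sum manipulations give
\begin{align*}
|{\rm OT}^L(\mu,\nu) - {\rm OT}^L(\hat\mu,\hat\nu)| \le \sup_{\mu_z,\nu_z}\big\{&|{\rm OT}_{\mathbf{C}_x}(\mu,\mu_z)-{\rm OT}_{\mathbf{C}_x}(\hat\mu,\mu_z)|\\ +&|{\rm OT}_{\mathbf{C}_y}(\nu_z,\nu)-{\rm OT}_{\mathbf{C}_y}(\nu_z,\hat\nu)|\big\},
\end{align*}
since the middle term ${\rm OT}_{\mathbf{C}_z}(\mu_z,\nu_z)$ does not depend on the data. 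Then, using the fact that $\rm{OT}$ with a squared-Euclidean cost is $1$-Lipschitz in each argument with respect to ${\cal W}_2^2$ (or directly ${\cal W}_1$ under a diameter bound on $\Omega$), each term is bounded by something like ${\cal W}_2^2(\mu,\hat\mu)$ uniformly in the anchor distribution — but crucially the relevant $\mu_z$ is supported on at most $k_x$ points, so the $\rm{OT}$ term only ``sees'' $\mu$ through how it is transported onto a $k_x$-point target.

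The key technical step, and the reason the bound scales with $k_{max}^3 d\log k_{max}$ rather than the dimension-cursed rate $n^{-1/d}$, is to exploit that transporting to a $k$-point measure is equivalent to a $k$-means-type quantization problem. I would argue that $\inf_{\mu_z\in\Delta^{k_x}_{\mathbb R^d}}{\rm OT}_{\mathbf{C}_x}(\mu,\mu_z)$ equals (a version of) the optimal $k_x$-means cost of $\mu$, and then invoke a uniform concentration / covering-number bound for the class of functions $x\mapsto \min_{m\le k}\|x-z_m\|^2$ indexed by the $k$ centers $\{z_m\}\subset\Omega$. Since $\Omega$ is compact, this class has finite pseudo-dimension on the order of $k d$ (up to logs), so a standard Rademacher/uniform-deviation argument over this class yields a bound of order $\sqrt{(kd\log k + \log(1/\delta))/n}$ with probability $1-\delta$. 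Combining the source side ($k_x$ centers, $n$ samples) and the target side ($k_y$ centers, $m$ samples), taking $k_{max}$ and $N=\min\{n,m\}$ for the worst case, and absorbing constants (including the diameter of $\Omega$) into $C$, gives the claimed rate; the extra power of $k_{max}$ beyond the covering-number count presumably comes from a looser but cleaner bounding of the Lipschitz/stability constants of the nested ${\rm OT}$ problems.

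The main obstacle I anticipate is making the reduction ``$\inf_{\mu_z}{\rm OT}_{\mathbf{C}_x}(\mu,\mu_z) = $ optimal $k_x$-means cost'' rigorous and, more importantly, getting a \emph{two-sided, uniform} control: one needs not just that the population quantity concentrates, but that the \emph{same} near-optimal anchors work for both $\mu$ and $\hat\mu$, so that the infimum over $\mu_z$ (an adversarial choice) cannot blow up the deviation. This is handled by the uniform (sup over all $k$-point configurations) concentration bound rather than a pointwise one, but verifying the pseudo-dimension estimate for the min-of-quadratics class and tracking how the middle ${\rm OT}_{\mathbf{C}_z}$ term interacts when the optimal $(\mu_z,\nu_z)$ for the empirical problem differs from that of the population problem is the delicate part. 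A secondary subtlety is that the anchors are unrestricted (${\cal Z}_x={\cal Z}_y=\mathbb R^d$), so one must first argue the optimal anchors lie in (a bounded enlargement of) $\Omega$ — which follows because placing an anchor outside the convex hull of $\mathrm{supp}(\mu)$ is suboptimal for a squared-Euclidean cost — in order to apply the compactness-based covering bound.
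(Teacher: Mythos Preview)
Your plan is essentially the paper's proof. The paper uses the same ``cancel the middle term and control the two outer terms by uniform concentration of $\mathcal{W}_2^2(\cdot,\mu_z)$ over $k$-point measures $\mu_z$'' strategy; the only cosmetic difference is that the paper bounds one direction by plugging the empirical optimizers $(\hat\mu_z^*,\hat\nu_z^*)$ into the population objective (and invokes symmetry for the other direction), whereas your $|\inf f-\inf g|\le\sup|f-g|$ decomposition handles both sides at once --- these are equivalent since the concentration lemma is uniform in $\mu_z$ anyway. Two remarks: (i) the paper does not reprove the uniform-deviation bound you sketch but simply cites it as Theorem~4 of \cite{forrow2019statistical}, so your pseudo-dimension argument for the min-of-quadratics class is correct in spirit but unnecessary here; (ii) your first-paragraph detour through Lipschitzness in $\mathcal{W}_2^2(\mu,\hat\mu)$ is a red herring (it would reintroduce the $n^{-1/d}$ curse) and should be dropped --- you correctly pivot away from it, and the paper never goes there.
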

\vspace{-3mm}
As shown in \cite{weed2019sharp}, the general sampling rate of a plug-in \ot scales with $N^{\frac{1}{d}}$, suffering from the ``curse of dimensionality''. On the other hand, as evidence from \cite{forrow2019statistical}, structural optimal transport paves ways to overcome the issue. In particular, \alg achieves $N^{-\frac{1}{2}}$ scaling by regularizing the transport rank.

{\bf Time complexity:} 
We can bound the time complexity as $O( T_{\rm i} + T_{\rm{bcd}}(T_{\rm k} + T_{\rm au} + T_{\rm pu}))$, where $T_{\rm i}$ is the initialization complexity, e.g., if we use k-means, then it equals to $O(nk_xdT_x + mk_ydT_y)$ where $T_x$ and $T_y$ are the iteration numbers of the Floyd algorithm applied to the source and target, respectively, 
$T_{\rm bcd}$ is the total number of iterations of block-coordinate descent, $T_{\rm k} = O(nk_x + mk_y)$ is the computation time for updating the kernels,  $T_{\rm au} = O((k_x + k_y)^3 + d(nk_x + mk_y))$ is the complexity of updating anchors, and $T_{\rm pu}$ is the complexity for updating plans. Because our updates are based on  iterative Bregman projections similar to the Sinkhorn algorithm, it has complexity comparable to OT. Therefore, the overall complexity of \alg is approximately $T_{\rm bcd}$ times of OT, assuming $n,~m\geq d(k_x+k_y)$. Empirically, $T_{\rm bcd}$ depends on the structure of data, but we observed that it is usually under $20$. Note that the same applies to \url{FC} with $k_x=k_y=k$. 
In Figure \ref{fig:time}, we complement our analysis by simulating a comparison of 
the time complexity for \url{LOT} and \url{FC} vs. OT in the setting of a 7-component Gaussian mixture model.
We can see the compute time of \url{LOT} scales similarly to \url{FC}. %\henry{finish}%We plan to provide this analysis in the revised manuscript.

\paragraph{Transport cost:} We also compare the transport loss returned by \url{LOT} (blue), \url{FC} (orange), and OT (green) as a function of the number of anchors in Figure \ref{fig:time}. For a fair comparison, we considered a balanced scenario where 7-component GMMs generate the source and target. The anchors of the source and target are chosen to be equal for \alg. The result shows that the losses are indeed higher for \url{LOT} and \url{FC} compared to OT but are fairly insensitive with to the chosen number of anchors. Moreover, we find that \alg has a slightly lower loss compared to \url{FC} even when we choose the number of source and target anchors to be equal.

\begin{figure}[t!]
\centering
    %\subfloat{\includegraphics[height=0.2\textwidth]{san}}
    \setcounter{subfigure}{0}
   \subfloat{\includegraphics[height=.26\textwidth]
   {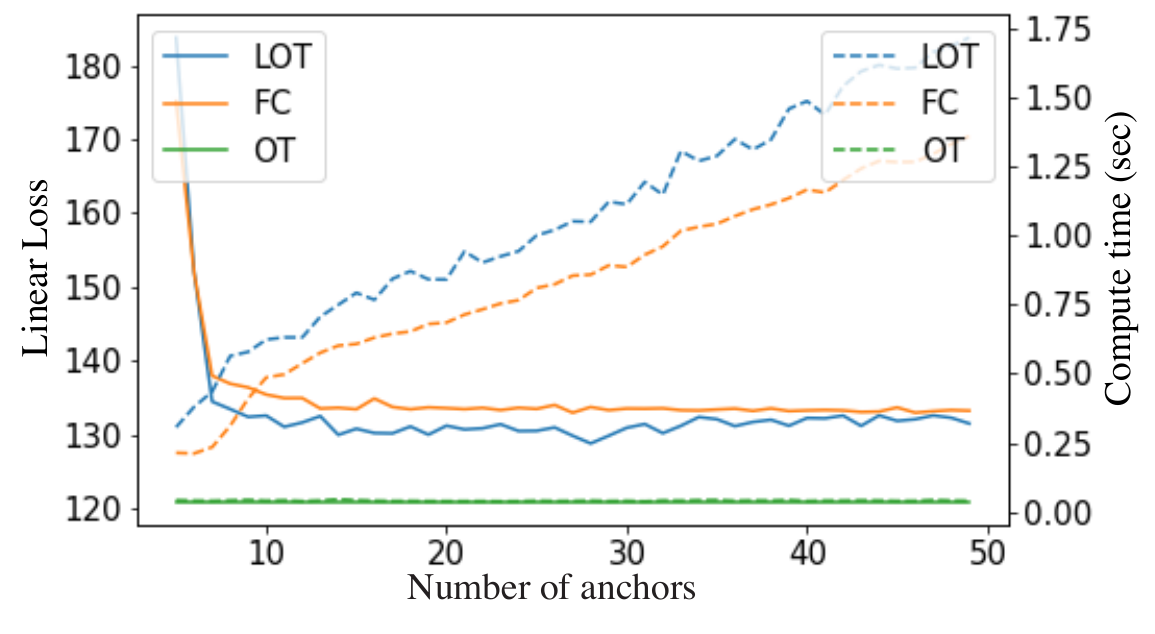}}
    \caption{\footnotesize {\bf{\em Comparisons of the time complexity and loss.}} The figure compares the time complexity (dashed) and linear loss (solid) of \url{LOT}, \url{FC}, and OT in the setting of the 7-component GMM model. \label{fig:time}\vspace{-5mm}}
\end{figure}

%The time complexity of \alg can be bounded as $O( T_{\rm{bcd}}\times T_{\rm iProj})$, where $T_{\rm{bcd}}$ and $T_{\rm iProj}$ are the numbers of iterations for block-coordinate descent and the iterative Bregman projection, respectively. As the Sinkhorn iteration requires a time complexity $\simeq O(T_{\rm iProj})$, the overall complexity of \alg is thus approximately $T_{\rm{bcd}}$-times of that of  entropy-regularized OT. Empirically, $T_{\rm{bcd}}$ depends on the structure of data, but we observed that it is usually under $20$. We defer the detailed analysis to Appendix \ref{supp:sec:time}.
\section{Experiments}\label{sec:experiment}
In this section, we conduct empirical investigations. Details of hyperparameter tuning can be found in Appendix \ref{sec:hyptun}.
\label{sec:exp}

{\bf E1) Testing robustness to various data perturbations:}~To better understand how different types of domain shift impact the transport plans generated by our approach, we considered different transformations between the source and target. To create synthetic data for this task, we generated multiple clusters/components using a k-dimensional Gaussian with random mean and covariance sampled from a Wishart distribution, randomly projected to a 5-dimensional subspace. The source and target are generated independently: we randomly sample a fixed number of points according to the true distribution for each cluster. We compared the performance of the \alg variants proposed in Section \ref{sec:costs}: \lot (orange curves) and \lotwa (green curves) with baselines OT  (blue curves) and rank regularized factored coupling (\fc\hspace{-0.95mm}) \cite{graf2007foundations} (red curves) in terms of their (i) classification rates and (ii) deviation from the original transport plan without perturbations, which we compute as $ \text{Err}({\bf P} - {\bf P}_0) = \|{\bf P} - {\bf P}_0 \|_F / \| {\bf P}_0 \|_F$, where ${\bf P}_0$ is the transport plan obtained before perturbations. The results are averaged over 20 runs, and a 75\% confidence interval is used. See Appendix \ref{sec:detailexp2} for further details.

When compared with OT, both our method and \fc provide more stable class recovery, even with significant amounts of perturbations (Figure~\ref{fig:gmm}).  When we examine the error term in the transport plan, we observe that, in most cases, the OT plan deviates rapidly, even for small amounts of perturbations. Both \fc and \alg appear to have similar performances across rotations while OT's performance decreases quickly. In experiment (b), we found that both \alg variants provide substantial improvements on classification subject to outliers, implying the applicability of \alg for noisy data. In experiment (c), we study \alg in the high-dimensional setting; we find that \lotwa behaves similarly to \fc with some degradation in performance after the dimension increases beyond $70$. Next, in experiment (d), we fix the number of components in the target to be $10$, while varying the number in the source from 4 to 10. In contrast to the outlier experiment in (b),  \lotwa shows more resilience to mismatches between the source and target. At the bottom of plot (d), we show the 2-Wasserstein distance (blue) and latent Wasserstein discrepancy (orange) defined in Proposition \ref{prop1}. This shows that the latent Wasserstein discrepancy does indeed provide an upper bound on the 2-Wasserstein distance. Finally, we look at the effect of transport rank on \alg and \fc in (e). The plot shows that the slope for \alg is flatter than \fc while maintaining similar performances. 

{\bf E2) Domain adaptation application:}~In our next experiment, we used \alg to correct for domain shift in a neural network that is trained on one dataset but underperforms on a new but similar dataset (Table~\ref{table:table1}, Figure~\ref{fig:mnist}).
MNIST and USPS are two handwritten digits datasets that are semantically similar but that have different pixel-level distributions and thus introduce domain shift (Figure~\ref{fig:mnist}a).
We train a multi-layer perceptron (MLP) on the training set of the MNIST dataset, freeze the network, and use it for the remaining experiments. The classifier achieves 100\% training accuracy and a 98\% validation accuracy on MNIST but only achieves  79.3\% accuracy on the USPS validation set. We project MNIST's training samples in the classifier's output space (logits) and consider the 10D projection to be the target distribution. 
Similarly, we project images from the USPS dataset in the network's output space to get our source distribution.
We study the performance of \alg in correcting the classifier's outputs and compare with \fc, k-means OT (\kot\hspace{-0.95mm}) \cite{forrow2019statistical}, and subspace alignment (SA) \cite{fernando2013unsupervised}.

In Table~\ref{table:table1}, we summarize the results of our comparisons on the domain adaptation task (MNIST-USPS). Our results suggest that both \fc and \alg perform pretty well on this task, with \alg beating \fc by 2\% in terms of their final classification accuracy. We also show that \alg does better than naive \kot. In Figure~\ref{fig:mnist}a, we use Isomap to project the distribution of USPS images as well as the alignment results for \alg\hspace{-1mm}, \fc, and OT. For both \alg and \fc, we also display the anchors; note that for \alg, we have two different sets of anchors (source, red; target, blue). This example highlights the alignment of the anchors in our approach and contrasts it with that of \fc. 

\begin{table}[t!]
\caption{{\footnotesize {\bf \em{Results for concept drift and domain adaptation for handwritten digits.}}  The classification accuracy and L2-error are computed after transport for MNIST to USPS (left) and coarse dropout (right). Our method is compared with the accuracy before alignment (Original), entropy-regularized OT, k-means plus OT (\kot\hspace{-1mm} ), and subspace alignment (SA). % and TCA.
\label{table:table1}}}
\begin{center}
\begin{tabular}{l|c|cc}
         & MNIST-USPS  & \multicolumn{2}{c}{MNIST-DU}         \\
         & Accuracy        & Accuracy      & L2 error               \\ \hline
Original & 79.3            & 72.6          & 0.72                   \\
OT       & 76.9            & 61.5          & 0.71                   \\
\kot      & 79.4            & 60.9          & 0.73                   \\
\url{SA}       & 81.3            & 72.3          & -                      \\
%TCA      & 71.4            & 42.3          & -                      \\
\fc       & 84.1            & 67.2          & 0.59                   \\
\lotwa     & \textbf{86.2}   & \textbf{77.7} & \textbf{0.56}
\end{tabular}
\end{center}
\vspace{-5mm}
\end{table} 

\begin{figure}[t!]

    \begin{subfigure}[t]{\textwidth}
    \subfloat{\includegraphics[height=0.966\textwidth]
   {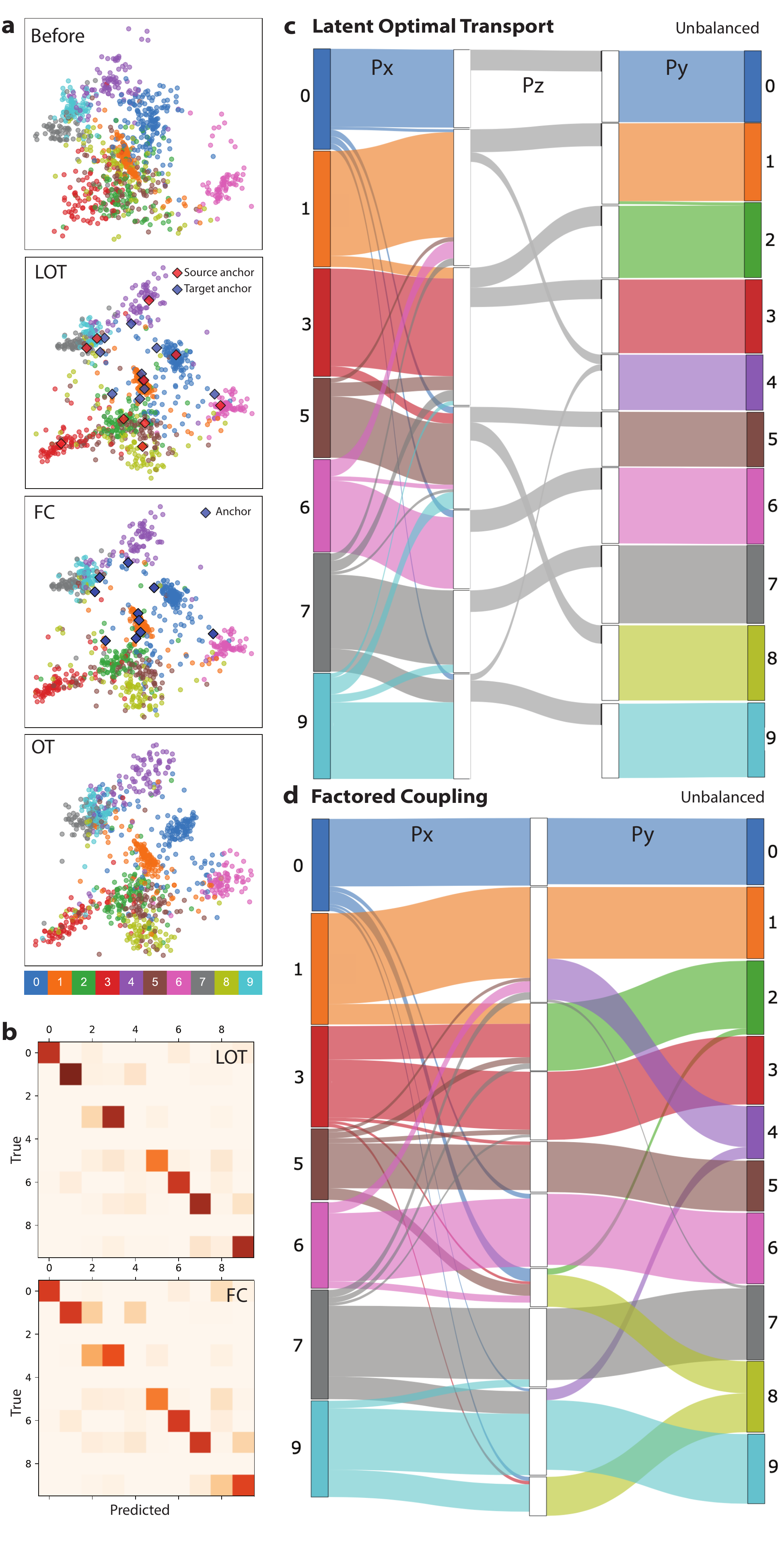}}
   \end{subfigure}
   
    \caption{\footnotesize {{\bf \em Visualization of results on handwritten digits and examples of domain shift.} (a) 2D projections of representations formed in deep neural network before (top) and after different alignment methods (\alg, \fc, OT). (b) confusion matrices for \alg (top) and \fc (bottom) after alignment. The transport plans are visualized for \alg (c) and \fc (d) in the unbalanced case.} \label{fig:mnist}}
\end{figure}

Taking inspiration from studies in self-supervised learning \cite{doersch2015unsupervised,he2020momentum} that use different transformations of an input image (e.g., masking parts of the image) to build invariances into a network's representations, here we 
ask how augmentations of the images introduce domain shift and whether our approach can correct/identify it. 
To test this, we apply coarse dropout on test samples in MNIST and feed them to the classifier to get a new source distribution. We do this in a balanced (all digits in source and target) and an unbalanced setting (2, 4, 8 removed from source, all digits in target). The results of the unbalanced dropout are summarized in Table~\ref{table:table1} (MNIST-DU), and the other results are provided in Table~\ref{table:table1_extended} in the Appendix. In this case, we have the features of the testing samples pre-transformation, and thus, we can compare the transported features to the ground truth features in terms of their point-to-point error (L2 distance). In the unbalanced case, we observe even more significant gaps between \fc and \alg\hspace{-1mm}, as the source and target datasets have different structures. To quantify these different class-level errors, we compare the confusion matrices for the classifier's output after alignment (Figure~\ref{fig:mnist}b). By examining the columns corresponding to the removed digits, we see that \fc is more likely to misclassify these images. Our results suggest that \alg has comparable performance with \fc in a balanced setting and outperforms \fc in an unbalanced case.

\begin{figure}[t!]
\centering
    %\subfloat{\includegraphics[height=0.2\textwidth]{san}}
    \setcounter{subfigure}{0}
   \subfloat{\includegraphics[width=0.49\textwidth]
   {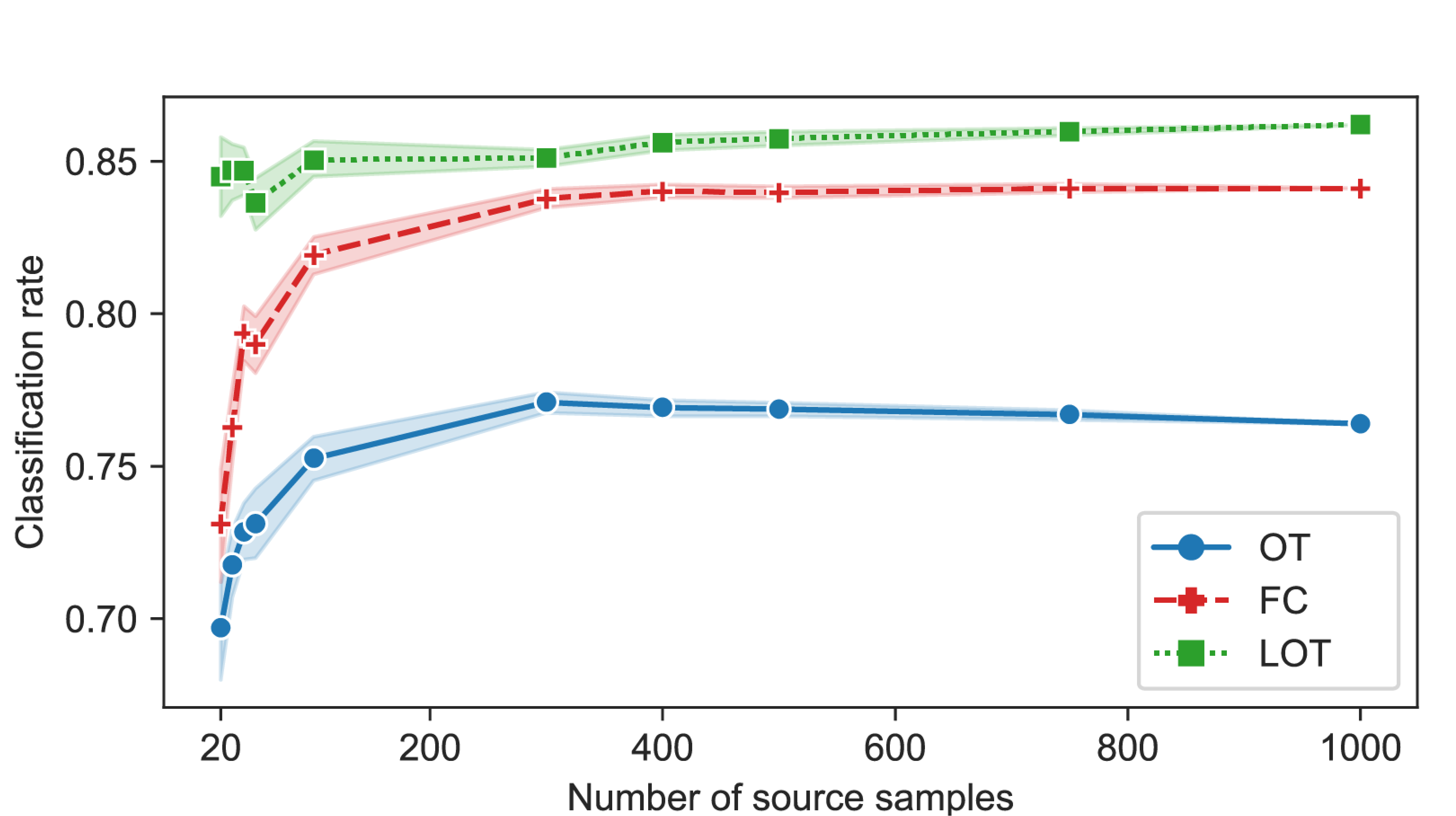}}
    \caption{\footnotesize {\bf{\em \alg provides robust alignment, even when given very few samples.}}  We compare our method with OT and \fc on the MNIST-USPS domain adaptation task when different numbers of USPS samples are available. Reported classification rates are averaged over 50 random sets.
    \label{fig:robustness}\vspace{-5mm}}
\end{figure}

The decomposition in both \alg and \fc allows us to visualize transport between the source, anchors, and the target (Figure~\ref{fig:mnist}c-d, 
\ref{fig:allsankey}). 
This visualization highlights the interpretability of the transport plans learned via our approach, with the middle transport plan ${\bf P}_z$ providing a concise map of interactions between class manifolds in the unbalanced setting. With \alg (Figure~\ref{fig:mnist}c), we find that each source anchor is mapped to the correct target anchor, with some minor interactions with the target anchors corresponding to the removed digits. In comparison, \fc (Figure~\ref{fig:mnist}d, \ref{fig:allsankey}) has more spurious interactions between source, anchors, and target.

{\bf E3) Robustness to sampling:}~We examined the robustness to sampling for the MNIST to USPS example (Figure~\ref{fig:robustness}). In this case, we find that \alg has a stable alignment as we subsample the source dataset, with very little degradation in classification accuracy, even as we reduce the source to only 20 samples. We also observe a significant gap between \alg and other approaches in this experiment, with more than a 10\% gap between \fc and \alg when very few source samples are provided. Our results demonstrate that \alg is robust to subsampling, providing empirical evidence for Proposition~\ref{prop:sampling}.

\section{Discussion}\label{sec:discussion}
In this paper, we introduced \alg\hspace{-1mm}, a new form of structured transport leading to an approach for jointly clustering and aligning data. We provided an efficient optimization method to solve for the transport, and studied its statistical rate via theoretical analysis and robustness to data perturbations with empirical experiments. In the future, we would like to explore the application of \alg to non-Euclidean spaces, and incorporate metric learning into our framework.

\section*{Acknowledgements}

This project is supported by the NIH award 1R01EB029852-01, NSF awards IIS-1755871 and CCF-1740776, as well as generous gifts from the Alfred Sloan Foundation and the McKnight Foundation.

%\newpage
\bibliographystyle{icml2021}
\bibliography{cbo.bib}

\onecolumn
\newpage
\section*{\Large Appendix}

\renewcommand{\thefigure}{S\arabic{figure}}
\setcounter{figure}{0}
\setcounter{table}{0}
\renewcommand{\thetable}{S\arabic{table}}%

\appendix

\section{Detailed proofs}\label{sec:suppaproof}
\subsection{Proof of Proposition \ref{prop1}}
\begin{repproposition}{prop1}
Suppose the latent spaces ${\cal{Z}}_x={\cal{Z}}_y$ are the same as the original data spaces ${\cal{X}}={\cal{Y}}$, and the cost matrices are defined by $\mathbf{C}_x[a,b] = \mathbf{C}_z[a,b]= \mathbf{C}_y[a,b]= d(a,b)^p$, where $p\geq1$ and $d$ is some distance function.
If we define the latent Wasserstein discrepancy as ${\cal{W}}^L_p:={(\rm{OT}^L)}^{1/p}$, then there exist $\kappa>0$ such that, for any $\mu$, $\nu $ and $\zeta$ having latent distributions of support sizes up to $k$, the discrepancy satisfies,
\begin{itemize}
    \item ${\cal{W}}_p^L(\mu,\nu)\geq 0,$ (Non-negativity)
    \item ${\cal{W}}_p^L(\mu,\nu) = {\cal{W}}_p^L(\nu,\mu),$ (Symmetry)
    \item $\exists \kappa >0$ such that ${\cal{W}}_p^L(\mu,\nu) \leq \kappa \left({\cal{W}}_p^L(\mu,\zeta)+{\cal{W}}_p^L(\zeta,\nu)\right),$ (Quasi-Triangle inequality)
\end{itemize}
\end{repproposition}
\begin{proof}
The first two properties are easy to verify by non-negativity and symmetry of the Wasserstein distance. Hence we will prove the last property.
For brevity, we denote ${\cal{Z}}={\cal{Z}}_x={\cal{Z}}_y$.
Under the assumptions on the cost matrices, we have ${\rm OT}_{\mathbf{C}}={\cal{W}}^p_p$. Hence there exist latent distributions $\mu_z^*,\zeta_z^*,\zeta_z'^*,\nu_z^*\in\Delta^{k}_{{\cal{Z}}}$ satisfy,
\begin{align}
            {(\rm{OT}^L)}(\mu,\zeta) &= {\cal{W}}^p_p(\mu,\mu_z^*)+{\cal{W}}^p_p(\mu_z^*,\zeta_z^*) + {\cal{W}}^p_p(\zeta_z^*,\zeta),\nonumber\\
    {(\rm{OT}^L)}(\zeta,\nu) &= {\cal{W}}^p_p(\zeta,{\zeta'}^*_z)+{\cal{W}}^p_p({\zeta'}^*_z,\nu_z^*) + {\cal{W}}^p_p(\nu_z^*,\nu).\nonumber
\end{align}
As $d$ is a distance function, we know that ${\cal{W}}_p$ is a metric \cite{peyre2019computational} that satisfies the triangle inequality:
\begin{align}\label{tri}
    {\cal{W}}_p(\mu,\zeta_z'^*)\leq {\cal{W}}_p(\mu,\mu_z^*) + {\cal{W}}_p(\mu_z^*,\zeta_z^*)+{\cal{W}}_p(\zeta^*_z,\zeta) + {\cal{W}}_p(\zeta,{\zeta'}^*_z).
\end{align}
On the other hand, Jensen's inequality tells us that $(a+b+c+d)^p\leq 4^{p-1}(a^p+b^p+c^p+d^p)$ for any non-negative $a,b,c,d$. We apply this inequality to (\ref{tri}) and get,
\begin{align}\label{jentri}
     {\cal{W}}^p_p(\mu,{\zeta'}^*_z)\leq 4^{p-1}({\cal{W}}^p_p(\mu,\mu_z^*) + {\cal{W}}^p_p(\mu_z^*,\zeta_z^*)+{\cal{W}}^p_p(\zeta^*_z,\zeta) + {\cal{W}}^p_p(\zeta,{\zeta'}^*_z)).
\end{align}
Thus, 
\begin{align}
    &{\cal{W}}_p^L(\mu,\nu)=\left({(\rm{OT}^L)}(\mu,\nu)\right)^{\frac{1}{p}}\leq ({\cal{W}}^p_p(\mu,{\zeta'}^*_z) + {\cal{W}}^p_p({\zeta'}^*_z,\nu_z^*) +{\cal{W}}^p_p(\nu_z^*,\nu))^{\frac{1}{p}}\nonumber  \\
    \overset{(\ref{jentri})}{\leq} & 4^{1-\frac{1}{p}}({\cal{W}}^p_p(\mu,\mu_z^*) + {\cal{W}}^p_p(\mu_z^*,\zeta_z^*)+{\cal{W}}^p_p(\zeta^*_z,\zeta) + {\cal{W}}^p_p(\zeta,{\zeta'}^*_z)+{\cal{W}}^p_p({\zeta'}^*_z,\nu_z^*) + {\cal{W}}^p_p(\nu_z^*,\nu))^{\frac{1}{p}}\nonumber\\
    \leq & 4^{1-\frac{1}{p}}({\cal{W}}^p_p(\mu,\mu_z^*) + {\cal{W}}^p_p(\mu_z^*,\zeta_z^*)+{\cal{W}}^p_p(\zeta^*_z,\zeta))^{\frac{1}{p}} +( {\cal{W}}^p_p(\zeta,{\zeta'}^*_z)+{\cal{W}}^p_p({\zeta'}^*_z,\nu_z^*) + {\cal{W}}^p_p(\nu_z^*,\nu))^{\frac{1}{p}},\nonumber
\end{align}
where the last inequality follows from $(a+b)^{\frac{1}{p}}\leq a^{\frac{1}{p}}+b^{\frac{1}{p}}$ for any nonnegative $a,b$.
Choosing $\kappa =  4^{1-\frac{1}{p}}$ completes the proof.
\end{proof}

\subsection{Proof of Corollary \ref{cor:kmean}}
\begin{proof}
Without ambiguity the optimizations in the followings are over $\Delta_{{\cal{Z}}}^k$.
Recall the definition, \begin{align}
&\tilde{{\cal{W}}}_p^L(\mu,\nu):=\left(\left({\cal{W}}_p^L(\mu,\nu)\right)^p-\min\limits_{z_k}{\cal{W}}_p^p(\mu,z_k)-\min\limits_{z'_k}{\cal{W}}_p^p(\nu,z'_k)\right)^{1/p}\\
=&\left(\min\limits_{\mu_z,\nu_z}
\left({\cal{W}}_p^p(\mu,\mu_z)+{\cal{W}}_p^p(\mu_z,\nu_z)+{\cal{W}}_p^p(\nu_z,\nu)\right)
-\min\limits_{z_k}{\cal{W}}_p^p(\mu,z_k)-\min\limits_{z'_k}{\cal{W}}_p^p(\nu,z'_k)
\right)^{1/p}\\
= &\left(\left({\cal{W}}_p^p(\mu,\mu^*_z)-\min\limits_{z_k}{\cal{W}}_p^p(\mu,z_k)\right)+{\cal{W}}_p^p(\mu^*_z,\nu^*_z) + \left({\cal{W}}_p^p(\nu,\nu^*_z)-\min\limits_{z'_k}{\cal{W}}_p^p(\nu,z'_k)\right)\right)^{1/p}
\geq {\cal{W}}_p(\mu^*_z,\nu^*_z)\geq0\label{cor:ineq},
\end{align}
where $(\mu^*_z,\nu_z^*)\in \arg\min\limits_{\mu_z,\nu_z}
\left({\cal{W}}_p^p(\mu,\mu_z)+{\cal{W}}_p^p(\mu_z,\nu_z)+{\cal{W}}_p^p(\nu_z,\nu)\right)$. Observe that the three terms in (\ref{cor:ineq}) are non-negative, thus $\tilde{{\cal{W}}}_p^L(\mu,\nu)=0$ only if the followings are simultaneously satisfied,
\begin{align}
   &{\cal{W}}_p^p(\mu,\mu^*_z)=\min\limits_{z_k}{\cal{W}}_p^p(\mu,z_k),\\&{\cal{W}}_p^p(\nu,\nu^*_z)=\min\limits_{z'_k}{\cal{W}}_p^p(\nu,z'_k),\\
   &{\cal{W}}_p^p(\mu^*_z,\nu^*_z)=0.\label{ceneq}
\end{align}
The last condition implies $\mu^*_z=\nu_z^*=:\xi_z$, so the other conditions become ${\cal{W}}_p^p(\mu,\xi_z)=\min\limits_{z_k}{\cal{W}}_p^p(\mu,z_k)$ and ${\cal{W}}_p^p(\mu,\xi_z)=\min\limits_{z'_k}{\cal{W}}_p^p(\nu,z'_k)$. 
We recognize that $\min\limits_{z_k}{\cal{W}}_p^p(\mu,z_k)$ is a Wasserstein barycenter problem such that the barycenter supports on up to $k$ locations. So the conditions imply that the barycenters of $\mu$ and $\nu$ must coincide.
On the other hand, when $p=2$, \cite{peyre2019computational,ho2017multilevel} show that the solution $z_k$ to the barycenter problem is the distribution of k-means centroids of $\mu$ with a distribution proportional to the mass of clusters. The same conclusion also applies to $\nu$. Hence, the condition (\ref{ceneq})
shows that not only the centroids of k-means to $\mu$ and $\nu$ must be the same, but also the proportions of their corresponding clusters must be equal, which completes the proof of the corollary.
\end{proof}
\subsection{Proof of Proposition \ref{prop:relax}}
The proof of Proposition \ref{prop:relax} will rely on the following lemma.
\begin{lemma}\label{logsum}
(Log-sum inequality) : Let $x_m$ and $y_m$, $m=1,\dots, n$, be nonnegative sequences, then
\begin{align}
    \left(\sum_{m=1}^n x_m\right)\log\left(\frac{\sum_{m=1}^n x_m}{\sum_{m=1}^n y_m}\right)\leq \sum_{m=1}^n x_m\log\left(\frac{ x_m}{ y_m}\right).
\end{align}
\end{lemma}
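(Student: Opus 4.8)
The plan is to deduce the inequality from the convexity of the scalar function $f(t) := t\log t$ via a weighted Jensen inequality. First I would record that $f$ is convex on $[0,\infty)$ (taking $f(0)=0$ by the convention $0\log 0 = 0$), since $f''(t) = 1/t > 0$ on $(0,\infty)$ and $f$ is continuous at $0$. This convexity is the entire engine of the proof; everything else is bookkeeping to cast the target inequality into the shape that Jensen produces.

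Next I would reduce to the case where all $y_i > 0$ and $Y := \sum_i y_i > 0$, deferring the degenerate cases to the end. Setting the weights $\lambda_i := y_i / Y$, so that $\lambda_i \geq 0$ and $\sum_i \lambda_i = 1$, I would apply Jensen's inequality to $f$ at the points $t_i := x_i / y_i$:
\begin{align}
f\Big(\sum_i \lambda_i t_i\Big) \leq \sum_i \lambda_i f(t_i).
\end{align}
The key computation is that the argument on the left collapses: $\sum_i \lambda_i t_i = \sum_i (y_i/Y)(x_i/y_i) = (\sum_i x_i)/Y$. Writing $X := \sum_i x_i$, the left side becomes $(X/Y)\log(X/Y)$, while the right side is $\sum_i (y_i/Y)(x_i/y_i)\log(x_i/y_i) = (1/Y)\sum_i x_i \log(x_i/y_i)$. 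Multiplying the resulting inequality through by $Y > 0$ then yields exactly
\begin{align}
X \log\frac{X}{Y} \leq \sum_i x_i \log\frac{x_i}{y_i},
\end{align}
which is the claim.

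The only real care needed — and the main obstacle, such as it is — lies in the boundary cases where some entries vanish. If $x_i = 0$, the term $x_i\log(x_i/y_i)$ is taken to be $0$, consistent with $f(0)=0$, and such indices can be dropped without changing either side. If some $y_i = 0$ while $x_i > 0$, the right-hand side contains $x_i\log(x_i/0) = +\infty$ and the inequality holds trivially; if $y_i = x_i = 0$ the index contributes nothing. Finally, if $Y = 0$, then either all $x_i = 0$ so that both sides are $0$, or the right-hand side is $+\infty$, so the inequality again holds. Collecting these cases completes the argument.
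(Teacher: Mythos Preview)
Your proof is correct and follows essentially the same strategy as the paper's: both deduce the inequality from a single application of Jensen's inequality. The only difference is cosmetic --- the paper applies Jensen to the concave function $\log$ with weights $x_i/\sum_j x_j$, whereas you apply it to the convex function $t\mapsto t\log t$ with weights $y_i/\sum_j y_j$; these are dual formulations of the same one-line argument. Your treatment of the degenerate cases ($y_i=0$, $Y=0$) is more careful than the paper's, which tacitly assumes strict positivity.
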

\begin{proof}
Denote $x=\sum_m x_m$ and $y = \sum_my_m$. By concavity of the $\log $ function and Jensen's inequality, we have,
\begin{align}
    \sum_m{\frac{x_m}{x}}\log\left(\frac{y_m}{x_m}\right)\leq \log\left(\sum_m \frac{x_m}{x}\frac{y_m}{x_m}\right)=\log\left(\frac{y}{x}\right).
\end{align}
Multiplying both sides of the above inequality with $-x$ yields the lemma.
\end{proof}
\begin{repproposition}{prop:relax}
 Let $\bf P$ be transport plan of the form in (\ref{form}). Assume $\mathbf{K}$ is some Gibbs kernel satisfying $\mathbf{K}_x\mathbf{K}_z\mathbf{K}_y\leq \mathbf{K}$, where the inequality is over each entry. The following inequality holds, 
\begin{align}\label{upper}
    \varepsilon \text{KL}(\mathbf{P}\|\mathbf{K}) \leq \varepsilon (\text{KL}(\mathbf{P}_x\|\mathbf{K}_x) +\text{KL}(\mathbf{P}_z\|\mathbf{K}_z)
    + \text{KL}(\mathbf{P}_y\|\mathbf{K}_y)+\mathbf{H}(\mathbf{u}_z)+ \mathbf{H}(\mathbf{v}_z)),
\end{align}
where $\mathbf{H}(\mathbf{a}):=-\sum_i \mathbf{a}_i\log \mathbf{a}_i$ denotes the Shannon entropy.
\end{repproposition}
\begin{proof}
As the transport cost is monotonically decreasing in the entries of $\mathbf{K}$, we only need to prove the case when $\mathbf{K}_x\mathbf{K}_z\mathbf{K}_y= \mathbf{K}$.
To simplify notations, we define $\tilde{\mathbf{P}}_z:=\text{diag}(\mathbf{u}_z^{-1})\mathbf{P}_z\text{diag}(\mathbf{v}_z^{-1})$, $\tilde{\mathbf{P}}_y:=\text{diag}(\mathbf{v}_z^{-1})\mathbf{P}_y$. Now by definition of $\mathbf{P}$ with the form (\ref{form}) we have,
\begin{align}\label{pa}
    \text{KL}(\mathbf{P}\|\mathbf{K}) = \sum_{i,j}\left(\sum_{m}\left((\mathbf{P}_x)_{i,m}(\tilde{\mathbf{P}}_z{\mathbf{P}}_y)_{m,j}\right)\log{\frac{\sum_m(\mathbf{P}_x)_{i,m}(\tilde{\mathbf{P}}_z{\mathbf{P}}_y)_{m,j}}{\sum_m(\mathbf{K}_x)_{i,m}(\mathbf{K}_z\mathbf{K}_y)_{m,j}}}\right)=:\sum_{i,j}a_{i,j}.
\end{align}
Apply Lemma \ref{logsum} to $a_{i,j}$ with $x_m=(\mathbf{P}_x)_{i,m}(\tilde{\mathbf{P}}_z\mathbf{P}_y)_{m,j}$ and $y_m=\sum_m(\mathbf{K}_x)_{i,m}(\mathbf{K}_z\mathbf{K}_y)_{m,j}$ we have,
\begin{align}\label{pab}
 a_{i,j} \leq \sum_m (\mathbf{P}_x)_{i,m}(\tilde{\mathbf{P}}_z\mathbf{P}_y)_{m,j}\log{\frac{(\mathbf{P}_x)_{i,m}}{(\mathbf{K}_x)_{i,m}}} + \sum_m (\mathbf{P}_x)_{i,m}(\tilde{\mathbf{P}}_z\mathbf{P}_y)_{m,j}\log{\frac{(\tilde{\mathbf{P}}_z\mathbf{P}_y)_{m,j}}{(\mathbf{K}_z\mathbf{K}_y)_{m,j}}}=:b_{i,j} + c_{i,j}. 
\end{align}
Since $\tilde{\mathbf{P}}_z\mathbf{P}_y\mathbf{1} = \text{diag}(\mathbf{u}_z^{-1})\mathbf{P}_z\text{diag}(\mathbf{v}_z^{-1})\mathbf{v}_z=\text{diag}(\mathbf{u}_z^{-1})\mathbf{u}_z=\mathbf{1}$, we have $\sum_{j}(\tilde{\mathbf{P}}_z\mathbf{P}_y)_{m,j}=1$, $\forall m$, and 
\begin{align}\label{pb}\sum_{i,j}b_{i,j}=\sum_i\sum_m (\mathbf{P}_x)_{i,m}\left(\sum_j(\tilde{\mathbf{P}}_z\mathbf{P}_y)_{m,j}\right)\log{\frac{(\mathbf{P}_x)_{i,m}}{(\mathbf{K}_x)_{i,m}}}=\sum_{i,m} (\mathbf{P}_x)_{i,m}\log{\frac{(\mathbf{P}_x)_{i,m}}{(\mathbf{K}_x)_{i,m}}}=\text{KL}(\mathbf{P}_x\|\mathbf{K}_x).
\end{align}
On the other hand, $\mathbf{P}_x^T\mathbf{1}=\mathbf{u}_z$ implies $\sum_{i}(\mathbf{P}_x)_{i,m}=(\mathbf{u}_z)_m$, $\forall m$, and 
\begin{align}
    \sum_{i,j}c_{i,j} =& \sum_{m,j}(\mathbf{u}_z)_m(\tilde{\mathbf{P}}_z\mathbf{P}_y)_{m,j}\log{\frac{(\tilde{\mathbf{P}}_z\mathbf{P}_y)_{m,j}}{(\mathbf{K}_z\mathbf{K}_y)_{m,j}}}=\sum_{m,j}({\mathbf{P}}_z\text{diag}(\mathbf{v}_z^{-1})\mathbf{P}_y)_{m,j}\log{\frac{({\mathbf{P}}_z\text{diag}(\mathbf{v}_z^{-1})\mathbf{P}_y)_{m,j}}{(\mathbf{u}_z)_m(\mathbf{K}_z\mathbf{K}_y)_{m,j}}}\nonumber\\
    \overset{(i)}{=}&\mathbf{H}(\mathbf{u}_z) + \sum_{m,j}({\mathbf{P}}_z\text{diag}(\mathbf{v}_z^{-1})\mathbf{P}_y)_{m,j}\log{\frac{({\mathbf{P}}_z\text{diag}(\mathbf{v}_z^{-1})\mathbf{P}_y)_{m,j}}{(\mathbf{K}_z\mathbf{K}_y)_{m,j}}}\nonumber\\
    \overset{(ii)}{=}&\mathbf{H}(\mathbf{u}_z) + \sum_{m,j}({\mathbf{P}}_z\tilde{\mathbf{P}}_y)_{m,j}\log{\frac{({\mathbf{P}}_z\tilde{\mathbf{P}}_y)_{m,j}}{(\mathbf{K}_z\mathbf{K}_y)_{m,j}}}=:\mathbf{H}(\mathbf{u}_z) + \sum_{m,j}d_{m,j}\label{pc},
\end{align}
where (i) follows from $\mathbf{P}_z\text{diag}(\mathbf{v}_z^{-1})\mathbf{P}_y\mathbf{1}=\mathbf{u}_z$ and (ii) from the definition of $\tilde{\mathbf{P}}_y$. Now applying Lemma~\ref{logsum} again to $d_{m,j}$ with $x_l=(\mathbf{P}_z)_{m,l}(\tilde{\mathbf{P}}_y)_{l,j},~y_l=(\mathbf{K}_z)_{m,l}(\mathbf{K}_y)_{l,j}$ leads to
\begin{align}
    d_{m,j}\leq& \sum_l \left((\mathbf{P}_z)_{m,l}(\tilde{\mathbf{P}}_y)_{l,j}\right)\log\frac{(\mathbf{P}_z)_{m,l}(\tilde{\mathbf{P}}_y)_{l,j}}{(\mathbf{K}_z)_{m,l}(\mathbf{K}_y)_{l,j}}\nonumber\\
    =&\sum_l\left((\mathbf{P}_z)_{m,l}(\tilde{\mathbf{P}}_y)_{l,j}\right)\log\frac{(\mathbf{P}_z)_{m,l}}{(\mathbf{K}_z)_{m,l}} + \sum_l\left((\mathbf{P}_z)_{m,l}(\tilde{\mathbf{P}}_y)_{l,j}\right)\log\frac{(\tilde{\mathbf{P}}_y)_{l,j}}{(\mathbf{K}_y)_{l,j}}=:e_{m,j} + f_{m,j}\label{pd}.
\end{align}
From $\tilde{\mathbf{P}}_y\mathbf{1} =\text{diag}(\mathbf{v}_z^{-1})\mathbf{P}_y\mathbf{1}=(\mathbf{v}_z^{-1})\mathbf{v}_z=\mathbf{1}$, we get $\sum_{j}({\tilde{\mathbf{P}}}_y)_{l,j}=1$ and
\begin{align}\label{pe}
    \sum_{m,j}e_{m,j} =\sum_m(\mathbf{P}_z)_{m,l}\sum_{j}\left((\tilde{\mathbf{P}}_y)_{l,j}\right)\log\frac{(\mathbf{P}_z)_{m,l}}{(\mathbf{K}_z)_{m,l}}= \text{KL}(\mathbf{P}_z\|\mathbf{K}_z).
\end{align}
Similarly, using $\mathbf{P}_z^T\mathbf{1} = \mathbf{v_z}$, we also get $\sum_m(\mathbf{P}_z)_{m,l}=(\mathbf{v}_z)_l$, $\forall l$, and
\begin{align}
    \sum_{m,j}f_{mj} =& -\sum_{m,j,l}(\mathbf{P}_z)_{m,l}(\tilde{\mathbf{P}}_y)_{l,j}\log(\mathbf{v}_z)_l +\sum_{m,j,l}(\mathbf{P}_z)_{m,l}(\tilde{\mathbf{P}}_y)_{l,j}\log\frac{(\mathbf{P}_y)_{l,j}}{(\mathbf{K}_y)_{l,j}}\nonumber\\
    =&~\mathbf{H}(\mathbf{v}_z) + \text{KL}(\mathbf{P}_y\|\mathbf{K}_y)\label{pf},
\end{align}
where the last equality follows from $\sum_{j}({\tilde{\mathbf{P}}}_y)_{l,j}=1$ and $\sum_m(\mathbf{P}_z)_{m,l}(\tilde{\mathbf{P}}_y)_{l,j}=(\mathbf{v}_z)_l(\mathbf{v}_z)^{-1}_l({\mathbf{P}}_y)_{l,j}$.
Finally, combining equations from (\ref{pa}-\ref{pf}) completes the proof.
\end{proof}

\subsection{Proof of Corollary \ref{cor:relate}}
\begin{proof}
By the assumptions on the cost matrices, the condition $\mathbf{K}_x\mathbf{K}_z\mathbf{K}_y\leq \mathbf{K}$ is equivalent to
\begin{align}\label{concor}
    \|\x_i-\y_j\|_p^p \leq -\varepsilon\log\left(\sum_{l,m}\exp\left(\frac{-1}{\varepsilon}\left(3^{p-1}\|\x_i-\z^x_l\|_p^p+3^{p-1}\|\z_l^x-\z^y_m\|_p^p+3^{p-1}\|y_j-\z^y_m\|_p^p\right)\right)\right).
\end{align}
Observe on the right-hand side is nothing more than the soft-min of the set \begin{align}
    {\cal{A}}:=\{3^{p-1}\|\x_i-\z^x_l\|_p^p+3^{p-1}\|\z_l^x-\z^y_m\|_p^p+3^{p-1}\|y_j-\z^y_m\|_p^p:\forall l,m\},
\end{align} where soft-min$_{\varepsilon}({\cal{A}}):=-\varepsilon\log(\sum_{\alpha\in {\cal{A}}}\exp\left(-\alpha/\varepsilon)\right)$, which approaches the minimum of the set $\min {\cal{A}}$ as $\varepsilon\to 0$. On the other hand, any element in the set is at least $\|\x_i-\y_j\|_p^p$ because
\begin{align}
   \|\x_i-\y_j\|_p^p\overset{(i)}{\leq} &~ (\|\x_i-\z_l^x\|_p + \|\z_l^x-\z_m^y\|_p + \|\y_j-\z_m^y\|_p)^p\nonumber\\
   \overset{(ii)}{\leq} &~ 3^{p-1}(\|\x_i-\z_l^x\|_p + \|\z_l^x-\z_m^y\|_p + \|\y_j-\z_m^y\|_p),
\end{align}
where (i) follows from Minkowski inequality and (ii) from applying Jensen's inequality to convex function $x^p$.
Hence, (\ref{concor}) holds for sufficient small $\varepsilon$, and thus \alg with these cost matrices gives a transport plan according to the upper bound in proposition \ref{prop:relax}.
\end{proof}

\subsection{Proof of Proposition \ref{prop:sampling}}
\begin{repproposition}{prop:sampling}
Suppose $X$ and $Y$ have distributions $\mu$ and $\nu$ supported on a compact region $\Omega$ in $\mathbb{R}^d$, the cost functions $c_x(\cdot,\cdot)$ and $c_y(\cdot,\cdot)$ are defined as the squared Euclidean distance, and $\hat{\mu}$, $\hat{\nu}$ are empirical distributions of $n$ and $m$ i.i.d. samples from $\mu$ and $\nu$, respectively. If the spaces for latent distributions are equal to ${\cal{Z}}_x={\cal{Z}}_y=\mathbb{R}^d$, and there are $k_x$ and $k_y$ anchors in the source and target, respectively, then with probability at least $1-\delta$,   
\begin{align}\label{prop:sample}
    {\rm Err}  ~ \leq~ &C\sqrt{\frac{k_{max}^3d \log k_{max}+ \log (2/\delta)}{N}},
\end{align}
where ${\rm Err} = |{\rm OT}^L(\mu,\nu) - {\rm OT}^L(\hat{\mu},\hat{\nu})|$, $k_{max}=\max\{k_x,k_y\}$ and $N= \min\{n,m\}$ and $C\geq 0$ is some constant not depending on $N$.
\end{repproposition}
The proof is an application of the following Lemma.
\begin{lemma}(Theorem 4 in~ \cite{forrow2019statistical}).
Suppose $\mu$ is a distribution supported in a compact region $K\subseteq \mathbb{R}^d$. Let $\hat{\mu}$ be an empirical distribution of $n$ i.i.d. samples from $\mu$, then there exists a constant $C>0$ such that for any distribution $\mu_z$ supported on up to $k$ points, with probability $1-\delta$, we have,
\begin{align}
    |{\cal{W}}_2^2(\mu,\mu_z) -{\cal{W}}_2^2(\hat{\mu},\mu_z)| \leq C\sqrt{\frac{k^3d\log k + \log(2/\delta)}{n}}.
\end{align}
\end{lemma}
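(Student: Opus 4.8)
The plan is to prove the Lemma as a \emph{uniform} concentration bound for the semi-discrete functional $\mu \mapsto {\cal{W}}_2^2(\mu,\mu_Z)$, holding simultaneously over all $\mu_Z$ supported on at most $k$ points. This uniformity is exactly what the application to Proposition~\ref{prop:sampling} needs, since there $\mu_Z$ is chosen data-dependently (it is the minimizing anchor measure), so the deviation must be controlled over the whole class of admissible $k$-point measures. The backbone of the argument is Kantorovich duality for the semi-discrete problem: for fixed $\mu_Z=\sum_{i=1}^k w_i\delta_{z_i}$,
\[
{\cal{W}}_2^2(\mu,\mu_Z)=\max_{\mathbf{v}\in\mathbb{R}^k}\Big[\sum_{i=1}^k w_i v_i+\int_K h_{\mathbf{v},\mathbf{z}}(x)\,d\mu(x)\Big],\qquad h_{\mathbf{v},\mathbf{z}}(x):=\min_{1\le i\le k}\big(\|x-z_i\|_2^2-v_i\big).
\]
Evaluating the dual point that is optimal for $\mu$ at $\hat{\mu}$ (and vice versa) gives the two one-sided inequalities, so that $|{\cal{W}}_2^2(\mu,\mu_Z)-{\cal{W}}_2^2(\hat{\mu},\mu_Z)|\le \sup_{\mathbf{v}}\big|\int h_{\mathbf{v},\mathbf{z}}\,d(\mu-\hat{\mu})\big|$. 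Taking the supremum over admissible $\mu_Z$ reduces the claim to controlling the empirical process $\sup_{h\in\mathcal{H}}\big|\int h\,d(\mu-\hat{\mu})\big|$ over the class $\mathcal{H}=\{h_{\mathbf{v},\mathbf{z}}\}$ of $k$-fold minima of paraboloids.

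Next I would carry out the compactness reductions that make $\mathcal{H}$ tractable. Since $K$ is compact with diameter $D$, projecting any anchor $z_i$ onto $\mathrm{conv}(K)$ never increases transport cost, so we may assume $z_i\in\mathrm{conv}(K)$; likewise the optimal potentials can be normalized so that $\max_i v_i-\min_i v_i\le D^2$. After this centering the functions $h_{\mathbf{v},\mathbf{z}}$ are uniformly bounded by $O(D^2)$ and $O(D)$-Lipschitz on $K$. These two facts are precisely what the concentration step and the entropy estimate below require.

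The core estimate is the complexity of $\mathcal{H}$. Writing $\|x-z_i\|_2^2-v_i=\|x\|_2^2+\langle a_i,x\rangle+b_i$ with $a_i=-2z_i$ and $b_i=\|z_i\|_2^2-v_i$, the common term $\|x\|_2^2$ contributes only the single scalar $\int\|x\|_2^2\,d(\mu-\hat{\mu})$, which concentrates at rate $n^{-1/2}$ by Hoeffding, while the remainder is a minimum of $k$ affine functions, i.e.\ a concave piecewise-linear function whose pieces are the Laguerre (power-diagram) cells of the anchors. I would then bound $\mathbb{E}\sup_{\mathcal{H}}|\int h\,d(\mu-\hat{\mu})|$ by a Dudley/chaining integral fed by the metric entropy of this class, combined with a symmetrization/Rademacher argument; the boundedness and Lipschitz control from the previous step keep the integral finite. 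Finally, to pass from expectation to high probability I would invoke McDiarmid's inequality: the map $(X_1,\dots,X_n)\mapsto\sup_{\mathcal{H}}|\frac1n\sum_i h(X_i)-\int h\,d\mu|$ has bounded differences $O(D^2/n)$ because each $h$ is uniformly bounded, giving a $\sqrt{\log(2/\delta)/n}$ fluctuation around the mean; adding the expectation bound yields $C\sqrt{(k^3d\log k+\log(2/\delta))/n}$, which is the claim.

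The hard part is the complexity estimate for $\mathcal{H}$, and in particular explaining the polynomial factor $k^3d\log k$ rather than the naive $kd$ that a crude $\epsilon$-net of the $k(d+1)$ parameters would suggest. This is where the combinatorial structure of the $k$-cell Laguerre partition of $\mathbb{R}^d$ enters: the pseudo-dimension of the system of such partitions, together with the fact that the effective Lipschitz constant and range of the potentials can grow as the $k$ anchors spread across $\mathrm{conv}(K)$, is what inflates the rate to polynomial order in $k$. Pinning this down carefully — uniformly over both the anchor locations $z_i$ and the dual weights $v_i$ — is the delicate step, and is the content of the argument one reconstructs from \cite{forrow2019statistical}.
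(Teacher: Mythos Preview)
The paper does not prove this lemma: it is quoted as Theorem~4 of \cite{forrow2019statistical} and used as a black box in the proof of Proposition~\ref{prop:sampling}. So there is no proof in the paper to compare your proposal against.

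That said, your outline is a faithful reconstruction of the standard argument for semi-discrete concentration bounds of this type (and, to the extent one can infer it, of the argument in the cited reference): semi-discrete Kantorovich duality to linearize the problem, the two one-sided inequalities from evaluating each side's optimal potential at the other measure, reduction to a uniform empirical-process bound over the class of $c$-transforms $h_{\mathbf{v},\mathbf{z}}$, compactness/normalization of anchors and potentials to get uniform boundedness and Lipschitzness, a chaining/covering bound for the expectation, and bounded differences for the high-probability deviation. The decomposition $\|x-z_i\|^2-v_i=\|x\|^2+\text{affine}$ to peel off the common quadratic is exactly the right trick.

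One small inaccuracy in your last paragraph: you suggest part of the $k^3$ factor comes from the Lipschitz constant or range of the potentials growing with the spread of the anchors, but after your own compactness reduction (anchors in $\mathrm{conv}(K)$, potentials normalized to an $O(D^2)$ window) these are uniformly $O(D)$ and $O(D^2)$. The polynomial dependence on $k$ arises purely from the metric-entropy/covering estimate for the class of minima of $k$ affine functions in $d$ variables, not from any unboundedness of the individual functions.
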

\begin{proof}(of Proposition \ref{prop:sampling}).
Following the assumptions of the proposition, $OT^L(\mu,\nu)$, $OT^L(\hat{\mu},\hat{\nu})$ can be rewritten into optimizations involving the 2-Wasserstein distance,
\begin{align}\label{opt1}
    {\rm OT}^L(\mu,\nu) = \min_{\mu_z\in \Delta_{\mathbb{R}^d}^{k_x},\nu_z\in \Delta_{\mathbb{R}^d}^{k_y}} {{\cal{W}}}_2^2(\mu,\mu_z) + {{\cal{W}}}_2^2(\mu_z,\nu_z) + {{\cal{W}}}_2^2(\nu_z,\nu), \\
    {\rm OT}^L(\hat{\mu},\hat{\nu}) = \min_{\mu_z\in \Delta_{\mathbb{R}^d}^{k_x},\nu_z\in \Delta_{\mathbb{R}^d}^{k_y}} {{\cal{W}}}_2^2(\hat{\mu},\mu_z) + {{\cal{W}}}_2^2(\mu_z,\nu_z) + {{\cal{W}}}_2^2(\nu_z,\hat{\nu}).
\end{align}
Without lose of generality, we assume that ${\rm OT}^L(\mu,\nu)\geq {\rm OT}^L(\hat{\mu},\hat{\nu})$. Let $(\mu_z^*,\nu_z^*)$ and  $(\hat{\mu}_z^*,\hat{\nu}_z^*)$ denote the optimal solutions to the two optimization problems. Then,
\begin{align*}
    &~{\rm OT}^L(\mu,\nu)-{\rm OT}^L(\hat{\mu},\hat{\nu})\nonumber\\
    =&~ {{\cal{W}}}_2^2(\mu,\mu_z^*) + {{\cal{W}}}_2^2(\mu_z^*,\nu_z^*) + {{\cal{W}}}_2^2(\nu_z^*,\nu) - \left({{\cal{W}}}_2^2(\hat{\mu},\hat{\mu}_z^*) + {{\cal{W}}}_2^2(\hat{\mu}_z^*,\hat{\nu}_z^*) + {{\cal{W}}}_2^2(\hat{\nu}_z^*,\hat{\nu})\right)\\
    \overset{(a)}{\leq}~& {{\cal{W}}}_2^2(\mu,\hat{\mu}_z^*) + {{\cal{W}}}_2^2(\hat{\mu}_z^*,\hat{\nu}_z^*) + {{\cal{W}}}_2^2(\hat{\nu}_z^*,\nu) - \left({{\cal{W}}}_2^2(\hat{\mu},\hat{\mu}_z^*) + {{\cal{W}}}_2^2(\hat{\mu}_z^*,\hat{\nu}_z^*) + {{\cal{W}}}_2^2(\hat{\nu}_z^*,\hat{\nu})\right)\\
    \leq~& |{{\cal{W}}}_2^2(\mu,\hat{\mu}_z^*)-{{\cal{W}}}_2^2(\hat{\mu},\hat{\mu}_z^*)| +  |{{\cal{W}}}_2^2(\nu,\hat{\nu}_z^*)-{{\cal{W}}}_2^2(\hat{\nu},\hat{\nu}_z^*)|\\
    \overset{(b)}{\leq} ~& C_1\sqrt{\frac{k_x^3 d \log k_x +\log(2/\delta)}{n}} + C_2\sqrt{\frac{k_y^3 d \log k_y +\log(2/\delta)}{m}}\\
    \leq ~& (C_1+C_2)\sqrt{\frac{k_{\rm max}^3 d \log k_{\rm max} +\log(2/\delta)}{N}},
\end{align*}
where $(a)$ follows from the optimality of $(\mu_z^*,\nu_z^*)$ to the objective (\ref{opt1}), (b) by applying Lemma 1 twice. By symmetry, we get a similar result for the case where ${\rm OT}^L(\mu,\nu)\leq {\rm OT}^L(\hat{\mu},\hat{\nu})$ which completes the proof.
\end{proof}

%\section{Additional Algorithms}\label{sec:suppa}
%We present the unbalanced latent optimal transport in Algorithm \ref{alg:unlot}.

\begin{algorithm}[t]
\begin{algorithmic}[1]
\caption{Iterative Bregman Projection}\label{alg:ibp}
\INPUT{$\mathbf{K}$} \OUTPUT{$\mathbf{P}$} 
\ENSURE{ $\mathbf{P}\leftarrow \mathbf{K} $ }
\WHILE{not converging}
\FOR{$i=1,\dots,k$}
\STATE{$\mathbf{P}\leftarrow \Pi_{{\cal{C}}_i}^{\text{KL}}(\mathbf{P})$}
\ENDFOR
\ENDWHILE
\end{algorithmic}
\end{algorithm}
\section{Additional algorithms and derivations}
\subsection{Derivation of Algorithm \ref{alg:1}}\label{der1}
In this section, we provide the detailed derivation of the algorithm \ref{alg:1}.
It is based on the Dykstra's algorithm \cite{dykstra1983algorithm} which considers problems of the form:
\begin{align}\label{bregform}
    \min_{\mathbf{P}\in \mathbf{R}^{n\times m}} \text{KL}(\mathbf{P}||\mathbf{K})\nonumber\\
    \text{subject to }\mathbf{P}\in {\bigcap_{i=1}^k{\cal{C}}_i}, 
\end{align}
where $\mathbf{K}\in \mathbb{R}_+^{n\times m}$ is some non-negative fixed matrix and ${\cal{C}}_i$, $\forall i$ are closed convex sets. When in addition ${\cal{C}}_i$ are affine, an iterative Bregman projection \cite{benamou2015iterative} solves the problem and has the form of algorithm \ref{alg:ibp}. Here $\Pi_{{\cal{C}}}^{\text{KL}}(\mathbf{P})$ denotes the Bregman projection of $\mathbf{P}$ onto $\cal{C}$ defined as $\Pi_{{\cal{C}}}^{\text{KL}}(\mathbf{P}):=\argmin_{\mathbf{\gamma}\in {\cal{C}}} \text{KL}(\gamma||\mathbf{P})$.

Recall that the objective of algorithm \ref{alg:1} is 
\begin{align}\label{cost1for2}
    \min_{\mathbf{P}_x,\mathbf{P}_z,\mathbf{P}_y} &\varepsilon(\text{KL}(\mathbf{P}_x\|\mathbf{K}_x)+\text{KL}(\mathbf{P}_z\|\mathbf{K}_z)\nonumber
    +\text{KL}(\mathbf{P}_y\|\mathbf{K}_y)),
    \nonumber\\ \text{subject to:~ }&\exists \mathbf{u}_z,\mathbf{v}_z : \mathbf{P}_x\mathbf{1} =\mu,\mathbf{P}^T_x\mathbf{1}=\mathbf{u}_z,\mathbf{P}_z\mathbf{1} = \mathbf{u}_z,
    \mathbf{P}_z^T\mathbf{1} =\mathbf{v}_z,\mathbf{P}_y\mathbf{1}=\mathbf{v}_z,\mathbf{P}^T_y\mathbf{1}=\nu. 
\end{align}
Hence, we can write (\ref{cost1for2}) into the form of (\ref{bregform}) by defining 
\begin{align}\label{affineset}
&{\cal{C}}_1=\{\mathbf{P}_x:\mathbf{P}_x\mathbf{1} =\mu\}, {\cal{C}}_2=\{\mathbf{P}_y:\mathbf{P}_y^T\mathbf{1}=\nu\},\nonumber\\
&{\cal{C}}_3=\{(\mathbf{P}_x,\mathbf{P}_z):\exists \mathbf{v}_z,\mathbf{P}^T_x\mathbf{1} =\mathbf{P}_z\mathbf{1} = \mathbf{u}_z\},{\cal{C}}_4=\{(\mathbf{P}_z,\mathbf{P}_y):\exists \mathbf{u}_z,\mathbf{P}^T_z\mathbf{1} =\mathbf{P}_y\mathbf{1} = \mathbf{v}_z\}.
\end{align}
For further simplifications, observe that each transport matrix is a Bregman projection per iteration, so it can be easily verified that the transport plan must be of the form $\mathbf{P}_i =\text{diag}(\alpha_i)\mathbf{K}_i\text{diag}(\beta_i)$, for some $\alpha_i\in\mathbb{R}_+^n,\beta_i\in\mathbb{R}_+^m $, $i\in\{x,y,z\}$. Now, using a standard technique of Lagrange multipliers, the projections to each of these sets (\ref{affineset}) can be written as
\begin{align*}
   &\Pi_{{\cal{C}}_1}^{\text{KL}}(\mathbf{P}_x) =\text{diag}(\mu\oslash \mathbf{K}_x\beta)\mathbf{K}_x\text{diag}(\beta_x),\\
   &\Pi_{{\cal{C}}_2}^{\text{KL}}(\mathbf{P}_y) =\text{diag}(\alpha_y)\mathbf{K}_y\text{diag}(\nu\oslash\mathbf{K}_y^T\alpha_y),\\
 &\Pi_{{\cal{C}}_3}^{\text{KL}}(\mathbf{P}_x) =\text{diag}(\alpha_x)\mathbf{K}_x\text{diag}(\mathbf{u}_z\oslash \mathbf{K}_x^T\alpha_x),~
 \Pi_{{\cal{C}}_3}^{\text{KL}}(\mathbf{P}_z) =\text{diag}(\mathbf{u}_z\oslash \mathbf{K}_z\beta_z)\mathbf{K}_x\text{diag}(\beta_z),\\
 &\text{where }\mathbf{u}_z = ((\alpha_z\odot \mathbf{K}_z\beta_z)\odot(\beta_x\odot \mathbf{K}_x^T\alpha_x))^{\frac{1}{2}}\\
&\Pi_{{\cal{C}}_4}^{\text{KL}}(\mathbf{P}_z) =\text{diag}(\alpha_z)\mathbf{K}_z\text{diag}(\mathbf{v}_z\oslash \mathbf{K}_z^T\alpha_z),~
 \Pi_{{\cal{C}}_4}^{\text{KL}}(\mathbf{P}_y) =\text{diag}(\mathbf{v}_z\oslash \mathbf{K}_y\beta_y)\mathbf{K}_y\text{diag}(\beta_y),\\
 &\text{where }\mathbf{v}_z = ((\alpha_y\odot \mathbf{K}_y\beta_y)\odot(\beta_z\odot \mathbf{K}_z^T\alpha_z))^{\frac{1}{2}}.
\end{align*}
Finally, by keeping track with $\alpha_i,\beta_i$, we arrive at the algorithm \ref{alg:1}.

\subsection{Rule of anchor update}\label{b:anchoropt}
When $\mathbf{C}_{x}= [d_{\mathbf{M}_x}(\x_i,\z^x_m)]_{i,m},  \mathbf{C}_{z}:=[d_{\mathbf{M}_z}(\z^x_m,\z^y_n)]_{m,n}, \mathbf{C}_{y} = [d_{\mathbf{M}_y}(\x_j,\z^y_n)]_{n,j}$, 
we can rewrite the objective of \alg explicitly as a function of $\mathbf{Z}_x$ and $\mathbf{Z}_y$:
\begin{align}\label{xform}
  \text{OT}^{\text{L}} =   &\text{tr}\left(\mathbf{Z}_x^T(\mathbf{M}_x+\mathbf{M}_y)\mathbf{Z}_x\text{diag}(\mathbf{u}_z)\right)
    +\text{tr}\left(\mathbf{Z}_y^T(\mathbf{M}_y+\mathbf{M}_z)\mathbf{Z}_y\text{diag}(\mathbf{v}_z)\right)\nonumber\\
    -&2\text{tr}\left(\mathbf{P}_x^T\mathbf{X}^T\mathbf{M}_x\mathbf{Z}_x\right)-2\text{tr}\left(\mathbf{P}_z^T\mathbf{Z}_x^T\mathbf{M}_z\mathbf{Z}_y)
    -2\text{tr}(\mathbf{P}_y^T\mathbf{Z}_y^T\mathbf{M}_y\mathbf{Y}\right).
\end{align}
We get the first-order stationary point by setting the gradient of ${\rm OT^\text{L}}$ with respect to $\mathbf{Z}_x$ and $\mathbf{Z}_y$ to zero, which results in the closed-form formula,

\begin{align}\label{cenfor}
\begin{bmatrix}\text{vec}(\mathbf{Z}_x^*)\\\text{vec}(\textbf{Z}_y^*)
    \end{bmatrix}
    =&\begin{bmatrix}
\text{diag}(\mathbf{u}_z)\otimes (\mathbf{M}_x+\mathbf{M}_z) & \mathbf{P}_z\otimes \mathbf{M}_z\\
-\mathbf{P}_z^T\otimes \mathbf{M}_z & \text{diag}(\mathbf{v}_z)\otimes (\mathbf{M}_y+\mathbf{M}_z) 
\end{bmatrix}^{-1} \begin{bmatrix}(\mathbf{P}_x^T\otimes \mathbf{M}_x)\text{vec}(\mathbf{X})\\(\mathbf{P}_y\otimes \mathbf{M}_y)\text{vec}(\mathbf{Y})
    \end{bmatrix}.
\end{align}
Based on the formula, we present a Floyd-type algorithm in Algorithm \ref{alg:1} by alternatively the optimization of transport plans $({\mathbf P}_x, {\mathbf P}_y, {\mathbf P}_z)$ and the optimization of anchors $({\mathbf Z}_x,\mathbf{Z}_y)$.

\subsection{Wasserstein distance as ground metric and estimation of transported points}
\label{wasslot}
This section we consider the inner cost matrix to be defined by the Wasserstein distances of the clustered distributions as $\mathbf{C}_z:=[{\cal{W}}^2_2(\mathbf{P}_x(\cdot|\mathbf{z}^x_m),\mathbf{P}_y(\cdot|\mathbf{z}^y_n))]_{m,n}$. This form arises when we represent the centroids by measures of points as $\tilde{\mathbf{z}}^x=\sum_{i=1}^N\mathbf{P}_x(\mathbf{x}_i|\mathbf{z}^x)\delta_{\mathbf{x}_i},$ $\tilde{\mathbf{z}}^y=\sum_{j=1}^M\mathbf{P}_y(\mathbf{y}_j|\mathbf{z}^y)\delta_{\mathbf{x}_j}$. Intuitively, $\tilde{\mathbf{z}}^x$ represents the distribution of the source points associated with the anchor $\mathbf{z}^x$ while $\tilde{\mathbf{z}}^y$ represents that of the target points associated with the anchor $\mathbf{z}^y$. We thus measure the distance between the anchors using the minimal transportation cost between the two distributions, which is then equivalent to the Wasserstein distance between the two measures. The additional challenges in optimizing \alg here is that the inner cost matrix depends on the transport plans of outer OTs. We propose to use an alternating optimization scheme that optimize transports plans $(\mathbf{P}_x,\mathbf{P}_y,\mathbf{P}_z)$ while keeping fixed $\mathbf{C}_{z}$ and then optimize $\mathbf{C}_{z}$ while keeping $(\mathbf{P}_x,\mathbf{P}_y,\mathbf{P}_z)$ fixed. However, the computation complexity of $\mathbf{C}_{z}$ could be high as it requires solving $k_xk_y$ OTs per iteration, where $k_x$, $k_y$ are the numbers of anchors in the source and target, respectively. We can reduce the computation by only solving a few OTs corresponding to anchor pairs $(m,n)\in [k_x]\times [k_y]$ that have high transportation probabilities $\mathbf{P}_z$. Specifically, for each $(m,n)\in [k_x]\times[k_y]$, we adopt the criteria of only calculating ${\cal{W}}^2_2(\mathbf{P}_x(\cdot|\mathbf{z}^x_m),\mathbf{P}_y(\cdot|\mathbf{z}^y_n))$ when $\mathbf{P}_z(\mathbf{z}^x_m,\mathbf{z}^y_n) > \theta\max_{n}\mathbf{P}_z(\mathbf{z}^x_m,\mathbf{z}^y_n)$, where $\theta$ is some predefined threshold (we use $\theta=0.5$ in the experiment). For the uncalculated Wasserstein distances, we set the costs to be infinity. We summarize the pseudo-code in algorithm \ref{alg:wa}, where \textsc{{SolveOT}} represents any OT solver, e.g. \cite{cuturi2013sinkhorn}, and \textsc{{UpdatePlan}} is the subroutine in Algorithm \ref{alg:1}.
The side-product $\mathbf{P}_{in}$ is a tensor that allows us to do accurate point to point alignment. Based on it, we propose the following estimated transportation of $\x_i$,
\begin{align}
    \hat{x}_i=\mathbf{P}_{in}[m^*,n^*,i,:]\mathbf{Y},~\text{where } m^*= \argmax_m \mathbf{P}_x[i,m], \text{and }n^*= \argmax_n \mathbf{P}_z[m^*,n].
\end{align}
\begin{algorithm}[t]
\begin{algorithmic}[1]
\caption{Latent Optimal Transport - Wasserstein Ground Metric (\url{LOT-WA})}\label{alg:wa}
\INPUT{$\mathbf{P}_x,\mathbf{P}_y,\mathbf{P}_z,\mathbf{P}_{in},\mathbf{K}_x,\mathbf{K}_y,\theta$.} 
\vspace{0.5mm}
\OUTPUT{$\mathbf{P}_x,\mathbf{P}_y,\mathbf{P}_z,\mathbf{P}_{in},\mathbf{Z}_x,\mathbf{Z}_y$}
\vspace{0.5mm}
\WHILE{not converging}
\FOR{$m=1,\dots,k_x$}
\FOR{$n=1,\dots,k_y$}
\IF{$\mathbf{P}_z(\mathbf{z}^x_m,\mathbf{z}^y_n) > \theta\max_{n}\mathbf{P}_z(\mathbf{z}^x_m,\mathbf{z}^y_n)$}
\STATE{$\mathbf{C}_{z}[m,n],\mathbf{P}_{in}[m,n] ={\color{blue}{\textsc{ SolveOT}}}\left({\mathbf{P}_x(\cdot|\mathbf{z}^x_m),\mathbf{P}_y(\cdot|\mathbf{z}^y_n)}\right)$} \hfill\COMMENT{//calculate ${\cal{W}}^2_2(\mathbf{P}_x(\cdot|\mathbf{z}^x_m),\mathbf{P}_y(\cdot|\mathbf{z}^y_n))$}
\ELSE
\STATE{$\mathbf{C}_{z}[m,n]=\infty$}
\ENDIF
\ENDFOR
\ENDFOR
\STATE{$\mathbf{P}_x, \mathbf{P}_y, \mathbf{P}_z\leftarrow {{\color{blue}{\textsc{ UpdatePlan}}}}{\left(\mathbf{K}_x,\mathbf{K}_y,\exp(-\mathbf{C}_{z}/\varepsilon)\right)}$}
\ENDWHILE
\end{algorithmic}
\end{algorithm}

\section{Variants of \alg}

\label{subsubsec:variant}
In this section, we discuss some natural extensions of \alg and their applications.
\paragraph{1) Data alignment under a global transformation:}
In many alignment applications, it is often the case that the learned features of the source and target are subject to some shift or transformation \cite{alvarez2019towards}. For example, in neuroscience \cite{lee2019hierarchical}, subspaces of the data clusters from neural and motor activities are subject to transformations preserving their principal angles. Formally, this is equivalent to saying that a transformation from the Stiefel manifold ${\cal{V}}_{d}:=\{\mathbf{O}\in\mathbb{R}^{d\times d}:\mathbf{O}^T\mathbf{O}=\mathbf{I}_d\}$ exists between them. Using the squared $\ell_2$-distance as the ground truth metric, we can thus pose an OT problem under this transformation as,
\begin{align}\label{transot}
    \min_{\mathbf{M}\in {\cal{V}}_{d}}\min_{{{ \mathbf{P}\mathbbm{1}} = \mu, { \mathbf{P}}^T{\mathbbm{1}} =\nu}}\sum_{i=1}^{N}\sum_{j=1}^M\|\mathbf{M}\x_i-\y_j\|_2^2\mathbf{P}_{i,j},
\end{align}
where $d$ is the dimension of the data. Observe that (\ref{transot}) is a sum over the number of all data points (with an order of ${\cal{O}}(NM)$), it could be fragile to outliers or noise. On the other hand, \alg could remedy the issue by casting an optimization only on the anchors $\mathbf{z}^x$, $\mathbf{z}^y$. Since the anchors are some averaged representations of data points, they are smooth and robust to data perturbations.
The resultant optimization is,
\begin{align}\label{translot}
    \min_{\mathbf{M}\in {\cal{V}}_{d}}\min_{{{ \mathbf{P}_z\mathbbm{1}} = \mu_z, { \mathbf{P}_z}^T{\mathbbm{1}} =\nu_z},\mathbf{P}_x,\mathbf{P}_y}\langle \mathbf{C}_x,\mathbf{P}_x \rangle + {\color{blue}\sum_{i=1}^{k_1}\sum_{j=1}^{k_2}\|\mathbf{M}\mathbf{z}^x_i-\mathbf{z}^y_j\|_2^2(\mathbf{P}_z)_{i,j}}+\langle \mathbf{C}_y,\mathbf{P}_y\rangle.
\end{align}
Now the order of the number of terms in the objective is greatly reduced to ${\cal{O}}(k_1k_2)$. As the transformation is cast upon the anchors, the solution tends to be more robust to perturbations.

Below we propose a simple procedure to optimize $\mathbf{M}$. Our method is based on an alternating strategy, where we alternate between two procedures: (1) $\mathbf{P}_x,\mathbf{P}_z,\mathbf{P}_y$ are fixed when we update $\mathbf{M}$; (2) $\mathbf{M}$ is fixed when we update $\mathbf{P}_x,\mathbf{P}_z,\mathbf{P}_y$. The later is nothing more than the algorithm \ref{alg:1} with $\mathbf{C}_z := [\|\mathbf{M}\mathbf{z}^x_i-\mathbf{z}^y_j\|_2^2]_{i,j}$. Hence we would focus on the procedure (1). Given a fixed $\mathbf{P}_z^*$, the minimization of the middle term of (\ref{translot}), after simplification, is equivalent to,
\begin{align}\label{transopt}
    \max_{\mathbf{M}\in {\cal{V}}_{d}} \langle \mathbf{M}\mathbf{Z}_x{\mathbf{P}_z^*},\mathbf{Z}_y\rangle.
\end{align}
The problem admits an elegant solution according to the following lemma.
\begin{lemma} (Lemma 3.1  \cite{alvarez2019towards})
Let $\mathbf{U}\mathbf{\Sigma}\mathbf{V}^T$ be the SVD decompostion of $\mathbf{Z}_y(\mathbf{Z}_x{\mathbf{P}_z^*})^T$, then $\mathbf{M}^*=\mathbf{U}\mathbf{V}^T$ is the optimal solution to (\ref{transopt}). 
\end{lemma}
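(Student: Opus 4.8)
The goal is to show that for a fixed assignment plan $\mathbf{P}_z^*$, maximizing the coupled term $\sum_{i,j}\|\mathbf{M}\mathbf{z}^x_i - \mathbf{z}^y_j\|_2^2 (\mathbf{P}_z^*)_{i,j}$ over the Stiefel manifold ${\cal{V}}_d$ reduces to the inner-product maximization $\max_{\mathbf{M}\in{\cal{V}}_d}\langle \mathbf{M}, \mathbf{Z}_y{\mathbf{P}_z^*}^T\mathbf{Z}_x^T\rangle$, and then invoke the cited SVD lemma. The plan is to proceed in two short steps: an algebraic expansion that isolates the only $\mathbf{M}$-dependent contribution, followed by the orthogonal-Procrustes argument.

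First I would expand the squared norm: $\|\mathbf{M}\mathbf{z}^x_i - \mathbf{z}^y_j\|_2^2 = \|\mathbf{M}\mathbf{z}^x_i\|_2^2 - 2\langle \mathbf{M}\mathbf{z}^x_i, \mathbf{z}^y_j\rangle + \|\mathbf{z}^y_j\|_2^2$. Since $\mathbf{M}\in{\cal{V}}_d$ is orthogonal, $\|\mathbf{M}\mathbf{z}^x_i\|_2^2 = \|\mathbf{z}^x_i\|_2^2$, so the first and third terms are constants independent of $\mathbf{M}$. Summing against the nonnegative weights $(\mathbf{P}_z^*)_{i,j}$, the objective therefore equals (up to an $\mathbf{M}$-independent constant) $-2\sum_{i,j}(\mathbf{P}_z^*)_{i,j}\langle \mathbf{M}\mathbf{z}^x_i,\mathbf{z}^y_j\rangle$, so \emph{minimizing} the transport term over $\mathbf{M}$ is equivalent to \emph{maximizing} $\sum_{i,j}(\mathbf{P}_z^*)_{i,j}\langle \mathbf{M}\mathbf{z}^x_i,\mathbf{z}^y_j\rangle$. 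The second step is to rewrite this bilinear form as a single trace: $\sum_{i,j}(\mathbf{P}_z^*)_{i,j}(\mathbf{z}^y_j)^T\mathbf{M}\mathbf{z}^x_i = \mathrm{tr}\big(\mathbf{M}\sum_{i,j}(\mathbf{P}_z^*)_{i,j}\mathbf{z}^x_i(\mathbf{z}^y_j)^T\big) = \mathrm{tr}\big(\mathbf{M}\,\mathbf{Z}_x{\mathbf{P}_z^*}^T\mathbf{Z}_y^T\big) = \langle \mathbf{M}, \mathbf{Z}_y{\mathbf{P}_z^*}^T\mathbf{Z}_x^T\rangle$, which matches equation (\ref{transopt}) once one is careful about transpose conventions in the Frobenius inner product. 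Then I would apply Lemma 3.1 of \cite{alvarez2019towards}: writing $\mathbf{U}\mathbf{\Sigma}\mathbf{V}^T$ for the SVD of $\mathbf{Z}_y{\mathbf{P}_z^*}^T\mathbf{Z}_x^T$, the maximizer over orthogonal matrices is $\mathbf{M}^* = \mathbf{U}\mathbf{V}^T$, since $\langle \mathbf{M},\mathbf{U}\mathbf{\Sigma}\mathbf{V}^T\rangle = \mathrm{tr}(\mathbf{V}\mathbf{\Sigma}\mathbf{U}^T\mathbf{M}) = \mathrm{tr}(\mathbf{\Sigma}\,\mathbf{U}^T\mathbf{M}\mathbf{V})$ and $\mathbf{U}^T\mathbf{M}\mathbf{V}$ ranges over orthogonal matrices whose entries are bounded by $1$, so the trace is maximized when $\mathbf{U}^T\mathbf{M}\mathbf{V} = \mathbf{I}$.

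The only real subtlety — and the step I would watch most carefully — is bookkeeping: making sure the transpose/ordering conventions for $\mathbf{Z}_x$, $\mathbf{Z}_y$ (columns are data points) and for the Frobenius inner product $\langle A, B\rangle = \mathrm{tr}(A^T B)$ are consistent throughout, so that the matrix whose SVD we take is exactly $\mathbf{Z}_y{\mathbf{P}_z^*}^T\mathbf{Z}_x^T$ as claimed rather than its transpose. There is no genuine analytic obstacle here; the heavy lifting is done by the cited Procrustes lemma, and the remaining content is the two-line expansion showing the quadratic term contributes only a constant under orthogonality.
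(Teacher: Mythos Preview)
The paper does not actually prove this lemma: it is stated as a direct citation of Lemma~3.1 in \cite{alvarez2019towards}, with no accompanying argument. Your proposal is correct, and in fact does more than the lemma requires. Your first step (expanding $\|\mathbf{M}\mathbf{z}^x_i-\mathbf{z}^y_j\|_2^2$ and using orthogonality of $\mathbf{M}$ to drop constant terms) reproduces the ``after simplification'' reduction that the paper performs in the text \emph{preceding} the lemma, turning the middle term of (\ref{translot}) into the inner-product problem (\ref{transopt}); strictly speaking this is not part of the lemma itself, which already starts from (\ref{transopt}). Your second step---rewriting $\langle\mathbf{M},\mathbf{U}\mathbf{\Sigma}\mathbf{V}^T\rangle=\mathrm{tr}(\mathbf{\Sigma}\,\mathbf{U}^T\mathbf{M}\mathbf{V})$ and noting that $\mathbf{U}^T\mathbf{M}\mathbf{V}$ ranges over orthogonal matrices so the trace against the nonnegative diagonal $\mathbf{\Sigma}$ is maximized at the identity---is the standard orthogonal Procrustes argument and is exactly what the cited reference contains. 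So there is no discrepancy to report: your argument is sound, and the paper simply defers to the citation rather than giving any proof of its own.
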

Finally, we note that the class of transformations depends on the application, and the
applicability of \alg could go beyond the Stiefel manifold considered here. 

\begin{algorithm}[t]
\begin{algorithmic}[1]
\caption{Unbalanced Latent Optimal Transport }\label{alg:unlot}
\INPUT{$\mathbf{K}_x,\mathbf{K}_y,\mathbf{K}_z$,$\tau_1,\tau_2,\epsilon$.} 
\vspace{0.5mm}
\OUTPUT{$\mathbf{P}_x, \mathbf{P}_y, \mathbf{P}_z, \mathbf{u}_z, \mathbf{v}_z,{\z}_1,{\z}_2$}
\vspace{0.5mm}
\ENSURE{ $\alpha_x\leftarrow \mathbbm{1}_N,\beta_x\leftarrow \mathbbm{1}_{k_1},\alpha_y\leftarrow \mathbbm{1}_{k_2},\beta_y\leftarrow \mathbbm{1}_M,\alpha_z\leftarrow \mathbbm{1}_{k_1},\beta_z\leftarrow \mathbbm{1}_{k_2}$, ${\z}_1\leftarrow \mu, {\z}_2\leftarrow \nu$  }
\vspace{0.5mm}
\WHILE{not converging}
\STATE{$({\z}_1,\alpha_x)\leftarrow \left((\alpha_x\odot \mathbf{K}_x\beta_x)^{\frac{\epsilon}{\tau_1+\epsilon}}\odot {\z}_1^{\frac{\tau_1}{\tau_1+\epsilon}},~~~\alpha_x^{\frac{\epsilon}{\tau_1+\epsilon}}\odot ({\z}_1\oslash \mathbf{K}_x\beta_x)^{\frac{\tau_1}{\tau_1+\epsilon}}\right)$}
\vspace{0.5mm}
\STATE{$({\z}_2,\beta_y)\leftarrow \left((\beta_y\odot \mathbf{K}_y^T\alpha_y)^{\frac{\epsilon}{\tau_2+\epsilon}}\odot {\z}_2^{\frac{\tau_2}{\tau_2+\epsilon}},~~~\beta_y^{\frac{\epsilon}{\tau_2+\epsilon}}\odot ({\z}_2\oslash \mathbf{K}_y^T\alpha_x)^{\frac{\tau_2}{\tau_2+\epsilon}}\right)$}
\vspace{0.5mm}
\STATE{$\mathbf{u}_{z} \leftarrow ((\alpha_z\odot \mathbf{K}_z\beta_z)\odot(\beta_x\odot \mathbf{K}_x^T\alpha_x))^{\frac{1}{2}}$}
\vspace{0.5mm}
\STATE{$\beta_x\leftarrow \mathbf{u}_z\oslash \mathbf{K}_x^T\alpha_x$; $\alpha_z\leftarrow \mathbf{u}_z\oslash \mathbf{K}_z\beta_z$} 
\vspace{0.5mm}
\STATE{$\mathbf{v}_z \leftarrow ((\alpha_y\odot \mathbf{K}_y\beta_y)\odot(\beta_z\odot \mathbf{K}_z^T\alpha_z))^{\frac{1}{2}}$}
\vspace{1mm}
\STATE{$\beta_z\leftarrow w_{zy}\oslash \mathbf{K}_z^T\alpha_z$; $\alpha_y\leftarrow \mathbf{v}_z\oslash K_y\beta_y$} 
\ENDWHILE
\vspace{0.5mm}
\REQUIRE{$\mathbf{P}_x={\bf D}(\alpha_x)\mathbf{K}_x{\bf D}(\beta_x),\mathbf{P}_y={\bf D}(\alpha_y)\mathbf{K}_y{\bf D}(\beta_y),\mathbf{P}_z={\bf D}(\alpha_z)\mathbf{K}_z{\bf D}(\beta_z),\mathbf{u}_z,\mathbf{v}_z,{\z}_1,{\z}_2$} 
\end{algorithmic}
\end{algorithm}

\paragraph{2) Unbalanced variant of \alg\hspace{-1mm}:}
There are often cases in practice where the distribution of data is not normalized so the total mass of the source and target are not equal. In this case, OT preserving marginal distributions does not admit a solution. To deal with this problem, the unbalanced optimal transport (UOT) is among one of the most prominent proposed method \cite{liero2018optimal}. Instead of putting hard constraints on the marginal distributions, UOT considers an optimization problem regularized by the deviation to the marginal distributions measured by the KL-divergence. Just as in OT, \alg can be naturally extended in this case by modifying the objective in Eqn.~(\ref{cost1for}) to,
\begin{align}\label{opt:unlot}
    &\min_{\mathbf{P}_x,\mathbf{P}_y,\mathbf{P}_z,\mathbf{z}_1,\mathbf{z}_2} \epsilon \text{KL}(\mathbf{P}_x|\mathbf{K}_x) + \tau_1 \text{KL}(\mathbf{z}_1|\mu) + \epsilon \text{KL}(\mathbf{P}_z|\mathbf{K}_z) + \epsilon \text{KL}(\mathbf{P}_y|\mathbf{K}_y) + \tau_2\text{KL}(\mathbf{z}_2|\nu) \\
    &\text{subject to: } \mathbf{P}_x\mathbbm{1} =\mathbf{z}_1,\mathbf{P}^T_x\mathbbm{1}=\mathbf{u}_z,\mathbf{P}_z\mathbbm{1} = \mathbf{u}_z,\mathbf{P}_z^T\mathbbm{1} =\mathbf{v}_z,\mathbf{P}_y\mathbbm{1}=\mathbf{v}_z,\mathbf{P}^T_y\mathbbm{1}=\mathbf{z}_2. \nonumber
\end{align}
The algorithm could be derived similarly as in Appendix \ref{der1}, and we provide the pseudocode in Algorithm \ref{alg:unlot}.

\paragraph{3) Barycenter of hubs:}
Finding the barycenter is a classical problem of summarizing several data points with a single  ``mean''. In optimal transport, this amounts to the Wasserstein barycenter problem \cite{cuturi2014fast}. \alg casts an unique variant of the problem in the following scenario. Consider a shipping company ships items worldwide. The company has several hubs in its country to gather the items and then ship them to other countries. On the other hand, each country also has its own receiving hubs to receive the items. The company then must decide the locations of its hubs in its country to minimize the transport effort. The above scenario poses an unique barycenter problem in terms of optimizing locations of the hubs. By using an analogy of hubs to anchors, we can transform the problem into the framework of \alg, where one seeks the optimal sources' anchors $Z_x$ by solving the optimization,
\begin{align}\label{barycenter}
    \min_{{Z_x}}\min_{Z_{{y_i},i\in[K]}}\sum_{k=1}^K{\rm OT^L}(\mu,\nu_k).
\end{align}
Note the inner optimization is related to the hubs of the receiving countries, which depending on the applications, could be either fixed or free to optimize. 
To solve this problem, one can simply extend Algorithm \ref{alg:1} to
make it distributed. This is done by applying Algorithm \ref{alg:1} to each term in the summation.

\section{Additional experiments}

% \paragraph{Concept drift example}

In this section, we describe the results for additional experiments that carried out to test \alg.
\subsection{Domain adaptation task}
\label{sec:DA_appendix}
\paragraph{Neural network implementation details:}
The classifier is a multi-layer perceptron with two 256-units hidden layers and ReLU activation functions. We use dropout to regularize the network. $28\times28$ Images are first normalized using $\text{mean}=0.1307$ and $\text{std}=0.3081$, then flattened to 784 dimensions before being fed to the network, which then outputs a 10-d logits vector. The classifier is trained using a cross-entropy loss. To get the predicted class, we used the argmax operator. 
The network only learns from the training samples of MNIST, and then its weights are frozen.
USPS images are smaller with $16\times16$ pixels. To feed them to our classifier, we apply zero-padding to get the desired input size.

\paragraph{Domain shift in neural networks experiments:}
For our experiment, we use 1000 randomly sampled images from each of these sets: the training and testing sets from MNIST and the testing set from USPS. The sampling isn't done in any particular way, so there is a different number of samples for each class in each set. For all experiments using the classifier, we use the argmax operator on the transported features to get the predictions after aligning source and target. We then compare to the ground truth labels to get the accuracy. All results can be found in Table~\ref{table:table1_extended}.

For the first experiment (Table~\ref{table:table1_extended}-DA), we align the testing set of USPS with the training set of MNIST (Figure~\ref{fig:samples}a). For the second and third experiments (Table~\ref{table:table1_extended}-D, DU), we study the influence of perturbations on the output space of the classifier. So we align a perturbed version of the testing set of MNIST with the training set of MNIST. For both experiments, we use coarse dropout, which sets rectangular areas within images to zero. These areas have large sizes. We generate a coarse dropout mask once and apply it to all images in our testing set (Figure~\ref{fig:samples}b). In addition to coarse dropout, we also eliminate some classes (2, 4, and 8 digit classes) to get the unbalanced case for the third experiment (Table~\ref{table:table1_extended}-DU).

\begin{figure}[t!]
 %\begin{subfigure}[t]{\columnwidth}
    \centering
    \subfloat[][\footnotesize{MNIST-USPS Samples}]{\includegraphics[height=0.2\textwidth]{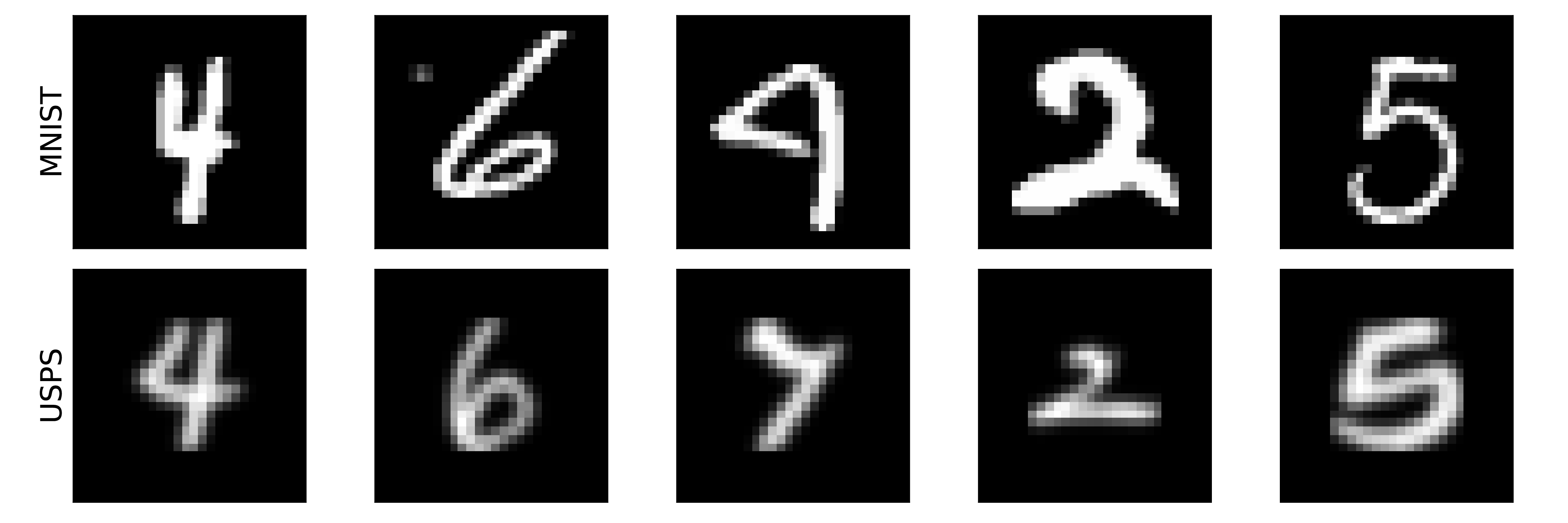}}
    \subfloat[][\footnotesize{Dropout Mask}]{\includegraphics[height=0.2\textwidth]{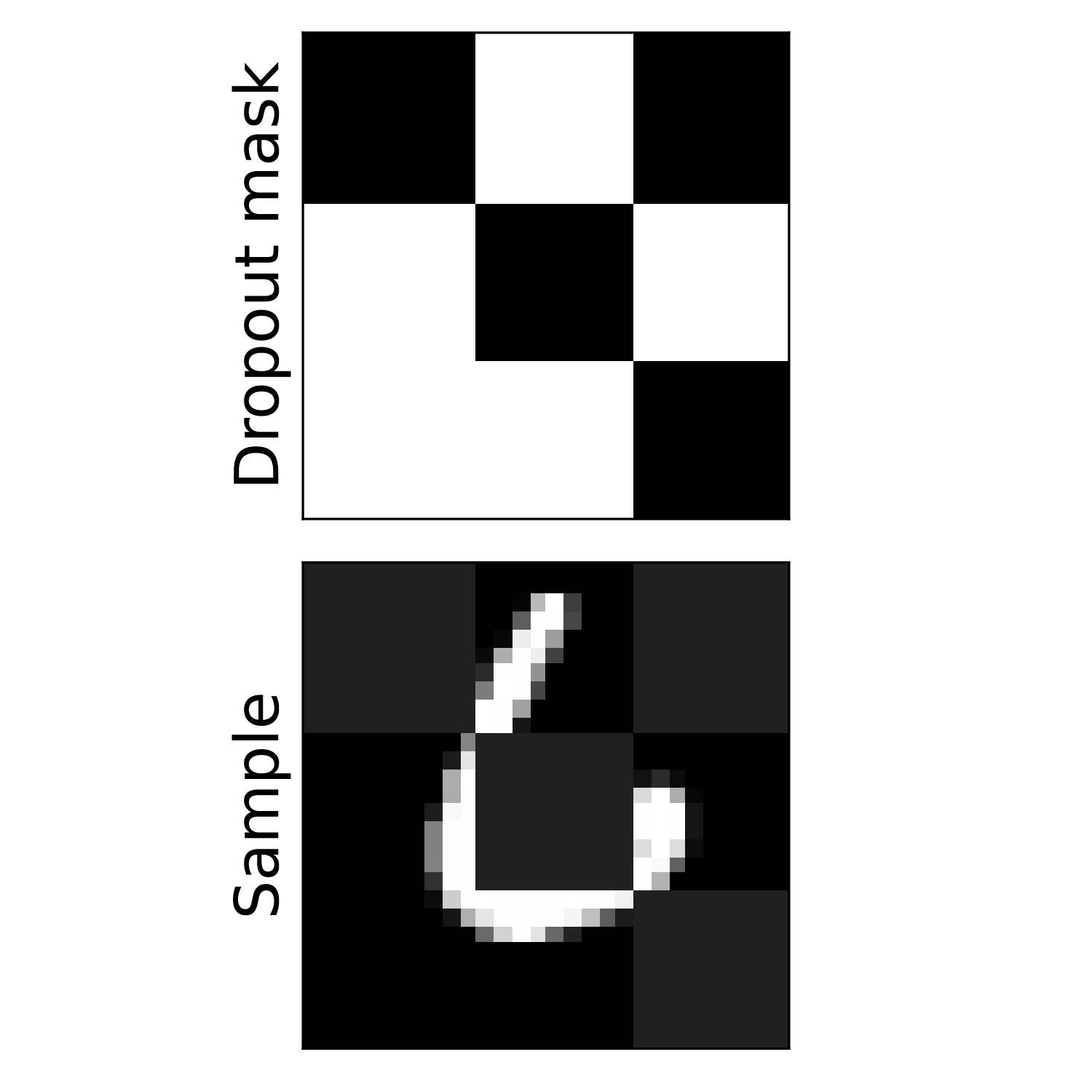}}
   % \end{subfigure}
   \caption{{\em Samples used in deep neural network experiments.} (a) MNIST samples vs. USPS samples (after zero-padding), used in the Domain Adaptation experiment. (b) Coarse dropout mask applied over all samples in MNIST's testing set and a sample of the perturbed images, used in the domain shift experiment.}
   \label{fig:samples}
   \end{figure}

\begin{table}[ht]
\begin{center}
\begin{tabular}{l|c|cc|cc}
         & MNIST-USPS (DA) & \multicolumn{2}{c|}{MNIST (D)} & \multicolumn{2}{c}{MNIST (DU)}         \\
         & Accuracy        & Accuracy       & L2 error      & Accuracy      & L2 error               \\ \hline
Original & 79.3            & 65.7           & 0.74          & 72.6          & 0.72                   \\
\ot       & 76.9            & 73.4           & 0.59          & 61.5          & 0.71                   \\
\url{kOT}      & 79.4            & 74.0           & 0.53          & 60.9          & 0.73                   \\
SA       & 81.3            & 64.9           & -             & 72.3          & -                      \\
%TCA      & 71.4            & 41.7           & -             & 42.3          & -                      \\
\fc       & 84.1            & 77.6           & \textbf{0.51} & 67.2          & 0.59                   \\
\lot      & \textbf{86.2}   & \textbf{78.2}  & 0.53 & \textbf{77.7} & \textbf{0.56}
\end{tabular}
\caption{{\footnotesize {\bf \em{Results for concept drift and domain adaptation for handwritten digits.}}  The classification accuracy and L2-error (transported samples vs. ground truth test samples) are computed for the synthetic drift experiment: Coarse dropout (left) and for the domain adaptation experiment: MNIST to USPS (right). Our method is compared with the accuracy before alignment (Original), entropy-regularized OT, k-means plus OT (\url{kOT}), and subspace alignment (SA).
\label{table:table1_extended}}}
\end{center}
\end{table}

\paragraph{Visualization of transported distributions:}
In Figure~\ref{fig:mnist}a, we project the distribution of the neural network's output features (for each set) in 2D using Isomap fit to the MNIST training set's output features. We use 50 neighbor points in the Isomap algorithm to get a reasonable estimate of the geodesic distance, as we have $10$ well-separated clusters in the output space of the deep neural network in the case of MNIST's training set. To ensure consistency, we use the projection learned on the MNIST training set with all other sets.

\paragraph{Visualization of transport plans:}
We provide further visualization of the transport plans obtained by \alg and \fc, for the Domain Adaptation experiment and the unbalanced alignment experiment (Figure~\ref{fig:allsankey}). 

\begin{figure*}[t!]

    \begin{subfigure}[t]{\textwidth}
    \centering
    \subfloat{\includegraphics[height=1.1\textwidth]
   {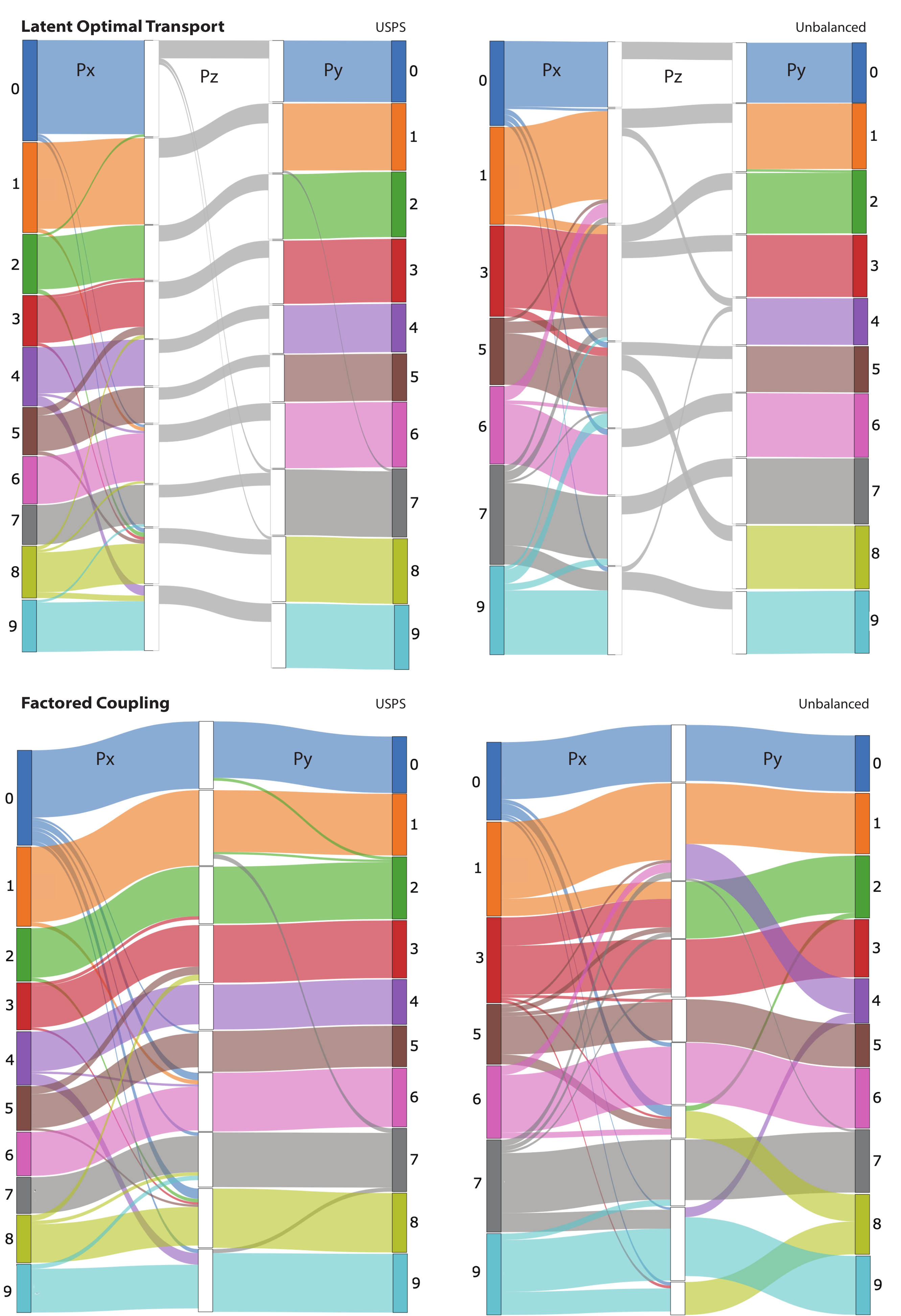}}
   \end{subfigure}
   
    \caption{\footnotesize {\bf{\em Visualization of transport plans obtained with \alg and factored coupling.}} We show the transport plans for \alg (top) and \fc (bottom) for the USPS (left) and unbalanced dropout example (right). \label{fig:allsankey}}
\end{figure*}

\paragraph{Visualization of transported points:} To get a better understanding of how the embedding of each sample is changing after transportation, we provide visualizations of where samples are being transported to in the target space. To describe the target space locally, we find the five nearest neighbors of a sample according to the L2 distance between the transported features of the source sample and the features of the target samples. In Figure~\ref{fig:neigh}, we show some of the cases where the three methods, \ot, \fc and \alg, disagree.

In the top two rows (Figure~\ref{fig:neigh}a-b), we see cases where \alg outperforms \ot and \fc. When all neighbors have the same labels, we can safely assume that we are not in a boundary between classes but deep within a class cluster, so this reflects on the confidence these methods have in their transportation. Both \ot and \fc confidently map the sample to the wrong region of the space (Figure~\ref{fig:neigh}a). 

In the bottom two rows (Figure~\ref{fig:neigh}c-d), we see more difficult cases. In (Figure~\ref{fig:neigh}c), we see that even though the closest neighbor (4 in light green) doesn't correspond to the ground truth label (7), \lot seems to be mapping the point to the boundary between classes 7 and 4, and it does in fact classify it correctly. Lastly, in (d), we see that none of the methods is able to recover information lost due to dropout.

\begin{figure}[t!]
 \begin{subfigure}[t]{\columnwidth}
        \centering
    \subfloat[][]{\includegraphics[width=1.0\textwidth]{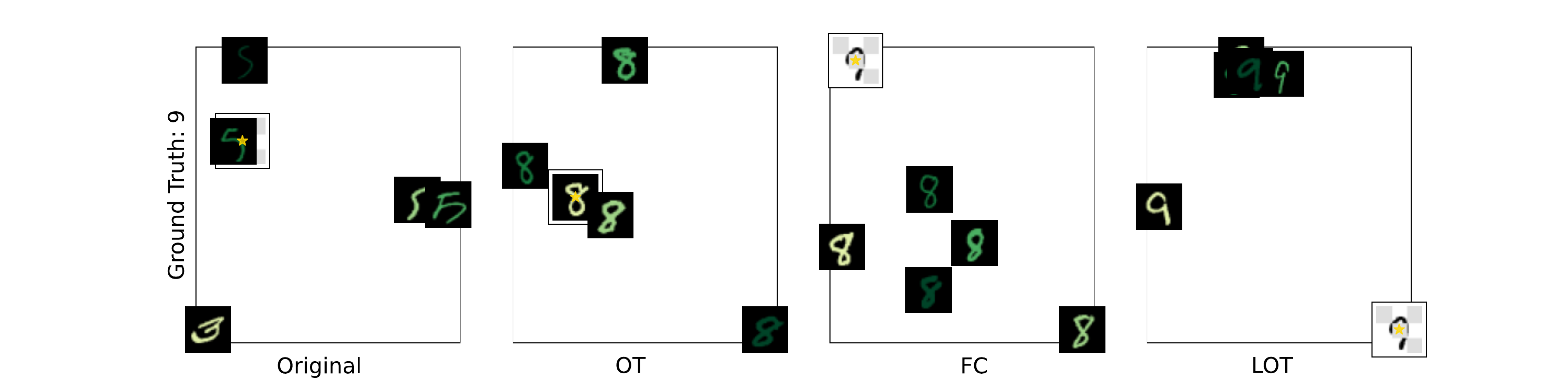}}
    \end{subfigure}
    
     \begin{subfigure}[t]{\columnwidth}
        \centering
   \subfloat[][]{\includegraphics[width=1.0\textwidth]{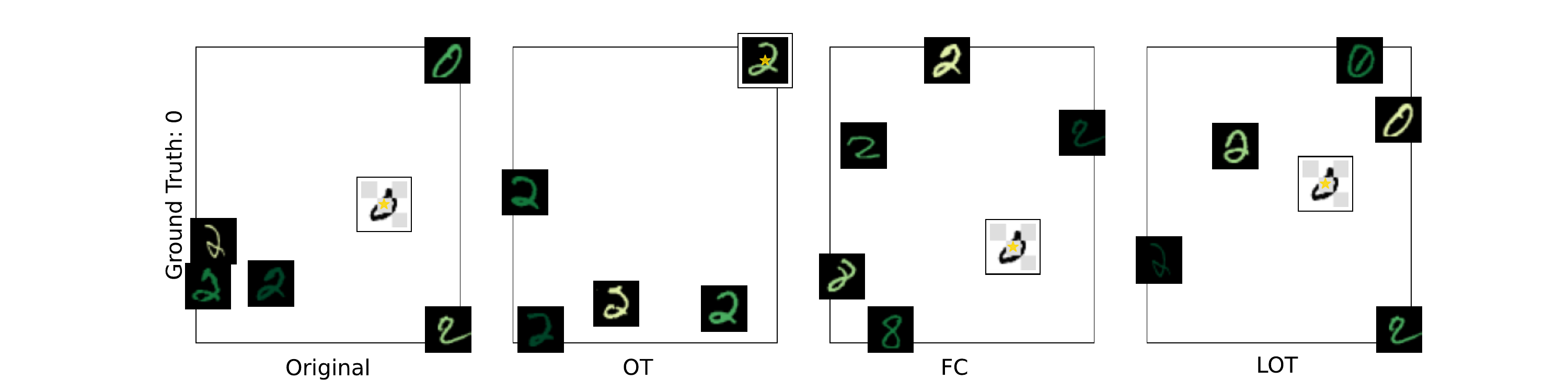}}
   \end{subfigure}
   
    \begin{subfigure}[t]{\columnwidth}
        \centering
   \subfloat[][]{\includegraphics[width=1.0\textwidth]{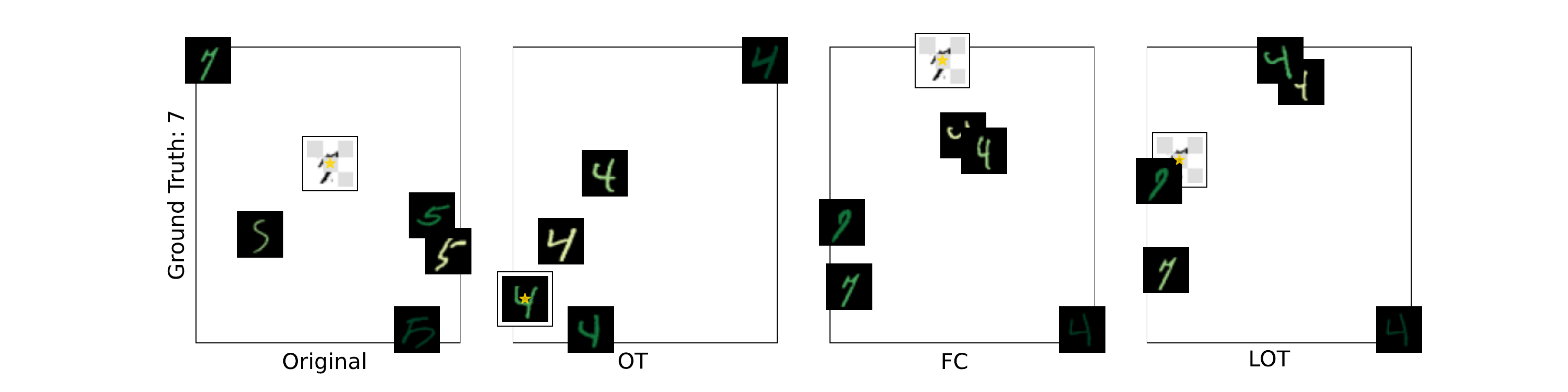}}
   \end{subfigure}
   
    \begin{subfigure}[t]{\columnwidth}
        \centering
   \subfloat[][]{\includegraphics[width=1.0\textwidth]{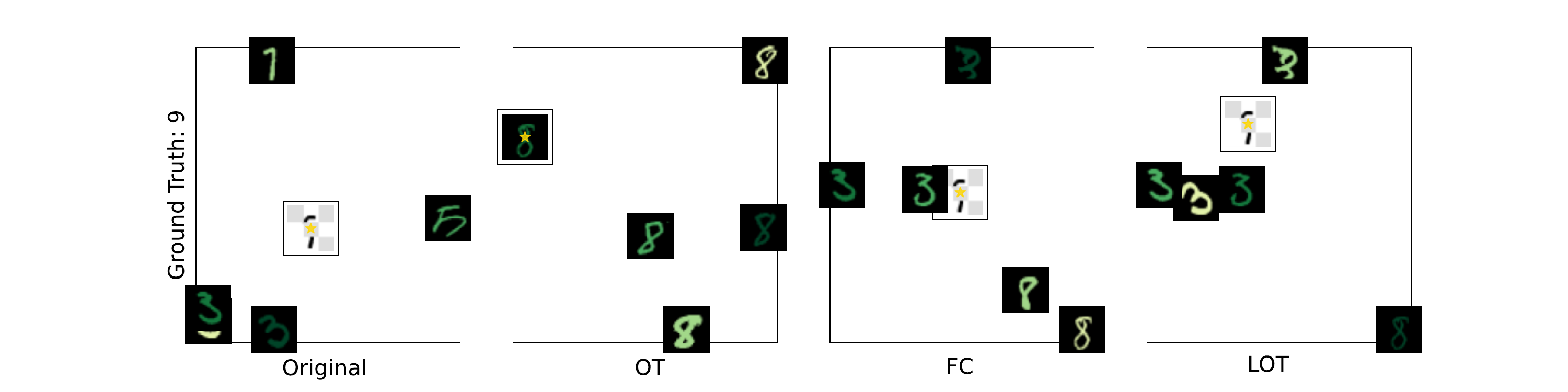}}
   \end{subfigure}
   \caption{{\bf \em Neighborhood change after transportation (DU)} For the unbalanced transport experiment, we show for multiple perturbed samples (white, marked with a $\star$) the change in the closest neighbors landscape before (Original) and after the transportation, for different methods (\ot, \fc and \alg). We find the five nearest neighbors (black) according to the L2 distance between the transported output features of the sample and the features from the target training samples, and arrange the samples in 2D space using the Isomap projection. We apply a gradient filter over the neighbors' images such that the closest neighbor is light green.}
   \label{fig:neigh}
\end{figure}

\begin{figure}[t!]
       \centering
   \subfloat{\includegraphics[width=1\textwidth]{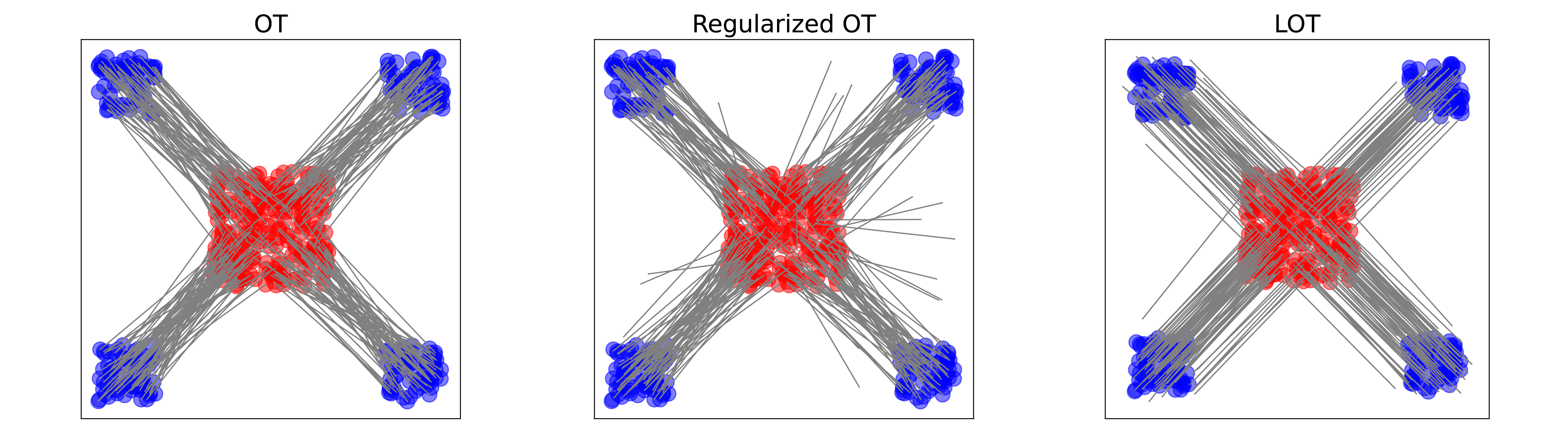}}

\caption{\label{fig:frag}\footnotesize 
{ \bf \em{Fragmented Hypercube.}} We visualize the estimated transports $\hat{x}=\int yp(y|x)dx$~ for {OT}, regularized {OT}, and \alg. 
%We project the original data in $d=30$ dimensions to the first 2 dimensions where the signals lie in. The \alg provides a more regularized transport showing its robustness against noise.
}

\end{figure}

\begin{figure}[t!]
       \centering
   \subfloat{\includegraphics[width=1\textwidth]{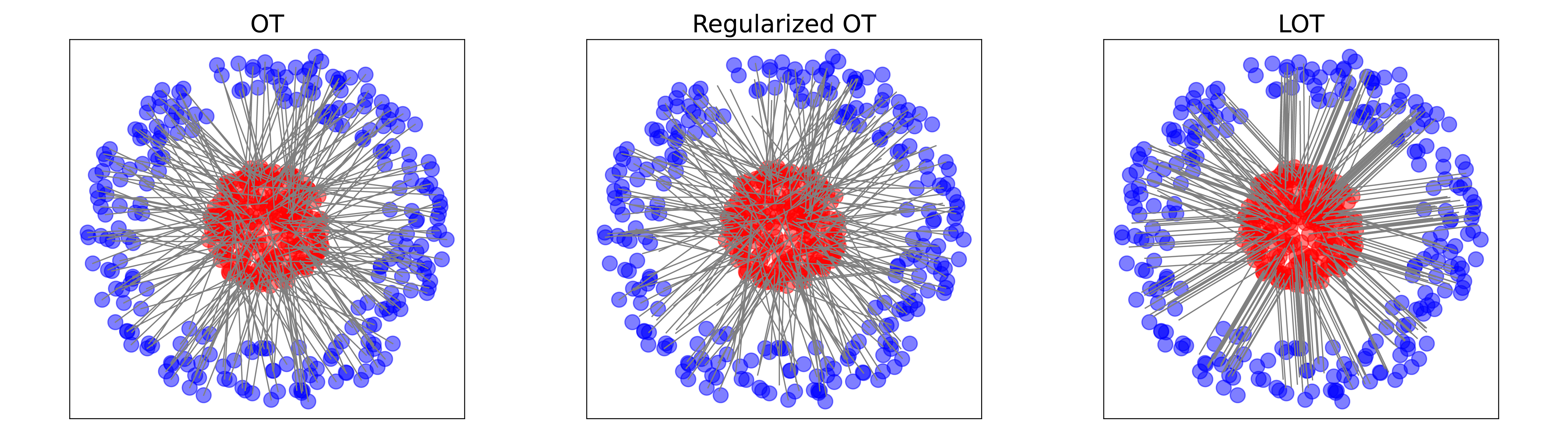}}
\caption{\footnotesize 
{\label{fig:annulus} \bf \em{Disk to Annulus.}} We visualize the estimated transports $\hat{x}=\int yp(y|x)dx$~ for {OT}, regularized OT, and \alg\hspace{-1mm}. 
}

\label{fig:map_unequal_illus}
\end{figure}

\subsection{Additional synthetic experiments}
We study the behaviors of \alg for two synthetic experiments performed in \cite{forrow2019statistical,paty2019subspace}.

\paragraph{Fragmented Hypercube:} In the first experiment, the source distribution is $\mu = {\cal{U}}([-1,1]^d)$ and the target distribution is $\nu = T_{\#}\mu$ (i.e., $\nu\overset{d}{=} T(x), x\overset{d}{=}\mu$), where $T(x)= x + 2\text{sign}(x)\odot(e_1+e_2)$, and $e_i$ is the canonical basis of $\mathbf{R}^d$. In this example, the signal lies only in the first $2$ dimensions while the remaining $d-2$ dimensions are noises. The data shows explicit clustering structure.
We investigate the estimated transports $\hat{x}=\int yp(y|x)dx$ produced by OT, entropy- regularized OT, and \alg with $k_x=k_y=4$ to see if \alg can capture the data structure and whether it provides robust transport.
We choose $d=30$ and draw $N=250$ points from each distribution. The result is shown in Figure \ref{fig:frag}. We see that all methods capture the cluster structure, but both \ot and regularized are sensitive to noise in $d-2$ dimensions, while \alg
provides a data transport that is more robust against noise in high dimensional space.

\paragraph{Disk to annulus:} In the second experiment, we consider data without separate clustering structure.  
In this example, the source and target distributions are
\begin{align}
    \mu&={\cal{U}}\{\x\in \mathbb{R}^d: 0\leq \|(\x_1,\x_2)\|_2\leq 1,\x_i\in\{0,1\}, i =3,\dots,d\}\nonumber\\
    \nu&={\cal{U}}\{\x\in \mathbb{R}^d: 2\leq \|(\x_1,\x_2)\|_2\leq 3,\x_i\in\{0,1\}, i =3,\dots,d\}. \nonumber
\end{align}
Again only the first $2$ dimensions contain the signal while the rest $d-2$ are noise. We draw $250$ points from each distribution and choose $d =30$. We compare the estimated transports $\hat{x}=\int yp(y|x)dx$ for OT, entropy-regularized OT, and \alg with $k_x=k_y=15$. In this example, we increase the number of anchors as the annulus can be regarded as having infinite clusters. We visualize the result in Figure \ref{fig:annulus}. We observe that \alg is more regular than the other two, but the support of the target is not fully covered by the transportation. This is because the rank constraint imposed on \alg limits its degree of freedom to transport data.

\begin{figure*}
     \centering
     %\subfloat{\includegraphics[width=.28\textwidth]{}}
     %\subfloat{\includegraphics[width=.28\textwidth]{}}
     %\subfloat{\includegraphics[width=.28\textwidth]{}}
    %\setcounter{subfigure}{0}
     \subfloat[][Optimal Transport]{\includegraphics[width=.2\textwidth]{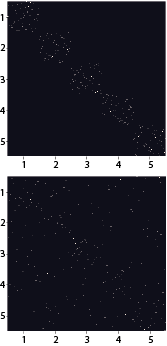}}
     \hspace{10mm}
     \subfloat[][\lot ]{\includegraphics[width=.2\textwidth]{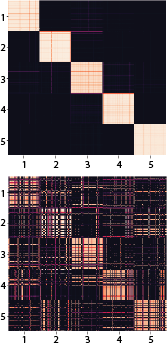}}
     \hspace{10mm}
     \subfloat[][Cross-Correlation]{\includegraphics[width=.2\textwidth]{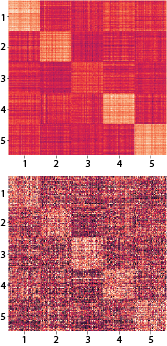}}
     \caption{\footnotesize{\em \bf Examining the connection between cluster coherence and transport.} In (a-b), we visualize the transport plan obtained by OT and \alg when the different mixture components in the model are weakly correlated (top) and strongly correlated (bottom). In (c), we examine the cross-correlation between points in the source and target for the two conditions.}
\label{fig:corre}
\vspace{-0mm}
\end{figure*}

\subsection{Examining the impact of cluster correlation on performance}
Next, we study the ability of \alg of capturing clustering structure as the correlation between clusters increases. It is observed in \cite{lee2019hierarchical} that equally spaced clusters (i.e., uniformly distributed subspace angles) are harder to align. In the spirit of measuring cluster similarity as in \cite{soltanolkotabi2012geometric}, we define the cross-correlation between two clusters ${{\cal{Y}}_i}$ and ${{\cal{Y}}_j}$ as $\mu_c({{\cal{Y}}_i},{{\cal{Y}}_j}):= \sum_{x\in{{\cal{Y}}_i},y\in {{\cal{Y}}_i}}\langle x,y\rangle/|{{\cal{Y}}_i}||{{\cal{Y}}_j}|$. In Fig.~\ref{fig:corre}, we plot the transport plans for GMM with $5$ clusters in two different correlation regimes: (i) the top row shows the regime where the cross-correlation is strong within the same class; (ii) the bottom row shows the uniform correlation case where clusters are approximately equally spaced and the correlations are much more uniform across classes. We observe in both results that \alg has transport plans similar to the cross-correlation in the pattern, while for uniform case, the transport plan of \ot seems less structured. This example suggests that \alg can be a useful tool for data visualization.

\section{Details of GMM Experiment (E2)}\label{sec:detailexp2}
   In this experiment, we use a Gaussian mixture model (GMM) with $M$ components in $\mathbb{R}^d$ to generate the data for the source and target. For each component, the mean is randomly generated from a random standard normal distribution, and the covariance matrix is generated from a Wishart distribution. To model low-dimensional latent structure, we then apply random $k$-dimensional projection and add a $d$-dimensional standard normal noise to each component. The source and target draw $100$ points from each component independently.
   
   Below we provide the details to each plot of (a) to (e). The default $k$ is set to 5. 
   \paragraph{Parameters of the data generation:}
   \begin{itemize}
       \item (a) We set $M=4, d=30$. We choose a random unit vector and rotate the mean of the component by $\theta$-degree using the vector as the axis.
       \item (b) $M=4, d= 30$. For any ratio $r$ of outliers, we randomly pick up $rn$ number of points and modify its distribution by a linear combination of Gaussian noise and the original point. The variance of Gaussian noise is equal to half the squared mean of the component.
       \item (c) $M=4$. We increase the dimension $d$. As the signal lies in a $k$-dimensional space, the rest of $d-k$-dimensions are noisy.
       \item (d) $d=30$. Here we generate a GMM with $10$ components. While the target has $10$ classes with $100$ points per class initially, we vary the number of the source class from $2$ to $10$. The $x$-axis denotes the ratio between the number of components to the source and to the target.
       \item (e) $d=30,M=4$. We set the number of the source and target anchors to be equal and increase the number from $2$ to $100$. Note that this number also upper bounds the rank of the associated transport plan.
      \end{itemize}
    \paragraph{Algorithmic implementation:}
    \begin{itemize}
        \item We use the POT: Python Optimal Transport package \cite{flamary2017pot} \href{https://pythonot.github.io/}{https://pythonot.github.io/} as the implementation for a vanilla optimal transport.
        \item In all of our GMM experiments, the entropy regularization parameter $\varepsilon$ is set to $10$.
        \item For the transport estimation, we use the expected 
        transportation defined by a transport plan, which is:
        \begin{align}
            &\hat{\mathbf{X}} =  \text{diag}(\mu)\mathbf{P} \mathbf{Z}_2, \text{ for \ot.}\label{otest}\\
            &\hat{\mathbf{X}} = \text{diag}(\mu)\mathbf{P}_x (\mathbf{Q}_y-\mathbf{Q}_x), \text{ for \fc \cite{forrow2019statistical}}\label{fcest}\\
            & \hat{\mathbf{X}}= \text{diag}(\mu^{-1})\mathbf{P}_x\text{diag}((\mathbf{P}_z\mathbf{1})^{-1})\mathbf{P}_z(\mathbf{Q}_y-\mathbf{Q}_x),\text{ for \alg (the estimator in Section \ref{sec:algorithm})},\label{lotest}
        \end{align}
        where $\mathbf{Q}_x = \text{diag}(\mu^{-1})\mathbf{P}_x^T\mathbf{X}^T$, $\mathbf{Q}_y = \text{diag}(\nu^{-1})\mathbf{P}_y\mathbf{Y}$ denote the centroids for \fc and \alg.
        \item We adopt the 1-NN classification rule where the class of a point from the source is predicted by the class of the nearest neighbor (among the target points) of its estimated transportation.
    \end{itemize}

\section{Choice of hyperparameters}\label{sec:hyptun}
\paragraph{Code availability:} We provide an implementation of \alg and a demo for a dropout experiment on MNIST in the supplementary file. We use the POT: Python Optimal Transport package \cite{flamary2017pot} \href{https://pythonot.github.io/}{https://pythonot.github.io/} as the implementation for a vanilla optimal transport.

\paragraph{Hyperparameter tuning:}
\alg has two main hyperparameters that must be specified: (i) the number of anchors and (ii) the entropy-regularization parameter epsilon. For domain adaptation experiments (E2-3), the number of anchors was set to $10$, using the prior we had about there being 10 classes of digits. The regularization parameter epsilon was set to $50$; this parameter depends on the scale of the data. We note that in the case of (E2), the standard deviation of the logit outputs of the network was around $10^3$. More generally, we proceed as follows when choosing the hyperparameters:

\begin{itemize}
    \item {\bf Number of anchors.}~ To optimize the number of anchors, one can use domain knowledge or model selection procedures common in clustering. For instance, in the case of MNIST-USPS, we know to expect $10$ clusters, each corresponding to a digit, and set the number of anchors to be at least this amount. In cases where we do not have a priori knowledge about the number of clusters in the data, we can use standard approaches for model selection in clustering (e.g., silhouette score \cite{ROUSSEEUW198753}, Calinski-Harabaz index \cite{calinski1974}). In our experiments, we find that the number of anchors can be increased progressively before observing a phase transition (or change point) where the accuracy increases significantly (see Figure~\ref{fig:gmm}.e). This change point typically coincides with the number of clusters in the data and can be used to select the number of anchors. Furthermore, while some interpretability may be sacrificed when we overestimate the number of anchors, we find that this does not hinder the performance of the method in terms of the quality of overall transport (see Figure~\ref{fig:gmm}.e). This is reminiscent of the elbow phenomena found with clustering methods, so we can imagine utilizing established clustering metrics for tuning these two parameters. This is left for future work, as the simple elbow method was found to be sufficient. 
    \item {\bf Entropy- regularization parameter.}~ The choice of the regularization parameter mainly depends on the scale of the data. In our experiments, we search for the lowest epsilon for which \alg converges using a simple Bisection method. This same approach is used for selecting epsilon for \ot in our experiments. We find that the regularization value is inherent to the data and not the task (Exp 2).
\end{itemize} 

% \section{Practical considerations}
% \paragraph{Initialization of anchors and transport plans:}~
% As the problem is highly non-convex, we use centroids of k-means \cite{arthur2006k} clustering to initialize $\mathbf{Z}_x$ and ${\mathbf Z}_y$.

\end{document}